\documentclass{article}

\usepackage{amsmath}
\usepackage{amssymb}
\usepackage{mathtools}
\usepackage{amsthm}
\usepackage[font=small,labelfont=bf]{caption}
\usepackage{enumitem}
\usepackage{float}
\usepackage{wrapfig} 
\usepackage{comment}

\newcommand \term[1]{\mathtt{term}~(\mathtt{#1})}

\newcommand{\Reg}{\mathrm{Reg}}

\newcommand{\tr}{\mathsf{ T}}

\DeclareMathOperator*{\argmin}{\arg\!\min}

\usepackage{microtype}
\usepackage{graphicx}
\usepackage{subfigure}
\usepackage{booktabs} 

\usepackage{hyperref}

\usepackage[capitalize,noabbrev]{cleveref}

\usepackage{algorithm}
\usepackage{algpseudocode}

\theoremstyle{plain}
\crefname{equation}{}{}
\newtheorem{theorem}{Theorem}
\newtheorem{proposition}{Proposition}
\newtheorem{lemma}{Lemma}
\newtheorem{corollary}{Corollary}
\theoremstyle{definition}
\newtheorem{definition}{Definition}
\newtheorem{assumption}{Assumption}
\theoremstyle{remark}
\newtheorem{remark}{Remark}
\theoremstyle{example}

\theoremstyle{fact}

\usepackage{caption}

\usepackage[preprint]{neurips_2026}

\usepackage[utf8]{inputenc} 
\usepackage[T1]{fontenc}    
\usepackage{hyperref}       
\usepackage{url}            
\usepackage{booktabs}       
\usepackage{amsfonts}       
\usepackage{nicefrac}       
\usepackage{microtype}      
\usepackage{xcolor}         

\title{Online Nonstochastic Prediction: Logarithmic Regret via Predictive Online Least Squares}

\author{Chih-Fan Pai$^{1}$ \quad Yang Zheng$^{1}$\\
  \\
  $^{1}$Department of Electrical and Computer Engineering, UC San Diego
}

\begin{document}

\maketitle

\vspace{-5mm}
\begin{abstract} 
\vspace{-2mm}
\looseness=-1
    We study online prediction for marginally stable, partially observed linear dynamical systems under nonstochastic disturbances.
    Our objective is to minimize~the~cumulative squared prediction loss and compete with the best-in-hindsight Luenberger predictor.
    Standard online learning methods typically rely on bounded domains/gradients, and thus their guarantees may fail to deal with potentially~unbounded trajectories in marginally stable systems.
    In this paper, we introduce an unconstrained online least squares method that stabilizes the learning process via tailored predictive hints. 
    With model knowledge, we prove that hints constructed from any stabilizing Luenberger predictor render the hint residuals uniformly bounded, achieving logarithmic regret despite unbounded trajectory growth.
    We also discuss model-free prediction and introduce a simple universal hint for symmetric systems, under which logarithmic regret is maintained without model knowledge.
    Our results provide an adaptive, instance-wise optimal online predictor compared to classical fixed-gain observers under nonstochastic~disturbances.
\end{abstract}

\vspace{-3mm}
\section{Introduction}
\label{sec:intro}
\vspace{-1mm}

\looseness=-1
This paper studies online prediction in \textit{partially observed} linear dynamical systems (PO-LDS) 
\begin{equation}
    \label{eq:po-lds}
     x_{t+1}=Ax_t+w_t,\quad y_t=Cx_t+v_t,
\end{equation}
with latent state $x_t \in \mathbb{R}^{n}$ and \textit{nonstochastic} disturbances $w_t \in \mathbb{R}^{n}, v_t \in \mathbb{R}^{p}$.
We allow the transition matrix $A \in \mathbb{R}^{n \times n}$ to be \textit{marginally stable}, i.e., $\rho(A) \le 1$, so trajectories $x_t, y_t$ can grow polynomially over time.
The learner observes only $y_t$ sequentially and must predict each $y_t$ before it is revealed.

\looseness=-1
Prediction in LDS is foundational to control, forecasting, and signal processing.
The classical Kalman filter is optimal under Gaussian noise with known statistics~\cite{kalman1960new}; the $\mathcal{H}_\infty$ filter provides a worst-case alternative~\cite{simon2006optimal}. 
Both may be viewed as instances of the \textit{Luenberger predictor}, which is a closed-loop recursive observer parameterized by a suitable Luenberger gain \cite{Luenberger1964Observing}.
In modern applications, however, disturbances are often structured, nonstochastic, and mismatched to either the Kalman or $\mathcal{H}_\infty$ filter's assumptions. 
Accordingly, any single pre-committed gain can be far from optimal on the realized disturbance sequence.
We therefore adopt the \textit{regret minimization} approach and compete against the \textit{best-in-hindsight} stabilizing Luenberger predictor.
This benchmark provides an \textit{instance-wise} optimal predictor that adapts to the realized $w_t, v_t$, which we call \textit{online nonstochastic prediction}, analogous to the best-in-hindsight stabilizing feedback controller in online nonstochastic control~\cite{agarwal2019online,simchowitz2020improper,hazan2022introduction}.

Prior work on online LDS prediction divides along two categories: (a) whether the data is assumed to come from a true LDS (\textit{generative}) or treated as arbitrary (\textit{agnostic}), and (b) whether the comparator is an \textit{open-loop} LDS rollout or a \textit{closed-loop} predictor incorporating feedback.
In the agnostic setting, spectral filtering \cite{hazan2017learning,hazan2018spectral} obtains sublinear regret against the best fitted open-loop LDS under marginal stability by handling long impulse-response memory, and \cite{kozdoba2019line} competes against the Kalman predictor. 
However, both rely on
standard online gradient descent and thus require bounded observations or increments.
In the generative setting under \textit{stochastic} noise, unconstrained \textit{online least squares} (OLS) has been shown to achieve logarithmic regret against the Kalman filter even for marginally stable systems \cite{ghai2020no,tsiamis2022online,rashidinejad2020slip,qian2025model,qian2025logarithmic}.
Extending this to adversarial noise has remained open: the polynomial growth of observations makes squared-loss gradients diverge so regret bound from standard online learning becomes vacuous, and the sharpest existing bound from unconstrained OLS is only sublinear and restricted to the fully observed state-prediction case with $y_t = x_t$ \cite{ghai2020no}. 
An important question arises: \textit{Can we achieve polylogarithmic regret against the best-in-hindsight closed-loop predictor for partially observed, marginally stable LDS under nonstochastic disturbances?}

\looseness=-1
\textbf{Our Contribution.} We answer this affirmatively via \textit{Finite-Memory Predictive Online Least Squares} (FM-POLS), an unconstrained online regression framework for competing with the best stabilizing Luenberger predictor in hindsight.
The key idea draws on a classical insight from {predictive} online learning: when partial information about the upcoming loss is available before the learner commits, regret can scale with what is \textit{unpredictable} rather than with the full signal.
We bring this principle to online nonstochastic prediction by giving the learner a \textit{predictive hint} $\tilde y_t$ (as a guess for $y_t$) before each prediction.
By incorporating this hint, together with the feature $z_t$ available from past observations, into its least-squares update, FM-POLS's regret then scales with the hint residual $\|y_t-\tilde y_t\|$ rather than the algorithm's own prediction error as in standard OLS.
This \textit{exogenous rescaling} effectively reduces the challenge of unbounded trajectories to that of designing hints with bounded residuals.

Within FM-POLS, we design hints in two cases. \textit{With model knowledge}, we show that \emph{any} stabilizing Luenberger observer, even a poorly tuned one, yields a uniformly bounded residual, giving polylogarithmic regret against the best comparator in hindsight. 
\textit{Without model knowledge}, we show that simple polynomial filters of past observations can algebraically remove the marginal stable modes of the system.
In particular, a universal two-lag filter $\tilde{y}_t = y_{t-2}$ suffices for diagonalizable systems with real spectrum (including all symmetric systems with $A = A^\tr$), and higher-order differencing handles real-spectrum Jordan blocks.
The model-free case for complex marginal eigenvalues remains open.
To our knowledge, this is the first logarithmic regret guarantee for online nonstochastic prediction in partially observed, marginally stable~LDS against closed-loop Luenberger predictors.

\looseness=-1
\textbf{Related Work.}
Our formulation parallels \textit{Online Nonstochastic Control}~\cite{agarwal2019online, simchowitz2020improper, hazan2022introduction}, which competes against the best stabilizing feedback controller in hindsight under bounded adversarial disturbances. The closed-loop benchmark, improper learning via finite-memory truncation, and per-step bounded noise model are direct analogs, and our work develops the corresponding tools for online prediction.
In the control setting, across stochastic and adversarial noise models, a clear picture has emerged for when logarithmic regret is achievable: with \textit{known} dynamics,~\cite{agarwal2019logarithmic, foster2020logarithmic,simchowitz2020improper} establish polylogarithmic regret under strongly convex or quadratic costs, tightening the ${\mathcal{O}}(\sqrt{T})$ bounds of~\cite{cohen2018online,agarwal2019online}; with \textit{unknown}~dynamics, polylogarithmic regret is restricted to specific scenarios~\cite{cassel2020logarithmic}  and ruled out in general by $\Omega(\sqrt{T})$ lower bounds~\cite{cassel2020logarithmic,simchowitz2020naive}.
The prediction landscape is less developed: polylogarithmic regret against the best stabilizing predictor in hindsight under adversarial noise and marginal stability has not been established even for known systems. Our work provides such regret guarantees.

\textit{Logarithmic Regret in (Predictive) Online Learning.} Achieving logarithmic regret in standard online optimization typically requires either strong convexity of the loss, exploited by online gradient descent, or exp-concavity, handled by online Newton step~\cite{hazan2007logarithmic, hazan2016introduction,orabona2019modern}.
Both methods see the loss only after committing to its prediction, and rely on projection onto a bounded feasible set to control gradient magnitudes. However, this bounded assumption is incompatible with marginally stable PO-LDS whose observations grow unboundedly.
For square-loss regression, {unconstrained} methods remove the projection requirement: standard OLS attains logarithmic regret whenever prediction errors stay bounded~\cite{cesa2006prediction}, and the Vovk–Azoury–Warmuth (VAW) forecaster~\cite{vovk2001competitive,azoury2001relative} sharpens this by exploiting the feature revealed before the learner commits.
Yet their regret bounds remain coupled to the signal magnitude and thus degrade under marginal stability. 
The framework of \textit{predictive} (or \textit{optimistic}) online learning further allows a {predictive hint} of the upcoming loss function, yielding regret bounds that scale with the hint quality~\cite{rakhlin2013online,rakhlin2013optimization,jacobsen2024online}.
Our FM-POLS may be viewed as a predictive version of OLS (or VAW) for online nonstochastic prediction of LDS.

\textit{Pre-Filtering and Differencing.} While our algorithmic framework follows predictive online learning, the predictive hints we design relate to signal pre-processing for LDS.
There are several forms of pre-filtering observations to remove unstable or long-memory dynamics before learning.
For partially observed marginally stable LDS, \cite{simchowitz2019learning} shows that pre-filtered least squares yields stronger guarantees for system identification under adversarial noise; \cite{marsden2025universal} introduces universal sequence preconditioning for LDS prediction, achieving hidden-dimension-free guarantees.
In classical time-series analysis, the ARIMA family~\cite{box2015time} applies differencing operators at varying lags and orders to render a series stationary before model fitting, though identifying the right orders is itself a hard problem.
Our polynomial-filter hints go beyond this: we design filter coefficients to \textit{algebraically annihilate} the marginal modes of $A$, and establish \textit{online regret} guarantees in adversarial LDS prediction.


\textbf{Organization.}
\cref{sec:problem_setup} formalizes the prediction task and the Luenberger benchmark. 
\cref{sec:pols_framework} develops the POLS framework and specializes it to LDS prediction as FM-POLS, establishing the residual-scaled Luenberger regret bound.
\cref{sec:hint_design} designs predictive hints in both model-based and model-free regimes. 
\cref{sec:experiments} reports numerical experiments and \cref{sec:conclusion} concludes the paper. Detailed proofs and additional experiments are deferred to the appendix.

\textbf{Notation.}
For a matrix $M$, $\|M\|$ and $\|M\|_F$ denote its operator and Frobenius norms, and $\rho(M)$ the spectral radius (maximum absolute eigenvalue); for a vector $y$, $\|y\|$ is the Euclidean norm. We use $[T] := \{1, \ldots, T\}$, $y_{t_1:t_2}$ for the sequence $y_{t_1}, \ldots, y_{t_2}$ with $t_1 \le t_2$, and $\operatorname{poly}(\cdot)$ for a polynomial function of its arguments. For~\cref{eq:po-lds}, we adopt the convention $y_s := 0$ for $s \le 0$, $w_s := 0$ for $s < 0$, and $v_s := 0$ for $s \le 0$.
A list of paper-specific notation is provided in~\cref{tab:notation} in the appendix.

\vspace{-1.5mm}
\section{Problem Formulation}
\label{sec:problem_setup}
\vspace{-2mm}

We formalize the prediction task, the comparator class, and the regret objective for PO-LDS in \cref{eq:po-lds}.

\begin{assumption}[System Regularity]
\label{assum:main}
(i)~$\rho(A) \le 1$;\; 
(ii)~$A = SJS^{-1}$ with Jordan blocks of size $\le r$ and $\kappa_A := \|S\| \|S^{-1}\|$;\;
(iii)~$(A, C)$ is observable;\;
(iv)~$x_0=0$, $\|w_t\| \le C_w$, $\|v_t\| \le C_v$ for all $t$.
\end{assumption}
\vspace{-1mm}

The assumption $x_0 = 0$ is without loss of generality; any bound on the initial state $\|x_0\|\leq C_0$ can simply be absorbed into a modified disturbance bound $C_w$.
The largest Jordan block controls the long-term memory of the system, as stated below (see Appendix~\ref{app:poly-growth} for a proof):

\vspace{0.5mm}
\begin{lemma}[Polynomial Growth]
\label{lem:poly-growth}
Under \cref{assum:main}, for all $k\ge0$ and $t \ge 1$, the following holds:
(i)~$\|A^k\| \le \kappa_A(1+k)^{r-1}$;\;
(ii)~$\sum_{s=0}^{k}\|A^s\| \le \kappa_A(1+k)^r$;\;
(iii)~$\|x_t\| \le C_x (1+t)^r$, $\|y_t\| \le C_y(1+t)^r$, where $C_x := \kappa_A C_w$ and $C_y := \|C\|C_x + C_v$.
\end{lemma}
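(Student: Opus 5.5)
Part (i) is where the real estimate lives; (ii) and (iii) are bookkeeping on top of it. We write $A^k = S J^k S^{-1}$, so $\|A^k\| \le \kappa_A \|J^k\|$, and it suffices to show $\|J^k\| \le (1+k)^{r-1}$. Since $J$ is block diagonal, $\|J^k\|$ is the maximum of $\|J_i^k\|$ over its Jordan blocks $J_i = \lambda I_m + N$, where $|\lambda|\le 1$, $m \le r$, and $N$ is the nilpotent shift with $\|N\| = 1$.

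\textbf{Part (i).} The binomial expansion gives
\[
J_i^k = \sum_{j=0}^{\min(k,m-1)} \binom{k}{j}\lambda^{k-j} N^j ,
\qquad\text{so}\qquad
\|J_i^k\| \le \sum_{j=0}^{r-1}\binom{k}{j}.
\]
This is the main obstacle: the sum must be bounded by $(1+k)^{r-1}$ with constant exactly one. We expand
\[
(1+k)^{r-1} = \sum_{j=0}^{r-1} \binom{r-1}{j} k^j
\]
and compare term by term. Since $\binom{k}{j} \le k^j/j! \le k^j$ and $\binom{r-1}{j} \ge 1$, each term of the first sum is dominated by the corresponding term of the second. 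The case $k=0$ is trivial.

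\textbf{Part (ii).} We sum part (i):
\[
\sum_{s=0}^k \|A^s\| \le \kappa_A \sum_{s=0}^k (1+s)^{r-1} \le \kappa_A (1+k)(1+k)^{r-1} = \kappa_A (1+k)^r .
\]

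\textbf{Part (iii).} With $x_0 = 0$, unrolling the recursion gives $x_t = \sum_{s=0}^{t-1} A^{t-1-s} w_s$. Hence
\[
\|x_t\| \le C_w \sum_{j=0}^{t-1}\|A^j\| \le \kappa_A C_w\, t^r \le C_x (1+t)^r
\]
by part (ii) with $k = t-1$. Then
\[
\|y_t\| \le \|C\|\,\|x_t\| + C_v \le \bigl(\|C\| C_x + C_v\bigr)(1+t)^r = C_y (1+t)^r ,
\]
using $(1+t)^r \ge 1$.
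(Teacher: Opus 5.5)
Your proposal is correct and follows the paper's proof essentially step for step: the reduction $\|A^k\|\le\kappa_A\max_i\|J_i^k\|$, the binomial expansion of each Jordan block compared term by term with $(1+k)^{r-1}=\sum_{j}\binom{r-1}{j}k^j$, the summation for (ii), and the unrolling $x_t=\sum_{s=0}^{t-1}A^{t-1-s}w_s$ combined with (ii) at $k=t-1$ for (iii). The only nit is that for a $1\times 1$ block $N=0$, so the fact you actually use is $\|N^j\|\le 1$ rather than $\|N\|=1$.
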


\vspace{-0.5mm}
\textbf{Interaction Protocol.}
We consider $T$-round prediction. At each  $t\in[T]$, first, the learner predicts $\hat{y}_t$ from historical observations $y_{1:t-1}$; then, the adversary selects $w_{t-1},$ $v_t$ \textit{arbitrarily} subject to \cref{assum:main}-(iv); finally, the true $y_t$ is revealed and the learner incurs squared loss $\|\hat{y}_t - y_t\|^2$.

\textbf{Competing with Luenberger Predictors.} 
We first define our benchmark (closed-loop) predictors.

\begin{definition}[Stabilizing Luenberger Predictor]
For a gain $L \!\in\! \mathbb{R}^{n \times p}$ such that $A_L\!:=\!A-LC$ satisfies $\rho(A_L)\!<\!1$, the associated \textit{Luenberger predictor} maintains an internal state  $\hat{x}_t^{\mathrm{LB}}(L)$ and predicts
\begin{equation} \label{eq:luen-dynamics}
    \hat{y}_{t}^{\mathrm{LB}}(L) = C \hat{x}_t^{\mathrm{LB}}(L), \quad \hat{x}_{t+1}^{\mathrm{LB}} = A \hat{x}_{t}^{\mathrm{LB}} + L(y_{t} - \hat{y}_{t}^{\mathrm{LB}}) = A_L \hat{x}_{t}^{\mathrm{LB}} + L y_{t}, \quad \hat{x}^{\mathrm{LB}}_{0}=0.
\end{equation}
\end{definition}

We adopt the standard notion of \textit{strong stability} \cite{cohen2018online,simchowitz2020improper} to quantify the decay rate of $A_L$.

\vspace{0.5mm}
\begin{definition}[Strong Stability]
We say that $A_L$ is $(\kappa, \gamma)$-strongly stable for some $\kappa > 0$, $\gamma \in (0, 1]$ if there exist matrices $P$, $Q \in \mathbb{R}^{n \times n}$ such that $A_L = P Q P^{-1}$, $\|Q\| \le 1-\gamma$, and $\|P\|\,\|P^{-1}\| \le \kappa$.
\end{definition}

\vspace{-1mm}
Strong stability immediately yields $\|A_L^k\| \le \kappa(1-\gamma)^k$ for all $k \ge 0$, which is useful in our subsequent analysis.
Given $(\kappa,\gamma)$, we define the comparator class
\[
    \mathcal{L}_{\kappa, \gamma}
    := \bigl\{L \in \mathbb{R}^{n \times p} : \|L\|\leq\kappa,\;
       A_L \text{ is } (\kappa, \gamma)\text{-strongly stable}
       \bigr\}.
\]
Under \cref{assum:main}-(iii), the set $\mathcal{L}_{\kappa, \gamma}$ is nonempty for any $\gamma\in(0,1]$ and sufficiently large $\kappa$.
The learner's objective is to minimize the \textit{Luenberger regret} over a horizon $T$, defined as the excess cumulative loss relative to any Luenberger comparator $\hat{y}_t^{\mathrm{LB}}(L)$ induced by $L \in \mathcal{L}_{\kappa,\gamma}$
\begin{equation} \label{eq:luen-reg}
    \Reg_T^{\mathrm{LB}}(L) := \sum_{t=1}^T \|\hat{y}_t - y_t\|^2 - 
    \sum_{t=1}^T \|\hat{y}_t^{\mathrm{LB}}(L) - y_t\|^2.
\end{equation}
Notably, the learner aims to compete with $L^\star \in \argmin_{L \in \mathcal{L}_{\kappa,\gamma}} \!\textstyle\sum_{t=1}^T \|\hat{y}_t^{\mathrm{LB}}(L) - y_t\|^2$, the optimal comparator selected in hindsight after all disturbances are realized. 
Classical Kalman and $\mathcal{H}_\infty$ filters correspond to Luenberger predictors with specific gains. Therefore, competing against the best gain in hindsight provides an instance-wise optimal alternative to any single pre-committed filter. 
We also note that the Luenberger predictor \cref{eq:luen-dynamics} is nonconvex in $L$ and has infinite memory: unrolling \cref{eq:luen-dynamics} gives $\hat{y}_t^{\mathrm{LB}}(L) \!=\! \sum_{k=0}^{t-1} C A_L^k L\, y_{t-k-1}$, with all past observations coupled through powers of $A_L$.

\section{Predictive Online Least Squares for LDS}
\label{sec:pols_framework}

\vspace{-2mm}

This section introduces \textit{Finite-Memory Predictive Online Least Squares} ({FM-POLS}), an unconstrained online regression framework to compete with the Luenberger benchmark.
The construction of predictive hints will be discussed in \cref{sec:hint_design}.

\subsection{Online Least Squares with Predictive Hints}
\label{subsec:pols}
\vspace{-2mm}

We begin with a general online regression setting.
At each step $t \in [T]$, the learner receives a feature $z_t \in \mathbb{R}^{d}$, predicts $\hat{y}_t = M_t z_t$, and then, observes $y_t$ and incurs squared loss $\|\hat{y}_t - y_t\|^2$. 
Performance is measured by the \textit{regression regret} against any fixed linear predictor $M \in \mathbb{R}^{p \times d}$, defined as
\[
    \Reg_T(M) = \sum_{t=1}^T \|M_t z_t - y_t\|^2 - \sum_{t=1}^T \|M z_t - y_t\|^2.
\]
Note that the comparator $M$ may be chosen in hindsight with full knowledge of $(z_t, y_t)_{t=1}^T$.

\textbf{Standard OLS and its Limitation.}
At each $t$, the standard OLS strategy minimizes the regularized past cumulative loss
$
M_t^{\mathrm{ols}}
\!=\!
\argmin_{M}
\left\{
\lambda \|M\|_F^2
+
\sum_{s=1}^{t-1}
\|M z_s - y_s\|^2
\right\}
$
and predicts $\hat{y}_t \!=\! M_t^{\mathrm{ols}} z_t$.
A classical analysis \cite{cesa2006prediction,ghai2020no} shows that OLS satisfies
\begin{equation}
\Reg_T(M)
\le
{\lambda}\|M\|_F^2 + \max_{1\le t\le T}\|M_t^{\mathrm{ols}} z_t-y_t\|^2 d \log\left(\!1+\frac{\sum_{t=1}^T\|z_t\|^2}{\lambda d}\!\right).
\label{eq:ols_regret}
\end{equation}
This upper bound directly yields logarithmic regret whenever the \textit{prediction error} $\|M_t^{\mathrm{ols}} z_t - y_t\|$ are uniformly bounded. This is indeed the case when both feature $z_t$ and observation $y_t$ are bounded and $M_t^{\mathrm{ols}}$ is restricted to a bounded set.   
For marginally stable LDS, however, $z_t$ and $y_t$ may contain polynomially growing observations, making the prediction error difficult to control and failing the logarithmic regret even in the fully observed case \cite{ghai2020no}.

\textbf{POLS: from Prediction Error to Hint Residual.}
To overcome this, we introduce a \textit{predictive hint} $\tilde{y}_t \in \mathbb{R}^{p}$ that acts as a prior guess of $y_t$.
With $\tilde{y}_t,z_t$ available before the learner commits to $\hat{y}_t$, POLS (predictive OLS) augments the OLS objective with the \textit{look-ahead} term $\|M z_t - \tilde{y}_t\|^2$:
\begin{equation}
    M_t^{\mathrm{pols}} = \argmin_{M \in \mathbb{R}^{p \times d}} \left\{ \lambda \|M\|_F^2 + \sum_{s=1}^{t-1} \|M z_s - y_s\|^2 + \|M z_t - \tilde{y}_t\|^2 \right\},
    \label{eq:pols-objective}
\end{equation}
and the learner predicts $\hat{y}_t = M_t^{\mathrm{pols}} z_t$. 
The key consequence is that the regret now depends on the \textit{hint residual} $\Delta_t := y_t - \tilde{y}_t$ rather than on the prediction error. We have the following guarantee. 

\begin{theorem}[Residual-Scaled Regret] \label{thm:pols-reg}
For any $\lambda > 0$, any $M$, and any $\{z_t, y_t\}_{t=1}^T$, POLS guarantees
\begin{equation}
\Reg_T(M) \le \lambda \|M\|_F^2 + \max_{1\le t\le T}\|y_t - \tilde{y}_t\|^2 d \log\left(\! 1+\frac{\sum_{t=1}^T \|z_t\|^2}{\lambda d} \!\right).
\label{eq:pols_regret}
\end{equation}
\end{theorem}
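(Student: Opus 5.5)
We follow the standard Follow-the-Regularized-Leader/Vovk--Azoury--Warmuth analysis, carrying the hint $\tilde y_t$ through the argument. Write $\Sigma_t := \lambda I + \sum_{s=1}^{t} z_s z_s^\tr$ and $B_{t} := \sum_{s=1}^{t} y_s z_s^\tr$. The POLS minimizer is then
\[
M_t^{\mathrm{pols}} = (B_{t-1} + \tilde y_t z_t^\tr)\Sigma_t^{-1}.
\]
The look-ahead term only changes the target $y_t$ into $\tilde y_t$; it does not change the Gram matrix, so the VAW structure is preserved. Let $\Phi_t(M) := \lambda\|M\|_F^2 + \sum_{s\le t}\|Mz_s-y_s\|^2$, and let $M_{t+1}^{\star} := B_t\Sigma_t^{-1}$ be its minimizer, i.e., the FTL/OLS point after $y_t$ is revealed.

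\textbf{Step 1 (be-the-leader telescoping).} We compare with the hindsight minimizer of $\Phi_T$. Since $\Phi_T(M)\ge \Phi_T(M_{T+1}^\star)$, it suffices to show
\[
\sum_t \|\hat y_t - y_t\|^2 - \Phi_T(M_{T+1}^\star) \le \sum_t \delta_t - \lambda\|\cdot\|\text{-terms},
\]
where $\delta_t := \|\hat y_t-y_t\|^2 + \Phi_{t-1}(M_t^\star) - \Phi_t(M_{t+1}^\star)$ is the one-step excess. Summing and adding back $\lambda\|M\|_F^2$ gives $\Reg_T(M)\le \lambda\|M\|_F^2 + \sum_t \delta_t$. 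Here we use the convention $\Phi_0(M_1^\star)=0$.

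\textbf{Step 2 (exact per-step computation).} Because everything is quadratic, each $\delta_t$ can be computed in closed form. Row by row, or in Frobenius form, the standard identity is
\[
\min_M \Phi_t - \min_M \Phi_{t-1} = \|y_t - M_t^\star z_t\|^2 \bigl(1 - z_t^\tr\Sigma_t^{-1}z_t\bigr),
\]
which is the Sherman--Morrison update of the least-squares residual. Writing $\hat y_t = M_t^\star z_t + (\tilde y_t - M_t^\star z_t)\,a_t$ with $a_t := z_t^\tr\Sigma_t^{-1}z_t\in[0,1)$, we get $\hat y_t - y_t = (1-a_t)(M_t^\star z_t - y_t) + a_t(\tilde y_t - y_t)$. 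Setting $e := M_t^\star z_t - y_t$ and $\Delta := y_t-\tilde y_t$,
\[
\delta_t = \|(1-a_t)e - a_t\Delta\|^2 - (1-a_t)\|e\|^2 .
\]
Expanding and minimizing the resulting concave expression over $e$, via the inequality $(1-a)^2\|e\|^2 - 2a(1-a)\langle e,\Delta\rangle - (1-a)\|e\|^2 \le \frac{a^2(1-a)}{a}\|\Delta\|^2$, yields $\delta_t \le a_t\|\Delta_t\|^2$ (note $-(1-a)\|e\|^2 + (1-a)^2\|e\|^2 = -a(1-a)\|e\|^2$). This is the crux: the algorithm's own error $e$ cancels, leaving only the hint residual.

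\textbf{Step 3 (elliptical potential).} We bound $\sum_t a_t \le \sum_t \log\frac{\det\Sigma_t}{\det\Sigma_{t-1}} = \log\frac{\det\Sigma_T}{\det(\lambda I)}$, using $a \le -\log(1-a)$ and the matrix determinant lemma. By AM--GM on the eigenvalues, this is at most $d\log\bigl(1+\sum_t\|z_t\|^2/(\lambda d)\bigr)$. Pulling out $\max_t\|\Delta_t\|^2$ completes the bound.

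The main obstacle is Step 2: obtaining the exact quadratic identity for $\delta_t$ with the hint inserted, and checking that completing the square in $e$ gives precisely the coefficient $a_t$ on $\|\Delta_t\|^2$ with no leftover $e$-dependent term. We plan to verify it first in the scalar-output case and then extend row-wise, since the Frobenius objective decouples across the $p$ rows of $M$.
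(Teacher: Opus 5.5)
Your proof is correct. It reaches the same intermediate bound $\sum_t \|\Delta_t\|^2 z_t^\tr \Sigma_t^{-1} z_t$ as the paper, but by a genuinely different route.

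\textbf{The paper's route.} It rewrites POLS as FTRL on tilted linear losses $\tilde\ell_t(m) = -2\Delta_t\, m^\tr z_t$. The look-ahead term is absorbed into a time-varying regularizer $R_t(m) = m^\tr G_t m - 2c_t^\tr m$. The paper then invokes the generic FTRL bound of Orabona, Crammer, and Cesa-Bianchi for regularizers that are strongly convex in $\|\cdot\|_{G_t}$. Finally, it handles $p>1$ by decoupling row by row.

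\textbf{Your route.} You run a direct be-the-leader/VAW potential argument against the OLS value $\min_M \Phi_t$. You combine the Sherman--Morrison residual identity with the representation $\hat y_t - y_t = (1-a_t)e - a_t\Delta_t$. Completing the square in fact yields the exact identity $\delta_t = a_t\|\Delta_t\|^2 - a_t(1-a_t)\|M_t^\star z_t - \tilde y_t\|^2$. So the algorithm's own error is not really ``cancelled''; it survives only inside a nonpositive slack term.

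\textbf{What each buys.} Your argument is fully self-contained. It handles vector outputs in one pass, because all rows share $\Sigma_t$: no row-wise decoupling is needed and no extra factor of $p$ can appear. It also recovers the VAW identity ($\tilde y_t = 0$) and the OLS identity ($\tilde y_t = M_t^\star z_t$) as special cases. The paper's route is more modular. It makes the optimistic-learning interpretation explicit, since the gradients scale with $\Delta_t$. It also transfers directly to variants such as the discounted regularizers of Jacobsen and Cutkosky.

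\textbf{Minor fixes.}
\begin{itemize}
\item In Step~2 you are maximizing, not minimizing, a concave quadratic in $e$.
\item Write the coefficient $\tfrac{a^2(1-a)}{a}$ simply as $a(1-a)$, so that the case $a_t=0$ (i.e., $z_t=0$) is covered.
\item The garbled display in Step~1 should read $\sum_t\|\hat y_t-y_t\|^2 = \sum_t\delta_t + \min_M\Phi_T \le \sum_t\delta_t + \Phi_T(M)$, using $\Phi_0(M_1^\star)=0$.
\end{itemize}
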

Comparing \cref{eq:ols_regret} and \cref{eq:pols_regret}: OLS scales with the prediction error $\|M_t^{\mathrm{ols}} z_t - y_t\|$, which is an \textit{endogenous} quantity that depends on the algorithm's own iterates and may diverge; POLS scales with the hint residual $\|y_t - \tilde{y}_t\|$, which is \textit{exogenous} and depends only on the quality of the hint. This decoupling is what enables logarithmic regret even when signals grow unboundedly.

The proof of \cref{thm:pols-reg} is provided in Appendix~\ref{app:reg-analysis}, adapting the techniques from \cite{orabona2015generalized,foster2020logarithmic,jacobsen2024online}.
Note that POLS strictly generalizes OLS: setting the self-consistent hint $\tilde{y}_t = M_t^{\mathrm{pols}} z_t$ recovers OLS (see Appendix~\ref{app:ols-as-pols}).
When $\tilde y_t = 0$, POLS reduces to the classical VAW forecaster \cite{vovk2001competitive,azoury2001relative}.
Also, POLS is connected to the undiscounted version of a recent \cite[Algorithm 1]{jacobsen2024online}.

\begin{algorithm}[t]
\caption{FM-POLS: Finite-Memory Predictive Online Least Squares}
\label{alg:fm-pols}
\begin{algorithmic}[1]
\Require Horizon $T$, regularization $\lambda > 0$, memory length $H= \mathcal{O}(\log T)$.
\State \textbf{Initialize:} $M_0 = \mathbf{0} \in \mathbb{R}^{p \times d}$, $P_0 = \lambda^{-1} I \in \mathbb{R}^{d \times d}$, where $d=pH$.
\For{$t = 1, \dots, T$}
    \State \textbf{Receive}: feature $z_t = [y_{t-1}^\tr, \dots, y_{t-H}^\tr]^\tr \in \mathbb{R}^{d}$ and hint $\tilde{y}_t \in \mathbb{R}^{p}$
    \State \textbf{Update Covariance}: $K_t = \frac{P_{t-1} z_t}{1 + z_t^\tr P_{t-1} z_t}$, $P_t = P_{t-1} - K_t z_t^\tr P_{t-1}$
    \State \textbf{Predict}: $\hat{y}_t = M_t^{\mathrm{pols}} z_t$, \text{where} $M_t^{\mathrm{pols}} = M_{t-1} + (\tilde{y}_t - M_{t-1} z_t) K_t^\tr$
    \State \textbf{Observe} $y_t$ and  \textbf{Update}: $M_t = M_{t-1} + (y_t - M_{t-1} z_t) K_t^\tr$
\EndFor
\end{algorithmic}
\end{algorithm}

\subsection{Regression via Finite-Memory Truncation} \label{subsec:truncation}

To apply POLS in LDS prediction, we reduce the prediction task to a finite-dimensional regression via \textit{improper learning} and finite-memory truncation. In particular, we over-parameterize the predictor space using a memory length $H$.
Define the stacked feature for all $t\ge 1$
\begin{equation}
\label{eq:stacked_feature}
    z_t := [y_{t-1}^\tr, y_{t-2}^\tr, \ldots, y_{t-H}^\tr]^\tr \in \mathbb{R}^{pH}. 
\end{equation}
The proposed FM-POLS is POLS instantiated with feature $z_t$ in \cref{eq:stacked_feature} and target $y_t$; \cref{alg:fm-pols} gives the efficient closed-form recursion for FM-POLS. 
The inverse Gram matrix $P_t := G_t^{-1}$, where $G_t := \lambda I + \sum_{s \le t} z_s z_s^\tr$, is maintained recursively via the Sherman-Morrison update.

\vspace{-1mm}
Now we analyze the truncation error. Define the truncated Luenberger predictor (no effect for $t \le H$)
\begin{equation*}
    \hat{y}_t^{\mathrm{TR}}(L)
    := \textstyle\sum_{k=0}^{H-1} C A_L^k L\, y_{t-k-1}
    = M_L z_t,
    \quad
    M_L := [CL,\; CA_LL,\; \ldots,\; CA_L^{H-1}L].
\end{equation*}
The truncation discards the tail $\sum_{k \ge H} CA_L^k L\, y_{t-k-1}$. Since $A_L$ is $(\kappa, \gamma)$-strongly stable, this tail decays geometrically in $H$, fast enough to overcome the polynomial growth of $y_t$.
The following lemma makes this precise, with its proof presented in Appendix \ref{app:trunc}.

\begin{lemma}[Luenberger Truncation Error]
\label{lem:luen-trunc}
Under \cref{assum:main}, for all $t \ge 1$ and $L \in \mathcal{L}_{\kappa,\gamma}$,
\begin{equation*}\label{eq:trunc-ptwise}
    \|\hat{y}_t^{\mathrm{LB}}(L) - \hat{y}_t^{\mathrm{TR}}(L)\|
    \le \gamma^{-1}{\|C\| \kappa^2 C_y} (1+t)^r (1-\gamma)^H,
\end{equation*}
where $C_y$ is defined in \cref{lem:poly-growth}.
Setting $H \ge \lceil {\gamma^{-1}(r+1)\log(1+T)}\rceil$ ensures
    $\sum_{t=1}^T \|\hat{y}_t^{\mathrm{LB}}(L) - \hat{y}_t^{\mathrm{TR}}(L)\|
    \le C_{\mathrm{trun}}$,
   and 
    $\sum_{t=1}^T \|\hat{y}_t^{\mathrm{LB}}(L) - \hat{y}_t^{\mathrm{TR}}(L)\|^2
    \le C_{\mathrm{trun}}^2$,
    where $C_{\mathrm{trun}}:=\gamma^{-1} {\|C\|\kappa^2 C_y}$.
\end{lemma}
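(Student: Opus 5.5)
The plan is to bound the discarded tail pointwise, then sum over $t$ using the choice of $H$. By the unrolled form $\hat{y}_t^{\mathrm{LB}}(L)=\sum_{k=0}^{t-1} CA_L^kL\,y_{t-k-1}$, the difference with the truncated predictor is exactly the tail $\sum_{k=H}^{t-1} CA_L^kL\,y_{t-k-1}$. This tail is empty, and the error is zero, when $t\le H$; the convention $y_s=0$ for $s\le 0$ makes the two sums agree. For $t>H$, I bound each term by
\[
\|C\|\,\|A_L^k\|\,\|L\|\,\|y_{t-k-1}\| \le \|C\|\,\kappa(1-\gamma)^k\,\kappa\,C_y(1+t-k-1)^r .
\]
This uses three ingredients: strong stability, which gives $\|A_L^k\|\le\kappa(1-\gamma)^k$; the constraint $\|L\|\le\kappa$; and \cref{lem:poly-growth}(iii). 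I then crudely replace $(t-k)^r$ by $(1+t)^r$ and sum the geometric series $\sum_{k\ge H}(1-\gamma)^k=(1-\gamma)^H/\gamma$. This gives the pointwise bound $\gamma^{-1}\|C\|\kappa^2C_y(1+t)^r(1-\gamma)^H$ as stated.

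For the summed bounds, I use $(1-\gamma)^H\le e^{-\gamma H}\le (1+T)^{-(r+1)}$ when $H\ge\gamma^{-1}(r+1)\log(1+T)$. Then
\[
\sum_{t=1}^T \|\hat{y}_t^{\mathrm{LB}}(L) - \hat{y}_t^{\mathrm{TR}}(L)\| \le C_{\mathrm{trun}}\sum_{t=1}^T (1+t)^r(1+T)^{-(r+1)} \le C_{\mathrm{trun}}\,T(1+T)^{r}(1+T)^{-(r+1)}\le C_{\mathrm{trun}}.
\]
For the squared sum, each pointwise error is at most $C_{\mathrm{trun}}(1+T)^{-1}$. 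The sum of squares is therefore at most $C_{\mathrm{trun}}^2\,T(1+T)^{-2}\le C_{\mathrm{trun}}^2$, which is even a bit stronger than needed.

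There is no real obstacle; the only care points are bookkeeping. First, the index $y_{t-k-1}$ with $t-k-1\ge 1$ is what lets me use the polynomial bound; zero-index terms vanish. Second, the exponent of $(1-\gamma)^H$ needs enough decay to absorb $(1+t)^r$ together with one extra factor of $T$ from the summation. This is precisely why the memory length requires $r+1$ rather than $r$ in the choice of $H$.
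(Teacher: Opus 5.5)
Your proposal is correct and follows the paper's proof essentially step for step. It uses the same tail identity, the same pointwise bound from strong stability, $\|L\|\le\kappa$ and \cref{lem:poly-growth}, and the same inequality $(1-\gamma)^H\le e^{-\gamma H}\le(1+T)^{-(r+1)}$ for both summed bounds; bounding each term by $C_{\mathrm{trun}}(1+T)^{-1}$ in the squared case is only a rephrasing and gives the paper's $C_{\mathrm{trun}}^2\,T/(1+T)^2$.
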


We next examine the Luenberger comparator's own prediction error, which is also required in our subsequent regret analysis.
Define the state estimation error $e_t^{\mathrm{LB}}(L) := x_t - \hat x_t^{\mathrm{LB}}(L)$.
The system dynamics \cref{eq:po-lds} and the Luenberger update \cref{eq:luen-dynamics} give the closed-loop recursion
\begin{equation}\label{eq:luen-error-dynamics}
    e_{t+1}^{\mathrm{LB}}(L) = A_L\, e_t^{\mathrm{LB}}(L) + w_t - L v_t,
\end{equation}
yielding the output prediction error $\hat y_t^{\mathrm{LB}}(L) - y_t = C e_t^{\mathrm{LB}}(L) + v_t$.
Again by strong stability of $A_L$, past disturbances are forgotten exponentially, giving the following uniform bound.

\begin{lemma}[Luenberger Prediction Error]
\label{lem:luen-pred}
Under \cref{assum:main}, for all $t \ge 1$ and $L \in \mathcal{L}_{\kappa,\gamma}$, 
\begin{equation*}\label{eq:lb-error}
    \|\hat{y}_t^{\mathrm{LB}}(L) - y_t\|
    \le \gamma^{-1}{\|C\|\kappa(C_w + \kappa C_v)} + C_v
    =: C_{\mathrm{pred}}.
\end{equation*}
\end{lemma}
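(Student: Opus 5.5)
The plan is to unroll the error recursion \cref{eq:luen-error-dynamics} from the zero initial condition and bound the resulting geometric series using strong stability. Since $x_0 = 0$ and $\hat x_0^{\mathrm{LB}}(L) = 0$, we have $e_0^{\mathrm{LB}}(L) = 0$. Iterating \cref{eq:luen-error-dynamics} therefore gives, for every $t \ge 1$,
\begin{equation*}
    e_t^{\mathrm{LB}}(L) = \sum_{k=0}^{t-1} A_L^{k}\,\bigl(w_{t-1-k} - L v_{t-1-k}\bigr).
\end{equation*}
The convention $v_0 := 0$ is harmless here; in any case the bound below uses only $\|v_s\| \le C_v$.

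Next we bound each summand. By \cref{assum:main}-(iv) and $\|L\| \le \kappa$ (membership in $\mathcal{L}_{\kappa,\gamma}$), each driving term satisfies $\|w_s - L v_s\| \le C_w + \kappa C_v$. Strong stability gives $\|A_L^k\| \le \|P\|\|Q\|^k\|P^{-1}\| \le \kappa(1-\gamma)^k$. Summing the geometric series yields
\begin{equation*}
    \|e_t^{\mathrm{LB}}(L)\| \le \kappa (C_w + \kappa C_v)\sum_{k=0}^{\infty}(1-\gamma)^k = \gamma^{-1}\kappa(C_w+\kappa C_v),
\end{equation*}
which is uniform in $t$.

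Finally, we use the identity $\hat y_t^{\mathrm{LB}}(L) - y_t = C\hat x_t^{\mathrm{LB}} - Cx_t - v_t = -(C e_t^{\mathrm{LB}}(L) + v_t)$. The sign is irrelevant for norms. The triangle inequality then gives $\|\hat y_t^{\mathrm{LB}}(L) - y_t\| \le \|C\|\,\|e_t^{\mathrm{LB}}(L)\| + C_v \le \gamma^{-1}\|C\|\kappa(C_w+\kappa C_v) + C_v = C_{\mathrm{pred}}$, as claimed.

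There is no real obstacle. The only point needing care is to check the error recursion itself, including the indexing convention for $w,v$ near $t=0$. Expanding the definitions, $e_{t+1} = Ax_t + w_t - A_L\hat x_t - L(Cx_t + v_t) = A_L e_t + w_t - Lv_t$, which matches \cref{eq:luen-error-dynamics}. One should also confirm that the bound $\|A_L^k\| \le \kappa(1-\gamma)^k$ follows directly from the definition of $(\kappa,\gamma)$-strong stability. Notably, marginal stability of $A$ plays no role: the estimation error is driven only by bounded disturbances through the stable matrix $A_L$.
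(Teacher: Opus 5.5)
Your proposal is correct and follows essentially the same route as the paper: unroll the closed-loop error recursion from $e_0^{\mathrm{LB}}=0$, bound the driving terms by $C_w+\kappa C_v$ and the powers by $\|A_L^k\|\le\kappa(1-\gamma)^k$, sum the geometric series, and finish with the triangle inequality on $\|Ce_t^{\mathrm{LB}}(L)+v_t\|$. Your sign bookkeeping ($\hat y_t^{\mathrm{LB}}(L)-y_t=-(Ce_t^{\mathrm{LB}}(L)+v_t)$) is in fact slightly more careful than the main-text statement, and, as you note, the sign does not affect the norm bound.
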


The proof is given in Appendix~\ref{app:lb-error}. Despite the unbounded growth of observations $y_t=\mathcal{O}(t^r)$, the prediction error stays uniformly bounded under bounded adversarial noise.
The analogous quantity is generally unbounded in the stochastic setting~\cite{tsiamis2022online}.


\vspace{-1mm}
\subsection{From Luenberger Regret to Hint Residual}
\label{subsec:decomposition}
\vspace{-1mm}

We now combine the regression regret of \cref{thm:pols-reg} with the truncation analysis of \cref{lem:luen-trunc,lem:luen-pred} into a residual-scaled bound on the Luenberger regret. 
The key observation is that the unconstrained regression class $\{Mz_t : M \in \mathbb{R}^{p \times pH}\}$ contains \textit{every} truncated Luenberger predictor $\hat{y}^{\mathrm{TR}}_t(L) = M_Lz_t$ for $L \in \mathcal{L}_{\kappa,\gamma}$, so the Luenberger regret \cref{eq:luen-reg} decomposes cleanly:
\begin{equation*}
\label{eq:regret_decomp}
    \Reg_T^{\mathrm{LB}}(L) = {\Reg_T(M_{L})} + {\sum_{t=1}^T \left(\|M_{L} z_t - y_t\|^2 - \|\hat{y}_t^{\mathrm{LB}}(L) - y_t\|^2\right)}.
\end{equation*}

\begin{theorem}[Residual-Scaled Luenberger Regret of FM-POLS]
\label{thm:total-regret}
Under \cref{assum:main}, suppose FM-POLS is run with memory length $H =\lceil\gamma^{-1}(r+1)\log(1+T)\rceil\footnote{This requires knowing $r$; any $\bar r \ge r$ suffices, inflating $H$ roughly by a factor of $\bar r/r$ while reducing the truncation error.}= \mathcal{O}(\log T)$ and any predictive hints $\tilde y_t$. 
Let $\Delta_{\max} := \max_{t\in[T]} \|y_t - \tilde{y}_t\|$.
Then, for any $\lambda > 0$ and any $L \in \mathcal{L}_{\kappa,\gamma}$,
\begin{equation*} \vspace{2pt}
    \Reg_T^{\mathrm{LB}}(L)
    \le
    \frac{\lambda\|C\|^2 \kappa^4 p}{\gamma}
    + \Delta_{\max}^2\, pH
      \log\!\left(\!1 \!+\! \frac{C_y^2(1\!+\!T)^{2r+1}}
      {\lambda p}\!\right)
    + C_{\mathrm{trun}}^2
    \!+ 2C_{\mathrm{pred}}C_{\mathrm{trun}}, \vspace{2pt}
\end{equation*}
where $C_y$, $C_{\mathrm{trun}}$, $C_{\mathrm{pred}}$ are from Lemmas~\ref{lem:poly-growth}, \ref{lem:luen-trunc}, \ref{lem:luen-pred} respectively. 
If $\Delta_{\max} = \operatorname{polylog}(T)$, the regret is polylogarithmic in $T$. In particular, whenever $\Delta_{\max}$ is bounded independently of $T$,
\begin{equation*}
    \Reg_T^{\mathrm{LB}}(L)
    \le
    \operatorname{poly}\left(
    \|C\|, p, r, \kappa_A, \kappa,
    \gamma^{-1}, 
    C_y, C_w, C_v, \Delta_{\max}
    \right)
    \cdot \log^2 T.
\end{equation*}
\end{theorem}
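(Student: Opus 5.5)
The plan is to use the decomposition displayed just before the statement, $\Reg_T^{\mathrm{LB}}(L) = \Reg_T(M_L) + \sum_{t=1}^T(\|M_L z_t - y_t\|^2 - \|\hat y_t^{\mathrm{LB}}(L)-y_t\|^2)$, and bound the two pieces separately. The first piece is a pure regression regret, so \cref{thm:pols-reg} applies with $d=pH$ and comparator $M=M_L$. This gives $\lambda\|M_L\|_F^2 + \Delta_{\max}^2\, pH \log(1+\sum_t\|z_t\|^2/(\lambda pH))$. The second piece is the truncation penalty, which I will control with \cref{lem:luen-trunc,lem:luen-pred}.

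\textbf{Regression term.} Two quantities need bounding.

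First, $\|M_L\|_F^2 = \sum_{k=0}^{H-1}\|CA_L^kL\|_F^2$. Since each block $CA_L^kL$ is $p\times p$, we have $\|CA_L^kL\|_F^2 \le p\|CA_L^kL\|^2 \le p\|C\|^2\kappa^2(1-\gamma)^{2k}\kappa^2$, using $\|A_L^k\|\le\kappa(1-\gamma)^k$ and $\|L\|\le\kappa$. Summing the geometric series gives $\sum_k(1-\gamma)^{2k}\le 1/(1-(1-\gamma)^2)\le 1/\gamma$. Hence $\lambda\|M_L\|_F^2\le \lambda\|C\|^2\kappa^4p/\gamma$.

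Second, for the log argument, \cref{lem:poly-growth}(iii) gives $\|z_t\|^2=\sum_{k=1}^H\|y_{t-k}\|^2\le H C_y^2(1+t)^{2r}$. Summing over $t\le T$ yields $\sum_t\|z_t\|^2\le HC_y^2T(1+T)^{2r}\le HC_y^2(1+T)^{2r+1}$. The factor $H$ cancels against the $d=pH$ in the denominator, which produces exactly $\log(1+C_y^2(1+T)^{2r+1}/(\lambda p))$.

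\textbf{Truncation term.} Write $a_t=\hat y_t^{\mathrm{LB}}(L)-y_t$ and $b_t=M_Lz_t-\hat y_t^{\mathrm{LB}}(L)$. Then $\|a_t+b_t\|^2-\|a_t\|^2 = \|b_t\|^2+2\langle a_t,b_t\rangle\le\|b_t\|^2+2\|a_t\|\|b_t\|$. By \cref{lem:luen-pred}, $\|a_t\|\le C_{\mathrm{pred}}$. With the stated $H$, \cref{lem:luen-trunc} gives $\sum_t\|b_t\|\le C_{\mathrm{trun}}$ and $\sum_t\|b_t\|^2\le C_{\mathrm{trun}}^2$. The sum is therefore at most $C_{\mathrm{trun}}^2+2C_{\mathrm{pred}}C_{\mathrm{trun}}$.

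\textbf{Assembly.} Adding the two pieces gives the main bound. For the corollary, substitute $H=O(\gamma^{-1}(r+1)\log T)$ and observe that the log factor is $O((2r+1)\log T+\log(C_y^2/(\lambda p)))$. Taking, say, $\lambda=1$, the product $H\cdot\log(\cdot)$ is $\operatorname{poly}(r,\gamma^{-1},C_y,p)\log^2T$. All remaining terms are $T$-independent polynomials in the listed constants (with $C_y$ involving $\kappa_A, C_w, C_v$). The polylog claim follows in the same way.

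The only delicate step is bookkeeping: the Frobenius-to-operator conversion and the geometric sum for $\|M_L\|_F^2$, and checking that the $H$ inside $\sum\|z_t\|^2$ cancels the $d$ in the denominator. Everything else is direct invocation of the earlier results, so I expect no genuine obstacle.
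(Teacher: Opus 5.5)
Your proposal is correct and follows the paper's proof essentially step for step: the same split into $\Reg_T(M_L)$ plus a truncation penalty, the same bounds $\|M_L\|_F^2 \le \|C\|^2\kappa^4 p/\gamma$ and $\sum_t\|z_t\|^2 \le HC_y^2(1+T)^{2r+1}$ (with $H$ cancelling against $d=pH$), and the same cross-term estimate via \cref{lem:luen-trunc,lem:luen-pred}. Your explicit treatment of the $\log^2 T$ statement, fixing a $T$-independent $\lambda$ such as $\lambda=1$ and expanding $H\cdot\log(\cdot)$, supplies bookkeeping that the paper leaves implicit.
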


\looseness=-1
The proof is given in Appendix~\ref{app:total-regret}. 
\cref{thm:total-regret} holds uniformly over the class $L \in \mathcal{L}_{\kappa,\gamma}$ and therefore applies to the best comparator in hindsight. 
What remains is a single question: \textit{how fast does $\Delta_{\max}$ grow with $T$?} 
\cref{sec:hint_design} answers it by constructing hints $\tilde{y}_t$ with $\Delta_{\max}$ independent of $T$.
\vspace{1mm}
\looseness=-1
\begin{remark}[Comparison with Prior Work]
\cref{thm:total-regret} extends the logarithmic-regret regime of online PO-LDS prediction from stochastic to adversarial noise.
In the stochastic setting, standard OLS achieves logarithmic regret against the single Kalman predictor~\cite{ghai2020no, tsiamis2022online, qian2025model}; our bound holds for every bounded disturbance sequence and competes against the full Luenberger class. 
The only prior adversarial result~\cite[Thm.\ 3]{ghai2020no} gives sublinear regret for OLS, restricted to the fully observed case $y_t = x_t$ with $T$-dependent regularization; our guarantee in \cref{thm:total-regret} is polylogarithmic, applies to partially observed systems, and holds for any fixed $T$-independent regularization $\lambda > 0$.
\end{remark}

\vspace{-1mm}
\section{Design of Predictive Hints}
\label{sec:hint_design}
\vspace{-1mm}

In this section, we design predictive hints $\tilde{y}_t$ in two cases: model-based hints from a closed-loop Luenberger predictor, and model-free hints from open-loop polynomial filters.

\subsection{Model-Based Hints via Luenberger Predictors}
\label{sec:hint_model}

When $(A, C)$ are known, a natural predictive hint can be constructed by a Luenberger predictor associated with \textit{any} stabilizing gain $\tilde{L}$ such that $A_{\tilde{L}} := A - \tilde{L}C$ is $(\tilde\kappa, \tilde\gamma)$-strongly stable. Specifically,
\begin{equation}\label{eq:hint-observer}
    \tilde{y}_t := C\hat{x}_t^{\mathrm{LB}}(\tilde{L}),
    \quad
    \hat{x}_{t+1}^{\mathrm{LB}}
    = A_{\tilde{L}}\,\hat{x}_t^{\mathrm{LB}} + \tilde{L}\,y_t,
    \quad
    \hat{x}_0^{\mathrm{LB}} = 0.
\end{equation}
We note that, first, the hint gain $\tilde{L}$ is decoupled from the comparator class.
It need not belong to $\mathcal{L}_{\kappa,\gamma}$ and may be conservative (with small $\tilde\gamma$); FM-POLS still adapts online to compete with the best $L \in \mathcal{L}_{\kappa,\gamma}$ in hindsight. 
Second, the hint has constant memory. A single state vector $\hat{x}_t^{\mathrm{LB}}(\tilde{L}) \in \mathbb{R}^n$ suffices for all $t$, in contrast to the $H$-tap predictor used by the learner.

The hint residual $\Delta_t = y_t - \tilde{y}_t$ obeys the same closed-loop error dynamics \cref{eq:luen-error-dynamics} as the Luenberger comparator analyzed in \cref{lem:luen-pred}, with $A_{L}$ in place of $A_{\tilde{L}}$.
Strong stability of $A_{\tilde{L}}$ therefore yields a uniform bound on $\Delta_t$, and combining with \cref{thm:total-regret} directly gives polylogarithmic regret.

\vspace{1mm}
\begin{corollary}[Polylogarithmic Regret for Known Systems]
\label{cor:known-regret}
Under \cref{assum:main}, suppose FM-POLS is run with the memory length $H$ from \cref{thm:total-regret} and the predictive hints $\tilde{y}_t$ in \cref{eq:hint-observer} for any $\tilde{L}$ such that $A_{\tilde{L}}$ is $(\tilde\kappa, \tilde\gamma)$-strongly stable. Then $\Delta_{\max}$ is bounded uniformly in $T$:
\[
    \Delta_{\max}
    \le \tilde\gamma^{-1}{\|C\|\tilde\kappa(C_w + \tilde\kappa\,C_v)} + C_v,
\]
and for any $\lambda>0$ and any $L \in \mathcal{L}_{\kappa,\gamma}$,
\[
    \Reg_T^{\mathrm{LB}}(L)
    \le \operatorname{poly} \left(
    \|C\|, p, r, \kappa_A, \kappa, \tilde\kappa,
    \gamma^{-1}, \tilde{\gamma}^{-1},
    C_w, C_v
    \right) \cdot \log^2 T.
\]
\end{corollary}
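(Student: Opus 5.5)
The proof has two steps. First we bound the hint residual uniformly in $T$ by reusing the argument of \cref{lem:luen-pred}. Then we substitute this bound into \cref{thm:total-regret}.

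\textbf{Residual bound.} Define $\tilde e_t := x_t - \hat x_t^{\mathrm{LB}}(\tilde L)$. Subtracting the hint recursion \cref{eq:hint-observer} from \cref{eq:po-lds} and using $y_t = Cx_t + v_t$ gives
\[
\tilde e_{t+1} = A_{\tilde L}\tilde e_t + w_t - \tilde L v_t, \qquad \tilde e_0 = 0,
\]
which is exactly \cref{eq:luen-error-dynamics} with $L$ replaced by $\tilde L$. The residual is then $\Delta_t = y_t - \tilde y_t = C\tilde e_t + v_t$. Unrolling the recursion gives $\tilde e_t = \sum_{k=0}^{t-1} A_{\tilde L}^{k}(w_{t-1-k} - \tilde L v_{t-1-k})$. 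Strong stability gives $\|A_{\tilde L}^k\| \le \tilde\kappa(1-\tilde\gamma)^k$, and the geometric sum is at most $\tilde\gamma^{-1}$. Hence $\|\tilde e_t\| \le \tilde\gamma^{-1}\tilde\kappa(C_w + \|\tilde L\| C_v)$.

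The one point needing care is a bound $\|\tilde L\| \le \tilde\kappa$. The comparator class $\mathcal{L}_{\kappa,\gamma}$ imposes $\|L\| \le \kappa$ explicitly, but the corollary assumes only strong stability of $A_{\tilde L}$. I would read the corollary as implicitly assuming $\|\tilde L\| \le \tilde\kappa$, matching the convention of $\mathcal{L}_{\kappa,\gamma}$, and state this explicitly. If it is not intended, one replaces $\tilde\kappa C_v$ by $\|\tilde L\| C_v$, and the final polynomial dependence is unchanged up to including $\|\tilde L\|$. With this convention, the triangle inequality yields
\[
\Delta_{\max} \le \tilde\gamma^{-1}\|C\|\tilde\kappa(C_w + \tilde\kappa C_v) + C_v,
\]
which is independent of $T$. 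Apart from this convention there is no genuine obstacle, since the argument is a verbatim copy of \cref{lem:luen-pred}.

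\textbf{Regret.} Now plug the bound on $\Delta_{\max}$ into \cref{thm:total-regret} with $H = \lceil\gamma^{-1}(r+1)\log(1+T)\rceil$. The second term is $\Delta_{\max}^2\, pH \log\bigl(1 + C_y^2(1+T)^{2r+1}/(\lambda p)\bigr)$. Here $H = \mathcal O(\gamma^{-1}(r+1)\log T)$, and the logarithm is $\mathcal O\bigl((2r+1)\log T + \log(1 + C_y^2/(\lambda p))\bigr)$, so their product is $\mathcal O(\log^2 T)$. The remaining terms are $T$-independent:
\begin{itemize}
\item $\lambda\|C\|^2\kappa^4 p/\gamma$, with $\lambda$ fixed;
\item $C_{\mathrm{trun}}^2$;
\item $2C_{\mathrm{pred}}C_{\mathrm{trun}}$.
\end{itemize}
Each is polynomial in the listed parameters, because $C_y = \|C\|\kappa_A C_w + C_v$ by \cref{lem:poly-growth}. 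Collecting all terms gives the stated $\operatorname{poly}(\cdot)\log^2 T$ bound, uniformly over $L \in \mathcal{L}_{\kappa,\gamma}$. The dependence on $\lambda$ can be absorbed by fixing, say, $\lambda = 1$.
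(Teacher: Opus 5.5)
Your proof is correct and follows the paper's own argument: the hint residual obeys the error recursion of \cref{lem:luen-pred} with $\tilde L$ in place of $L$, and the resulting $T$-independent bound on $\Delta_{\max}$ is substituted into \cref{thm:total-regret}. Your caveat is also a real gap in the statement. Strong stability of $A_{\tilde L}$ alone does not give $\|\tilde L\|\le\tilde\kappa$. For example, the scalar system $A=1$, $C=0.1$, $\tilde L=10$ has $A_{\tilde L}=0$, which is $(1,1)$-strongly stable, yet it yields a residual of $0.1C_w+2C_v$, exceeding the stated bound. So that hypothesis, or your replacement of $\tilde\kappa C_v$ by $\|\tilde L\|C_v$, is needed, and reading $\lambda$ as a fixed constant is likewise required for a $\lambda$-free polynomial.
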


\vspace{1.5mm}
\looseness=-1
\begin{remark}[Fully Observed State Prediction]
\label{rem:warmup}
When $C = I$ and $v_t = 0$, the task reduces to predicting $x_{t}$ from $x_{1:t-1}$.
One can take $z_t = x_{t-1}$ and the hint $\tilde{y}_t = Ax_{t-1}$ (the degenerate case of~\cref{eq:hint-observer} with $\tilde{L} = A$), resulting in $\Delta_t = w_{t-1}$ and $\Delta_{\max} \le C_w$. 
Polylogarithmic regret then follows directly from \cref{thm:pols-reg}. 
This strengthens the sublinear OLS regret bound of~\cite{ghai2020no} under perfect knowledge of $A$.
\end{remark}

\subsection{Model-Free Hints via Polynomial Filtering} \label{sec:hint_free}
When $(A, C)$ are unknown, we construct the predictive hint $\tilde{y}_t$ as a linear combination of recent observations.
Given coefficients $c_0 = 1$ and $c_1, \ldots, c_m \in \mathbb{R}$, the associated \textit{polynomial filter} is $q(z):=\sum_{i=0}^m c_i\,z^{m-i}$, with $q(A) = \sum_{i=0}^m c_i A^{m-i}$. 
We adopt the hint
\begin{equation}\label{eq:polynomial-hint}
    \tilde{y}_t := -\sum_{i=1}^m c_i\,y_{t-i}\quad 
    \Longrightarrow \quad
    \Delta_t := y_t-\tilde{y}_t = \sum_{i=0}^m c_i\,y_{t-i},
\end{equation}
so the polynomial coefficients become residual weights on the observations.
The residual $\Delta_t$ admits the following key decomposition; see proof in Appendix \ref{app:residual-decomp}.

\begin{lemma}[Residual Decomposition]
\label{lem:residual-decomp}
Under the predictive hint $\tilde{y}_t$ in \cref{eq:polynomial-hint}, for all $t \ge 1$,
\begin{equation*}\label{eq:residual-decomp}
    \Delta_t
    = \sum_{i=0}^m c_i v_{t-i}
    + C\sum_{s=0}^{m-1}\Bigl(\sum_{i=0}^s c_iA^{s-i}\Bigr) w_{t-1-s}
    + C{\sum_{s=0}^{t-m-1} q(A) A^{s} w_{t-m-1-s}}.
\end{equation*}
\end{lemma}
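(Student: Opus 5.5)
The plan is to substitute the unrolled state into $\Delta_t=\sum_{i=0}^m c_i y_{t-i}$ and reorganize the resulting double sum by the index of the disturbance. Since $x_0=0$, we have $x_\tau=\sum_{j=0}^{\tau-1}A^{j}w_{\tau-1-j}$. Using the convention $w_s=0$ for $s<0$, this can be written with the sum extending over all $j\ge 0$. It then holds for every integer $\tau$, including $\tau\le 0$, where $y_\tau=0$ and $v_\tau=0$ match $x_\tau=0$. So for every $i$,
\[
y_{t-i}=v_{t-i}+C\sum_{j\ge 0}A^{j}w_{t-i-1-j}.
\]
Summing against $c_i$ immediately produces the first term $\sum_{i=0}^m c_i v_{t-i}$. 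It remains to handle $C\sum_{i=0}^m c_i\sum_{j\ge0}A^j w_{t-i-1-j}$.

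Next, I reindex by $s:=i+j$, so that each disturbance appears as $w_{t-1-s}$ with $s\ge 0$. Its coefficient is $\sum_{i=0}^{\min(s,m)} c_i A^{s-i}$. I then split the range of $s$ into two parts:
\begin{itemize}
\item For $0\le s\le m-1$ the constraint $i\le s$ is binding. The coefficient is the partial sum $\sum_{i=0}^s c_iA^{s-i}$, which gives the middle term exactly.
\item For $s\ge m$ all indices $i=0,\dots,m$ contribute. The coefficient is $\sum_{i=0}^m c_iA^{s-i}=q(A)A^{s-m}$, because $q(A)=\sum_i c_iA^{m-i}$ and powers of $A$ commute.
\end{itemize}
Substituting $s'=s-m$ turns the disturbance index into $w_{t-1-m-s'}$ with coefficient $q(A)A^{s'}$. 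This is nonzero only when $t-m-1-s'\ge 0$, i.e., $s'\le t-m-1$. That gives the last term with the stated upper limit; the sum is empty if $t\le m$.

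The main obstacle is purely bookkeeping at the boundaries. Two points need care. First, the conventions $y_s=0$, $v_s=0$ for $s\le0$ and $w_s=0$ for $s<0$ must make the unrolled formula valid for early times $t-i\le 0$. Second, the truncated sums must remain correct, with no extra terms, when $t\le m$. I would check both explicitly so that the identity holds for every $t\ge1$ without a case split. Beyond that, the argument is the interchange of a finite double sum and the commutation of $q(A)$ with $A^{s'}$.
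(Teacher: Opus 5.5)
Your proposal is correct and follows the paper's proof essentially step for step: unroll $y_\tau$ from $x_0=0$, substitute into $\Delta_t=\sum_i c_i y_{t-i}$, reindex by $s=i+k$, split at $s=m$, and use $\sum_{i=0}^m c_iA^{s-i}=q(A)A^{s-m}$. Your extension of the inner sums to all $j\ge 0$ via the zero conventions is just slightly more explicit boundary bookkeeping for the case $t\le m$ than the paper's finite-limit version.
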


For fixed $m$, the first two terms form a \textit{short-memory} contribution involving at most $m+1$ bounded disturbances, and contribute $\mathcal{O}(\|q\|_1(C_v + \|C\|\kappa_A C_w))$ with $\|q\|_1 := \sum_{i=0}^m |c_i|$.
The technical challenge lies in the last \textit{infinite-memory} term: when $\rho(A) = 1$, the series $\sum_s \|A^s\|$ diverges, so controlling it requires choosing $q$ such that $q(A)$ annihilates the marginal modes of $A$.\footnote{Polylogarithmic growth of the infinite sum suffices for polylog regret via~\cref{thm:total-regret}; we here focus on exact annihilation.}

\textbf{Exact Spectral Annihilation.}
If the spectrum of $A$ were known, the characteristic polynomial $q_{\mathrm{ch}}$ would satisfy $q_{\mathrm{ch}}(A) = 0$ by Cayley--Hamilton, yielding a bounded residual for any LDS with $\rho(A)\le1$.
Such spectral knowledge is rarely available, and even then $\|q_{\mathrm{ch}}\|_1$ can be exponential in $n$. The useful question is therefore: \textit{for which systems does a fixed, low-order polynomial annihilate the marginal modes without spectral knowledge?} We answer this for real-spectrum systems below.

\subsubsection{The 2-Lag Filter for Real-Spectrum Diagonalizable Systems}
\label{sec:2lag}

For diagonalizable $A$ with real eigenvalues in $[-1, 1]$ (including symmetric $A$ as a special case), the only possible marginal eigenvalues are $\pm1$.
Both are roots of the universal polynomial $q(z) = z^2 - 1$, which therefore annihilates the marginal modes of $A$.
The corresponding hint is the \textit{2-lag filter} $\tilde{y}_t = y_{t-2}$, which uses no system knowledge.

\begin{theorem}[Bounded Residual under $2$-Lag Hints]
\label{thm:2lag}
Suppose \cref{assum:main} holds and $A\in\mathbb{R}^{n\times n}$ is diagonalizable ($r=1$) with real eigenvalues in $[-1, 1]$.
Under the 2-lag predictive hint $\tilde{y}_t = y_{t-2}$,
\begin{equation*}
    \Delta_{\max}
    \le 2C_v+(1+\kappa_A+2n\kappa_A)\|C\|C_w.
\end{equation*}
\end{theorem}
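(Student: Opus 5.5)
We apply \cref{lem:residual-decomp} with $m=2$ and coefficients $c_0=1$, $c_1=0$, $c_2=-1$, so that $q(z)=z^2-1$ and $\Delta_t = y_t - y_{t-2}$. The decomposition then reads
\[
\Delta_t = v_t - v_{t-2} + C w_{t-1} + C A w_{t-2} + C\sum_{s=0}^{t-3}(A^2-I)A^s w_{t-3-s},
\]
using that the inner sums are $\sum_{i=0}^0 c_iA^{-i}=I$ for $s=0$ and $A + c_1 I = A$ for $s=1$. We bound the three groups separately.

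\textbf{Short-memory terms.} The noise part gives $\|v_t - v_{t-2}\|\le 2C_v$. The process-noise part gives $\|Cw_{t-1}\|+\|CAw_{t-2}\|\le \|C\|C_w(1+\|A\|)$. By \cref{lem:poly-growth}(i) with $r=1$ we have $\|A\|\le\kappa_A$, so this is at most $(1+\kappa_A)\|C\|C_w$. Together these account for the terms $2C_v+(1+\kappa_A)\|C\|C_w$ in the claimed bound.

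\textbf{Infinite-memory term.} This is the main step. Write $A=S\Lambda S^{-1}$ with $\Lambda=\mathrm{diag}(\lambda_1,\dots,\lambda_n)$ and $\lambda_j\in[-1,1]$. Then
\[
\sum_{s\ge0}(A^2-I)A^s = S\,\mathrm{diag}\Bigl(\textstyle\sum_{s\ge 0}(\lambda_j^2-1)\lambda_j^s\Bigr)S^{-1},
\]
and the triangle inequality over $s$ gives
\[
\Bigl\|\sum_{s}(A^2-I)A^s w_{t-3-s}\Bigr\| \le \kappa_A C_w \sum_{s}\|(\Lambda^2-I)\Lambda^s\|.
\]
Bounding the operator norm of the diagonal matrix by the sum over coordinates, $\|(\Lambda^2-I)\Lambda^s\|\le\sum_j |\lambda_j^2-1|\,|\lambda_j|^s$, reduces everything to a scalar series per eigenvalue. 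If $|\lambda_j|=1$, the factor $\lambda_j^2-1$ vanishes and the term is $0$: this is exactly the annihilation of the marginal modes. If $|\lambda_j|<1$, then
\[
\sum_{s\ge0}(1-\lambda_j^2)|\lambda_j|^s = \frac{1-\lambda_j^2}{1-|\lambda_j|} = 1+|\lambda_j|\le 2.
\]
Summing over $j$ gives at most $2n$, so the infinite-memory term is bounded by $2n\kappa_A\|C\|C_w$. Adding this to the short-memory bounds yields the stated $\Delta_{\max}$.

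\textbf{Expected obstacle.} The delicate point is making the cancellation uniform in $t$ even for eigenvalues arbitrarily close to $\pm1$. The per-eigenvalue bound $1+|\lambda_j|$ handles this cleanly, with no dependence on the spectral gap.

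The remaining care is in applying \cref{lem:residual-decomp} at small $t$ under the zero conventions ($y_s=0$ for $s\le0$, $w_s=0$ for $s<0$, $v_s=0$ for $s\le0$). For $t\le2$ the infinite-memory sum is empty. The short-memory terms then only involve disturbances that are either genuine or zero by convention, so the same bounds hold.
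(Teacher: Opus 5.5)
Your proposal is correct and follows the paper's proof essentially step for step. The paper applies the residual decomposition with $q(z)=z^2-1$ and bounds the measurement and short-memory terms by $2C_v$ and $(1+\kappa_A)\|C\|C_w$. It then controls the infinite-memory term through the same diagonalization and per-eigenvalue estimate $(1-\lambda_j^2)/(1-|\lambda_j|)=1+|\lambda_j|\le 2$ (isolated in the paper as the filtered Jordan bound lemma), which gives $2n\kappa_A\|C\|C_w$.
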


The proof relies on the following key bound, which is interesting by itself. 
The series $\sum_s \|A^s\|$ diverges when $\rho(A) \!=\! 1$, but the prefactor $(A^2-I)$ removes the marginal modes and makes $\sum_s \|(A^2-I)A^s\|$ summable. The proof is provided in Appendix~\ref{app:filtered-jordan-proof}.

\begin{lemma}[Filtered Jordan Bound: Diagonalizable Case]\label{lem:2lag-operator}
Under the assumptions of \cref{thm:2lag},
\[
\textstyle\sum_{s=0}^{\infty}\|(A^2 - I)\,A^s\| \le 2n\kappa_A.
\]
\end{lemma}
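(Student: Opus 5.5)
We will diagonalize and reduce the matrix sum to scalar geometric sums. By the assumptions of \cref{thm:2lag}, $A = S\Lambda S^{-1}$ with $\Lambda = \mathrm{diag}(\lambda_1,\ldots,\lambda_n)$, each $\lambda_j\in[-1,1]$ real, and $\kappa_A = \|S\|\|S^{-1}\|$. Then
\[
(A^2-I)A^s = S\,(\Lambda^2-I)\Lambda^s\,S^{-1}.
\]
Write the diagonal matrix $(\Lambda^2-I)\Lambda^s$ as $\sum_{j=1}^n (\lambda_j^2-1)\lambda_j^s\, e_je_j^\tr$. By the triangle inequality and $\|S e_j e_j^\tr S^{-1}\|\le \kappa_A$,
\[
\|(A^2-I)A^s\| \le \kappa_A \sum_{j=1}^n |1-\lambda_j^2|\,|\lambda_j|^s .
\]
An alternative is to bound the operator norm of the diagonal matrix directly by $\max_j$. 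That gives the sharper constant $\kappa_A$, but exchanging sum and max is awkward. The per-eigenvalue split keeps the argument clean and explains the factor $n$ in the statement.

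Next, exchange the order of summation (all terms are nonnegative) and bound each scalar series separately. There are two cases.

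\textbf{Case $|\lambda_j|=1$.} The prefactor $1-\lambda_j^2$ vanishes, so the whole series contributes $0$. This is exactly the annihilation of the marginal modes $\pm1$ by $q(z)=z^2-1$.

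\textbf{Case $|\lambda_j|<1$.} The geometric series gives
\[
\sum_{s\ge0} |1-\lambda_j^2|\,|\lambda_j|^s = \frac{(1-|\lambda_j|)(1+|\lambda_j|)}{1-|\lambda_j|} = 1+|\lambda_j| \le 2 .
\]
This cancellation is the crux. The factor $1-\lambda_j^2$ contains $1-|\lambda_j|$, which exactly offsets the blow-up of $1/(1-|\lambda_j|)$ as $|\lambda_j|\to1$, so the bound is uniform over the whole spectrum.

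Summing over $j$ then gives $\sum_{s=0}^\infty \|(A^2-I)A^s\| \le \kappa_A\cdot n\cdot 2 = 2n\kappa_A$, as claimed.

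The only delicate point is the uniformity just described: eigenvalues strictly inside the unit disk but arbitrarily close to $\pm1$ must not cause divergence. The factorization $1-\lambda^2=(1-|\lambda|)(1+|\lambda|)$ handles this. Realness of the spectrum is essential here. For complex $\lambda$ with $|\lambda|$ near $1$ but $\lambda$ far from $\pm1$, the prefactor $|1-\lambda^2|$ does not vanish, and the series would be unbounded. This is consistent with the paper's remark that complex marginal modes remain open.
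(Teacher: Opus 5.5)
Your proof is correct and is essentially the paper's argument: the paper bounds $\|(J^2-I)J^s\|$ by $\max_i|(\lambda_i^2-1)\lambda_i^s|$ and then bounds the max by the sum over eigenvalues, which gives the same inequality as your rank-one triangle-inequality step, and it then does the same two-case evaluation, getting $1+|\lambda_i|\le 2$ or $0$. One correction to your aside: the max route does not give a constant free of $n$, because $\sum_s\max_j$ is in general larger than $\max_j\sum_s$. For diagonal $A$ with eigenvalues $1-2^{-j}$, $j=1,\dots,n$, one has $\kappa_A=1$, yet each dyadic block $s\in[2^j,2^{j+1})$ contributes a constant, so $\sum_s\|(A^2-I)A^s\|=\Omega(n)$. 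Hence no bound of the form $c\,\kappa_A$ holds, and the factor $n$ is not an artifact of your splitting.
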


\vspace{-3mm}

\begin{proof}[Proof of \cref{thm:2lag}]
With $q(z) = z^2 - 1$ and $m=2$, \cref{lem:residual-decomp} yields
\begin{equation*}
\Delta_t = \underbrace{(v_t - v_{t-2})}_{\mathtt{measurement~noise}}
+ \underbrace{C w_{t-1} + C A w_{t-2}}_{\mathtt{short~memory}}
+ \underbrace{C \textstyle\sum_{s=0}^{t-3} (A^2 - I) A^s w_{t-3-s}}_{\mathtt{infinite~memory}}.
\end{equation*}
Clearly, the magnitudes of the first two terms are at most $2C_v$ and $\|C\|(1 + \kappa_A)C_w$, respectively. 
For the infinite-memory term, applying the triangle inequality and \cref{lem:2lag-operator} gives 
\[
\| C\textstyle\sum_{s=0}^{t-3} (A^2 - I) A^s w_{t-3-s} \| \le \|C\|C_w  \textstyle\sum_{s=0}^{\infty} \|(A^2 - I) A^s \| \le 2n\kappa_A \|C\|C_w.
\]
Combining all three bounds completes the proof.
\end{proof}

\subsubsection{High-Order Differencing for Jordan Blocks}
\label{sec:jordan-hints}

We now consider a class of non-symmetric matrices $A$ with non-trivial Jordan structure. 
Suppose $A$ has real spectrum in $[-1, 1]$ with Jordan blocks of size at most $r\ge2$.
The $2$-lag filter extends to this case by raising the multiplicity of the annihilating polynomial. Specifically, we consider\footnote{This requires knowledge of $r$; an upper bound $\bar r \ge r$ also works but may increase the noise amplification exponentially.}
\begin{equation}\label{eq:high-order-poly}
q_r(z) = (z^2 - 1)^r= \sum_{k=0}^r(-1)^k\binom{r}{k}z^{2(r-k)},
\qquad
\tilde{y}_t = \sum_{k=1}^r (-1)^{k+1} \binom{r}{k} y_{t-2k}.
\end{equation}
In the notation of \cref{eq:polynomial-hint}, $q_r$ has degree $2r$ with coefficients $c_{2k} = (-1)^k \binom{r}{k}$ for $0\le k \le r$, giving $\|q_r\|_1 = 2^r$. The factor $2^r$ represents the noise amplification incurred by higher-order differencing.

\looseness=-1
\begin{theorem}[Bounded Residual under High-Order Differencing]
\label{thm:jordan-hint}
Suppose \cref{assum:main} holds and $A$ has real eigenvalues in $[-1, 1]$. Under the polynomial hint \cref{eq:high-order-poly}, $\Delta_{\max}$ is bounded uniformly in $T$:
\begin{equation}\label{eq:jordan-bound}
    \Delta_{\max}
    \le 2^r C_v + (H_r + K_r n) \kappa_A \|C\| C_w,
\end{equation}
where $H_r$ and $K_r$ are constants depending only on $r$.
\end{theorem}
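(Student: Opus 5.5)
The argument follows the proof of \cref{thm:2lag}: apply \cref{lem:residual-decomp} with $q=q_r$ and $m=2r$, bound the three resulting terms separately, and isolate a Jordan-block analogue of \cref{lem:2lag-operator} for the infinite-memory term.

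\textbf{Measurement-noise term.} Since $\|q_r\|_1=2^r$, the term $\sum_{i=0}^{2r} c_i v_{t-i}$ has norm at most $2^rC_v$.

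\textbf{Short-memory term.} This is $C\sum_{s=0}^{2r-1}\bigl(\sum_{i=0}^s c_iA^{s-i}\bigr)w_{t-1-s}$. By \cref{lem:poly-growth}(i), $\|A^{s-i}\|\le\kappa_A(1+s)^{r-1}\le\kappa_A(2r)^{r-1}$, so each inner sum has norm at most $2^r\kappa_A(2r)^{r-1}$. The whole term is therefore at most $H_r\kappa_A\|C\|C_w$ with $H_r:=2r\cdot 2^r(2r)^{r-1}$, which depends only on $r$.

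\textbf{Infinite-memory term.} It suffices to prove
\[
\textstyle\sum_{s=0}^\infty\|(A^2-I)^rA^s\|\le K_r n\kappa_A,
\]
after which the triangle inequality gives a contribution of at most $K_rn\kappa_A\|C\|C_w$. Write $A=SJS^{-1}$, so that
\[
\|(A^2-I)^rA^s\|\le\kappa_A\max_j\|(J_j^2-I)^rJ_j^s\|.
\]
Summing over blocks costs at most a factor $n$, since there are at most $n$ blocks and the norm of a block-diagonal matrix is bounded by the sum of the block norms. So it is enough to bound a single Jordan block $J_j=\lambda I+N$ of size $k\le r$ with $\lambda\in[-1,1]$.

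\textbf{Case $|\lambda|=1$.} Here $J_j^2-I=2\lambda N+N^2=N(2\lambda I+N)$. Since $N^k=0$ and $k\le r$, we get $(J_j^2-I)^r=0$, so the block contributes nothing.

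\textbf{Case $|\lambda|<1$.} Expand
\[
J_j^s=\sum_{\ell<k}\binom{s}{\ell}\lambda^{s-\ell}N^\ell ,
\]
and write $(J_j^2-I)^r=\sum_\ell p_\ell(\lambda)N^\ell$, where $p_\ell$ is the $\ell$-th derivative-type coefficient of $(z^2-1)^r$ at $\lambda$. Then $p_\ell(\lambda)$ carries a factor $(1-\lambda^2)^{r-\ell}$, with the coefficients of $p_\ell$ bounded by a constant depending on $r$. The entries of $(J_j^2-I)^rJ_j^s$ are therefore combinations of terms of the form $(1-\lambda^2)^{r-\ell}\binom{s}{\ell'}|\lambda|^{s-\ell'}$ with $\ell+\ell'<k\le r$. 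Summing over $s$ and using
\[
\textstyle\sum_s\binom{s}{\ell'}|\lambda|^{s-\ell'}=(1-|\lambda|)^{-\ell'-1},
\]
each term is bounded by a constant times $(1-\lambda^2)^{r-\ell}(1-|\lambda|)^{-\ell'-1}$. Because $1-\lambda^2\le 2(1-|\lambda|)$ and $r-\ell\ge \ell'+1$, this is at most $2^{r}$ times a constant depending on $r$, uniformly in $\lambda$. Bounding the operator norm by the entrywise sum of at most $r^2$ entries yields a constant $K_r$ depending only on $r$.

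\textbf{Main obstacle.} The delicate step is the uniform-in-$\lambda$ bound as $|\lambda|\to1$. One must check carefully that the vanishing order of $(z^2-1)^r$ at $\pm1$ exactly compensates the polynomial growth $\binom{s}{\ell'}$ for every pair $(\ell,\ell')$.

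A cleaner alternative is to use $q_r(J_j)=\frac{1}{2\pi i}\oint q_r(z)(zI-J_j)^{-1}\,dz$, or to apply derivatives $\frac{d^\ell}{d\lambda^\ell}$ to the scalar bound $\sum_s|\lambda^2-1|^r|\lambda|^s$. This scalar bound is $\le 2^r(1-|\lambda|)^{r-1}$, uniformly bounded since $r\ge1$. I would attempt the direct entrywise computation first and fall back on this alternative if the bookkeeping becomes heavy.
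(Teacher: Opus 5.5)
Your proposal is correct and follows essentially the same route as the paper. It uses the same three-term split from \cref{lem:residual-decomp}, the same short-memory constant $H_r = 2r\cdot 2^r(2r)^{r-1}=(4r)^r$, and the same block-wise argument for \cref{lem:high-order-operator}. Your product of the $N$-expansions of $(J_j^2-I)^r$ and $J_j^s$ is exactly the paper's Taylor/Leibniz expansion of $f_s(z)=(z^2-1)^r z^s$, and your condition $r-\ell\ge \ell'+1$ is the paper's cancellation $\delta^{r-\ell-1}\le 1$.
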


The proof of \cref{thm:jordan-hint} follows the same procedure as \cref{thm:2lag}; we defer it to Appendix \ref{app:high-order-thm-proof}.
The technical core is the following bound, a generalization of \cref{lem:2lag-operator}, whose proof is much more involved. We present the proof details in  Appendix \ref{app:high-order-proof}.

\begin{lemma}[Gap-Free Filtered Jordan Bound]
\label{lem:high-order-operator}
Under the assumptions of \cref{thm:jordan-hint}, we have
$
\sum_{s=0}^\infty \|(A^2 - I)^r A^s\| \le K_r n \kappa_A,
$
where $K_r$ depends only on $r$, independent of $A$'s spectral gap.
\end{lemma}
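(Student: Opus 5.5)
We reduce to a single real Jordan block and then bound a scalar series. Write $A=SJS^{-1}$ with $J=\mathrm{diag}(J_1,\dots,J_\ell)$. Each block $J_j=\lambda_j I+N_j$ has $\lambda_j\in[-1,1]$ real, $N_j$ nilpotent, and size $r_j\le r$. Then $(A^2-I)^rA^s=S\,(J^2-I)^rJ^s\,S^{-1}$, so
\[
\|(A^2-I)^rA^s\|\le\kappa_A\max_j\|(J_j^2-I)^rJ_j^s\|\le\kappa_A\sum_j\|(J_j^2-I)^rJ_j^s\|.
\]
Summing over $s$ and using $\ell\le n$, it suffices to show that for a single block $J=\lambda I+N$ of size $k\le r$ with $\lambda\in[-1,1]$,
\[
\sum_{s\ge0}\|(J^2-I)^rJ^s\|\le K_r .
\]
We will bound the operator norm by the sum of absolute values of the entries, i.e.\ by the coefficients of the powers $N^0,\dots,N^{k-1}$.

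Fix a block. Since $J$ is upper-triangular Toeplitz, $f(J)=\sum_{i<k}\frac{f^{(i)}(\lambda)}{i!}N^i$ for $f(z)=(z^2-1)^r z^s$. Using $\|N^i\|\le1$,
\[
\|(J^2-I)^rJ^s\|\le\sum_{i=0}^{k-1}\frac{|f_s^{(i)}(\lambda)|}{i!}.
\]
Apply Leibniz: $f_s^{(i)}=\sum_{a+b=i}\binom{i}{a}\,\partial^a(z^2-1)^r\,\partial^b z^s$. Here $\partial^b z^s=s^{\underline b}z^{s-b}$, and since $a\le i\le r-1$, $\partial^a(z^2-1)^r$ still contains the factor $(z^2-1)^{r-a}$. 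With $u:=|\lambda|\in[0,1]$, each term is at most
\[
C_r\,(1-u^2)^{r-a}\,s^{b}\,u^{s-b},\qquad b=i-a\le r-1-a .
\]
So everything reduces to the scalar sums $\sum_s s^b u^{s-b}(1-u)^{r-a}$ with $b\le r-1-a$.

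The central step, and the main obstacle, is bounding these sums uniformly in $u\in[0,1]$, with no dependence on the spectral gap $1-u$. The identity $\sum_{s}s^{b}u^{s}\lesssim_b (1-u)^{-(b+1)}$ handles it. At $u=1$ the prefactor $(1-u)^{r-a}$ vanishes, so the term contributes $0$; this is exactly the annihilation of the $\pm1$ modes. For $u<1$,
\[
(1-u)^{r-a}\sum_s s^b u^{s-b}\lesssim_r (1-u)^{r-a-b-1}\le 1,
\]
since $r-a-b-1\ge0$. This is where $q_r$ having multiplicity $r$ exactly matches the block size, and it is what makes the bound gap-free. The case $u$ small, where the factor $u^{-b}$ from $u^{s-b}$ is harmless because $s\ge b$ whenever $s^{\underline b}\ne0$, needs care. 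So does the case $\lambda<0$, which is handled by $|\lambda|$ and the symmetry of $z^2-1$. Summing over $i\le r-1$ and the finitely many $(a,b)$ gives a constant $K_r$ depending only on $r$. Collecting over blocks gives $\sum_s\|(A^2-I)^rA^s\|\le K_r n\kappa_A$.

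For $r=1$ this recovers \cref{lem:2lag-operator}, with $\sum_s(1-\lambda^2)|\lambda|^s\le 1+|\lambda|\le 2$.
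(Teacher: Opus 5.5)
Your proposal is correct and follows essentially the same route as the paper. Both arguments reduce to Jordan blocks, expand $f_s(J_\lambda)=\sum_{\ell<r} f_s^{(\ell)}(\lambda)N^\ell/\ell!$, and split $f_s$ by Leibniz into the factor $(z^2-1)^r$ (whose $a$-th derivative carries $\delta^{r-a}$) and $z^s$ (whose derivative sums give $\delta^{-(b+1)}$), then use $a+b\le r-1$ to obtain a gap-free constant. The only cosmetic difference is that you get the vanishing at $\lambda=\pm1$ directly from the factor $(1-u^2)^{r-a}$, whereas the paper uses the separate identity $(J_\lambda^2-I)^r=N^r(2\lambda I+N)^r=0$.
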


\looseness=-1
\begin{remark}[Exponential Dependence on $r$]
\label{rem:cost-of-diff}
The constants in \cref{eq:jordan-bound} satisfy $H_r \le 8^r$ and $K_r \le r(2e^2)^{r}$, both exponential in $r$; see Appendix~\ref{app:explicit-constants} for derivations.
The exponential dependence of $r$ in bounding $\Delta_{\max}$ for this filter appears unavoidable: the factor $\|q_r\|_1 =2^r$ enters the measurement-noise term directly, regardless of how tightly $H_r$ and $K_r$ are bounded. 
In practice, $r$ is not too large. 
\end{remark}


\textbf{Complex Marginal Eigenvalues.}
Our model-free hint constructions exploit that real marginal modes form a finite set $\{\pm 1\}$, so a single fixed polynomial annihilates all such modes.
However, complex marginal eigenvalues $e^{\pm j\theta}$, $\theta \in (0,\pi)$ form a continuum on the unit circle, so no polynomial with fixed coefficients can vanish everywhere; any system-dependent choice such as $q(z) = z^2 - 2\cos\theta \cdot z + 1$ (with roots $e^{\pm j\theta}$) requires knowing $\theta$.
The fundamental difficulty of complex marginal eigenvalues is recognized in model-free LDS prediction: \cite{hazan2017learning,rashidinejad2020slip,marsden2025universal} encounter analogous obstacles in different settings. The fully model-free complex-eigenvalue case remains open here as well; combining polynomial filtering with spectral estimation is a natural direction.

\vspace{-1.5mm}
\section{Experiments}
\label{sec:experiments}

\vspace{-2mm}

\looseness=-1
We report three experiments: showing logarithmic regret (Exp1), comparison with classical fixed-gain filters (Exp2), and parameter sensitivity (Exp3).
Appendix~\ref{app:additional-exp} contains additional experiments. 

\textbf{Noise model and pathwise evaluation.}
We adopt a unified noise model in this section:
\begin{equation}\label{eq:noise-model}
w_t = \bar w + a_w \sin(\omega_w t) + \epsilon_{w,t},
\qquad
v_t = \bar v + a_v \sin(\omega_v t) + \epsilon_{v,t},
\end{equation}
where $\bar w, \bar v$ are fixed biases, $a_w \sin(\omega_w t), a_v \sin(\omega_v t)$ are structured  sinusoidal components, and $\epsilon_{w,t} \!\sim\! \mathrm{Unif}([-C_w, C_w]^{n})$ and $\epsilon_{v,t} \!\sim\! \mathrm{Unif}([-C_v, C_v]^{p})$ are i.i.d. uniform components, all bounded so that \cref{assum:main}-(iv) is satisfied.
Specific amplitudes are chosen per experiment to highlight different regimes: Exp1 and Exp3 use a random-dominant setup; Exp2 uses a bias-sine-dominant setup.
Since our regret guarantee is \emph{pathwise}, holding for each realized noise sequence individually, we run each experiment as a single trial and benchmark against the running best-in-hindsight Luenberger gain at each $t$, $L^\star(t) := \argmin_{L} \sum_{s=1}^{t} \|y_s - \hat y_s^{\mathrm{LB}}(L)\|^2$, computed via grid search.

\begin{figure}[t]
    \centering
    \setlength{\abovedisplayskip}{1pt}
    \includegraphics[width=1\linewidth]{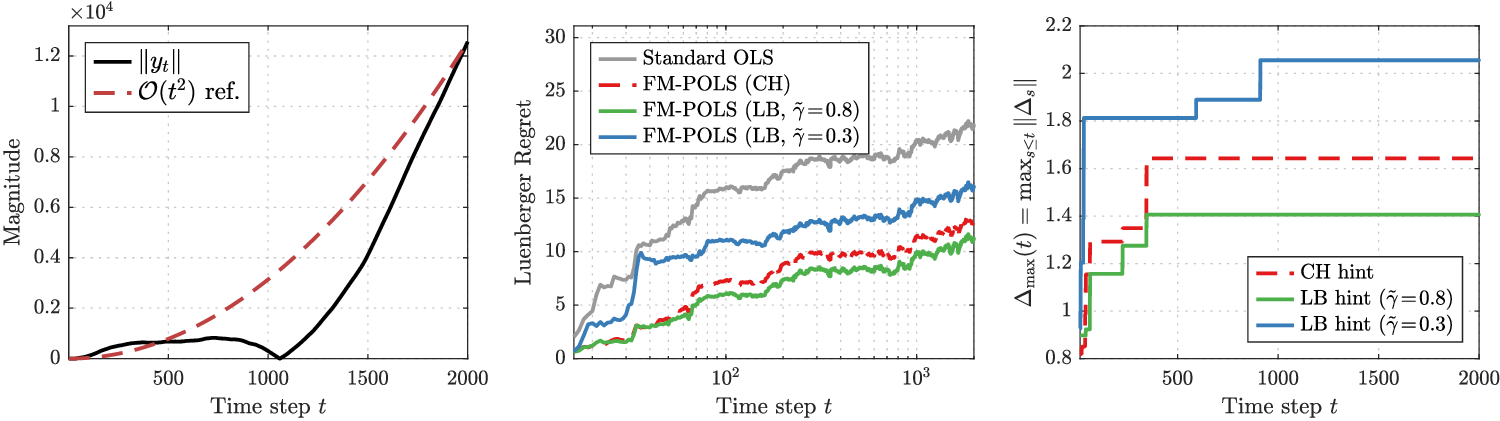}
    \caption{Exp1: Double integrator.
    Left: Polynomial signal growth $\|y_t\| = \mathcal{O}(t^2)$.
    Middle: Luenberger regret grows logarithmically as $\mathcal{O}(\log t)$ with fixed $H$ (log time axis).
    Right: Running max hint residual $\Delta_{\max}(t)$. \vspace{-2mm}}
    \label{fig:exp1}
\end{figure}

\begin{figure}[t]
    \centering
    \includegraphics[width=1\linewidth]{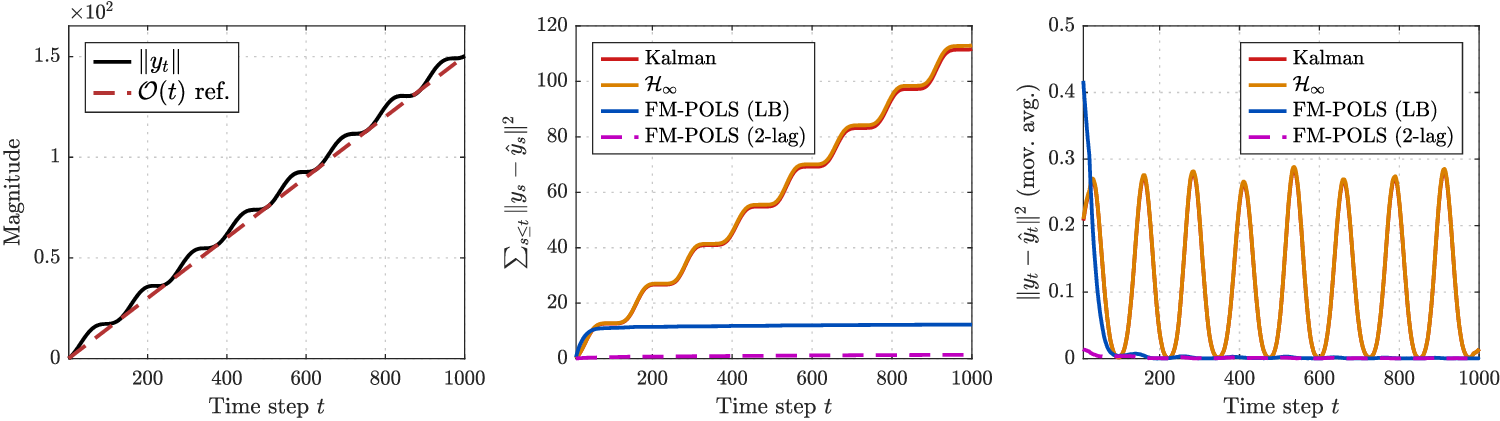}
    \caption{Exp2: Symmetric system.
    Left: Linear signal growth. Middle: Cumulative prediction loss. Right: Per-step loss.
    FM-POLS beats classical fixed-gain filters; the 2-lag hint outperforms the Luenberger hint.
    \vspace{-2mm}}
    \label{fig:exp2}
\end{figure}

\textbf{Exp1: Logarithmic Luenberger Regret.}
We consider the double integrator
$A \!=\! \left[\begin{smallmatrix}1 & 1\\0 & 1\end{smallmatrix}\right]$ ($r\!=\!2$), $C \!=\! [1,\,0]$,
with the noise model in \cref{eq:noise-model} using random-dominant amplitudes $C_w \!=\! C_v \!=\! 0.3$ and small bias and sinusoidal components. 
The fixed bias drives $\|y_t\|$ at the worst-case polynomial rate $\mathcal{O}(t^2)$ targeted by our analysis (\cref{fig:exp1}, left).
We compare standard OLS, FM-POLS with the Cayley--Hamilton (CH) hint $q_{\mathrm{ch}}(z) = (z-1)^2$, and FM-POLS with two Luenberger hints at $\tilde\gamma \in \{0.3, 0.8\}$, using $H = 15$, $\lambda = 1$, $T = 2000$.
\cref{fig:exp1}~(middle) shows the Luenberger regret growing linearly on the log time axis, consistent with $\mathcal{O}(H \log T)$ scaling for fixed $H$ (the $\mathcal{O}(\log^2 T)$ bound in \cref{thm:total-regret} arises when $H=\mathcal{O}(\log T)$).
The aggressive Luenberger hint ($\tilde\gamma = 0.8$) attains lower regret than the conservative one ($\tilde\gamma = 0.3$), reflecting the $\Delta_{\max}^2$ dependence in \cref{thm:total-regret};
\cref{fig:exp1}~(right) confirms that the aggressive gain attains smaller hint residual $\Delta_{\max}$.
The CH hint, which uses only the eigenvalue $\lambda = 1$, achieves comparable performance with Luenberger hints.

\textbf{Exp2: FM-POLS versus Classical Fixed-Gain Filters.}
We consider the symmetric system $A = [\begin{smallmatrix}0&1\\1&0\end{smallmatrix}]$ ($r\!=\!1$, with eigenvalues $\pm 1$),
$C = [1,\;0.5]$, under the noise model in \cref{eq:noise-model} with the bias-sine-dominant amplitudes ($\bar w \!=\! a_w \!=\! [0.1, 0.1]^\tr$ and $\bar v \!=\! a_v \!=\! 0.1$, while $C_w \!=\! C_v \!=\! 0.01$).
We compare two fixed-gain baselines, Kalman and $\mathcal{H}_\infty$ (designed via RMS-matched covariance), against FM-POLS with a model-based Luenberger hint and with the model-free 2-lag hint $\tilde{y}_t = y_{t-2}$, using
$H = 8$, $\lambda = 1$.
\cref{fig:exp2}~(middle) shows that Kalman and $\mathcal{H}_\infty$ incur large cumulative prediction loss that grows linearly, meaning their fixed gains fail to track the structured disturbance.
Both FM-POLS variants achieve much smaller cumulative loss. Remarkably, the 2-lag hint, which requires no knowledge of $(A, C)$, outperforms the model-based hint. \cref{fig:exp2}~(right) shows that FM-POLS per-step error decays to a small noise floor, while the fixed-gain filters maintain a persistent error.

\begin{wrapfigure}[14]{r}{0.58\linewidth}
    \vspace{-5.5mm}
    \centering
    \includegraphics[width=0.49\linewidth]{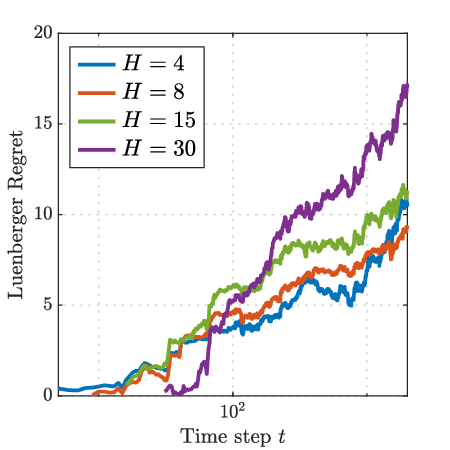} \hfill     \includegraphics[width=0.49\linewidth]{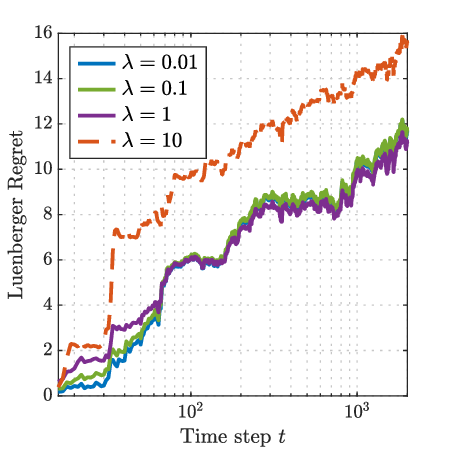}
    \caption{Exp3: Sensitivity of $H$ and $\lambda$. Left: Varying $H$ affects the regret constant. Right: Insensitivity to $\lambda$ within a moderate range; $\lambda = 10$ is too large, causing the bias term to dominate.}
    \label{fig:exp3-H}
\end{wrapfigure}

\textbf{Exp3: Sensitivity to $H$ and $\lambda$.}
Using the setup of Exp1 with the aggressive Luenberger hint, we first
vary the memory length $H$ with $\lambda = 1$ fixed.
\cref{fig:exp3-H} (left) shows that all values of $H$ produce linear curves on the log time axis, confirming $\mathcal{O}(H\log T)$ regret for fixed $H$; a larger $H$
results in a larger slope. 
This matches \cref{thm:total-regret} and reflects the tradeoff between approximation (truncation) and estimation (regression) error.
Next, we vary the regularization $\lambda$ with $H = 15$ fixed.
\cref{fig:exp3-H} (right) shows that regret is nearly insensitive to
$\lambda$ in $\{0.01, 0.1, 1\}$: the log-determinant term 
$\log(T^{2r+1}/\lambda)$ dominates and is insensitive to $\lambda$, while the subdominant bias term $\lambda \|M_L\|_F^2$ stays small.
For $\lambda = 10$, the bias exceeds
the log saving and causes visible degradation.
In contrast to standard OLS, which requires horizon-dependent tuning for adversarial state prediction~\cite{ghai2020no}, any fixed $\lambda = \mathcal{O}(1)$ suffices for FM-POLS.

\vspace{-3mm}
\section{Conclusion}
\vspace{-2mm}
\label{sec:conclusion}
We studied online nonstochastic prediction in marginally stable PO-LDS, and showed that polylogarithmic regret against the best-in-hindsight stabilizing Luenberger predictor is achievable.
Our framework FM-POLS uses tailored predictive hints to absorb the system's deterministic drift, so that regret scales with the hint residual rather than the unbounded signal. 
We further developed a model-based Luenberger hint that works for any stabilizing gain, and model-free polynomial-filter hints that suffice for real-spectrum systems.
For future directions, it is interesting to allow the hint residual to grow polylogarithmically with $T$, which may potentially broaden the class of admissible filters.
It is also interesting to extend FM-POLS to time-varying and structured nonlinear systems.



\newpage

\bibliographystyle{unsrt}
\bibliography{ref.bib}

@article{qian2025logarithmic,
  title={Logarithmic regret and polynomial scaling in online multi-step-ahead prediction},
  author={Qian, Jiachen and Zheng, Yang},
  journal={IEEE Control Systems Letters},
  volume={9},
  pages={2981--2986},
  year={2025},
  publisher={IEEE}
}

@ARTICLE{Luenberger1964Observing,
  author={Luenberger, David G.},
  journal={IEEE Transactions on Military Electronics}, 
  title={Observing the State of a Linear System}, 
  year={1964},
  volume={8},
  number={2},
  pages={74-80},
  doi={10.1109/TME.1964.4323124}}

@inproceedings{foster2020logarithmic,
  title={Logarithmic regret for adversarial online control},
  author={Foster, Dylan and Simchowitz, Max},
  booktitle={International Conference on Machine Learning},
  pages={3211--3221},
  year={2020},
  organization={PMLR}
}

@inproceedings{agarwal2019online,
  title={Online control with adversarial disturbances},
  author={Agarwal, Naman and Bullins, Brian and Hazan, Elad and Kakade, Sham and Singh, Karan},
  booktitle={International Conference on Machine Learning},
  pages={111--119},
  year={2019},
  organization={PMLR}
}

@article{agarwal2019logarithmic,
  title={Logarithmic regret for online control},
  author={Agarwal, Naman and Hazan, Elad and Singh, Karan},
  journal={Advances in Neural Information Processing Systems},
  volume={32},
  year={2019}
}

@inproceedings{simchowitz2020improper,
  title={Improper learning for non-stochastic control},
  author={Simchowitz, Max and Singh, Karan and Hazan, Elad},
  booktitle={Conference on Learning Theory},
  pages={3320--3436},
  year={2020},
  organization={PMLR}
}

@inproceedings{simchowitz2020naive,
  title={Naive exploration is optimal for online lqr},
  author={Simchowitz, Max and Foster, Dylan},
  booktitle={International Conference on Machine Learning},
  pages={8937--8948},
  year={2020},
  organization={PMLR}
}

@inproceedings{cassel2020logarithmic,
  title={Logarithmic regret for learning linear quadratic regulators efficiently},
  author={Cassel, Asaf and Cohen, Alon and Koren, Tomer},
  booktitle={International Conference on Machine Learning},
  pages={1328--1337},
  year={2020},
  organization={PMLR}
}

@inproceedings{cohen2018online,
  title={Online linear quadratic control},
  author={Cohen, Alon and Hasidim, Avinatan and Koren, Tomer and Lazic, Nevena and Mansour, Yishay and Talwar, Kunal},
  booktitle={International Conference on Machine Learning},
  pages={1029--1038},
  year={2018},
  organization={PMLR}
}

@article{hazan2017learning,
  title={Learning linear dynamical systems via spectral filtering},
  author={Hazan, Elad and Singh, Karan and Zhang, Cyril},
  journal={Advances in Neural Information Processing Systems},
  volume={30},
  year={2017}
}

@article{hazan2022introduction,
  title={Introduction to online nonstochastic control},
  author={Hazan, Elad and Singh, Karan},
  journal={arXiv preprint arXiv:2211.09619},
  year={2022}
}

@book{cesa2006prediction,
  title={Prediction, learning, and games},
  author={Cesa-Bianchi, Nicolo and Lugosi, G{\'a}bor},
  year={2006},
  publisher={Cambridge University Press}
}

@inproceedings{ghai2020no,
  title={No-regret prediction in marginally stable systems},
  author={Ghai, Udaya and Lee, Holden and Singh, Karan and Zhang, Cyril and Zhang, Yi},
  booktitle={Conference on Learning Theory},
  pages={1714--1757},
  year={2020},
  organization={PMLR}
}

@article{jacobsen2024online,
  title={Online linear regression in dynamic environments via discounting},
  author={Jacobsen, Andrew and Cutkosky, Ashok},
  journal={arXiv preprint arXiv:2405.19175},
  year={2024}
}

@article{orabona2015generalized,
  title={A generalized online mirror descent with applications to classification and regression},
  author={Orabona, Francesco and Crammer, Koby and Cesa-Bianchi, Nicolo},
  journal={Machine Learning},
  volume={99},
  number={3},
  pages={411--435},
  year={2015},
  publisher={Springer}
}

@inproceedings{kozdoba2019line,
  title={On-line learning of linear dynamical systems: Exponential forgetting in {K}alman filters},
  author={Kozdoba, Mark and Marecek, Jakub and Tchrakian, Tigran and Mannor, Shie},
  booktitle={Proceedings of the AAAI Conference on Artificial Intelligence},
  volume={33},
  number={01},
  pages={4098--4105},
  year={2019}
}

@article{azoury2001relative,
  title={Relative loss bounds for on-line density estimation with the exponential family of distributions},
  author={Azoury, Katy S and Warmuth, Manfred K},
  journal={Machine learning},
  volume={43},
  number={3},
  pages={211--246},
  year={2001},
  publisher={Springer}
}

@article{rashidinejad2020slip,
  title={{SLIP}: Learning to predict in unknown dynamical systems with long-term memory},
  author={Rashidinejad, Paria and Jiao, Jiantao and Russell, Stuart},
  journal={Advances in Neural Information Processing Systems},
  volume={33},
  pages={5716--5728},
  year={2020}
}

@inproceedings{simchowitz2019learning,
  title={Learning linear dynamical systems with semi-parametric least squares},
  author={Simchowitz, Max and Boczar, Ross and Recht, Benjamin},
  booktitle={Conference on Learning Theory},
  pages={2714--2802},
  year={2019},
  organization={PMLR}
}

@article{hazan2018spectral,
  title={Spectral filtering for general linear dynamical systems},
  author={Hazan, Elad and Lee, Holden and Singh, Karan and Zhang, Cyril and Zhang, Yi},
  journal={Advances in Neural Information Processing Systems},
  volume={31},
  year={2018}
}

@article{marsden2025universal,
  title={Universal sequence preconditioning},
  author={Marsden, Annie and Hazan, Elad},
  journal={arXiv preprint arXiv:2502.06545},
  year={2025}
}

@article{tsiamis2022online,
  title={Online learning of the {K}alman filter with logarithmic regret},
  author={Tsiamis, Anastasios and Pappas, George J},
  journal={IEEE Transactions on Automatic Control},
  volume={68},
  number={5},
  pages={2774--2789},
  year={2022},
  publisher={IEEE}
}

@inproceedings{rakhlin2013online,
  title={Online learning with predictable sequences},
  author={Rakhlin, Alexander and Sridharan, Karthik},
  booktitle={Conference on Learning Theory},
  pages={993--1019},
  year={2013},
  organization={PMLR}
}

@article{kalman1960new,
  title={A new approach to linear filtering and prediction problems},
  author={Kalman, Rudolph Emil},
  journal={Transactions of the ASME--Journal of Basic Engineering},
  year={1960}
}

@book{simon2006optimal,
  title={Optimal state estimation: {K}alman, H-infinity, and nonlinear approaches},
  author={Simon, Dan},
  year={2006},
  publisher={John Wiley \& Sons}
}

@article{orabona2019modern,
  title={A modern introduction to online learning},
  author={Orabona, Francesco},
  journal={arXiv preprint arXiv:1912.13213},
  year={2019}
}

@article{vovk2001competitive,
  title={Competitive on-line statistics},
  author={Vovk, Volodya},
  journal={International Statistical Review},
  volume={69},
  number={2},
  pages={213--248},
  year={2001},
  publisher={Wiley Online Library}
}

@article{qian2025model,
  title={Model-free online learning for the {K}alman filter: Forgetting factor and logarithmic regret},
  author={Qian, Jiachen and Zheng, Yang},
  journal={arXiv preprint arXiv:2505.08982},
  year={2025}
}

@article{rakhlin2013optimization,
  title={Optimization, learning, and games with predictable sequences},
  author={Rakhlin, Sasha and Sridharan, Karthik},
  journal={Advances in Neural Information Processing Systems},
  volume={26},
  year={2013}
}

@article{hazan2007logarithmic,
  title={Logarithmic regret algorithms for online convex optimization},
  author={Hazan, Elad and Agarwal, Amit and Kale, Satyen},
  journal={Machine Learning},
  volume={69},
  number={2},
  pages={169--192},
  year={2007},
  publisher={Springer}
}

@article{hazan2016introduction,
  title={Introduction to online convex optimization},
  author={Hazan, Elad},
  journal={Foundations and Trends in Optimization},
  volume={2},
  number={3-4},
  pages={157--325},
  year={2016},
  publisher={Emerald Publishing Limited}
}

@book{box2015time,
  title={Time series analysis: forecasting and control},
  author={Box, George EP and Jenkins, Gwilym M and Reinsel, Gregory C and Ljung, Greta M},
  year={2015},
  publisher={John Wiley \& Sons}
}


\newpage
\tableofcontents

\newpage
\appendix

\section*{Appendix}

The appendix is organized as follows:
\begin{itemize}
    \item Appendix~\ref{app:section2}: omitted details for \cref{sec:problem_setup} (polynomial trajectory growth).
    \item Appendix~\ref{app:section3}: omitted details for \cref{sec:pols_framework} (proofs for the POLS framework, finite-memory truncation analysis, and Luenberger regret of FM-POLS).
    \item Appendix~\ref{app:section4}: omitted details for \cref{sec:hint_design} (proofs for the model-free hint results).
    \item Appendix~\ref{app:additional-exp}: additional numerical experiments.
\end{itemize}
A summary of paper-specific notation is provided in \cref{tab:notation} below.

\begin{table}[h]
\centering
\caption{Notation used throughout the paper.}
\label{tab:notation}
\begin{tabular}{ll}
\toprule
\textbf{Object} & \textbf{Notation} \\
\midrule
\multicolumn{2}{l}{\emph{System and observations}} \\
True observation & $y_t$ \\
True latent state & $x_t$ \\
Process and measurement noise & $w_t,\ v_t$ \\
Disturbance bounds & $\|w_t\| \le C_w$,\ $\|v_t\| \le C_v$ \\
System matrices & $A,\ C$ \\
Observation/state dimension & $p,\ n$ \\
Polynomial-growth bound on $\|y_t\|$ & $C_y$ (\cref{lem:poly-growth}) \\
Jordan-block size bound & $r$ \\
Eigenbasis conditioning & $\kappa_A := \|S\|\|S^{-1}\|$ \\
\midrule
\multicolumn{2}{l}{\emph{Comparator class}} \\
Luenberger gain & $L$ \\
Closed-loop matrix & $A_L := A - LC$ \\
Strong-stability parameters & $(\kappa, \gamma)$ \\
Comparator class & $\mathcal{L}_{\kappa,\gamma}$ \\
Best hindsight gain & $L^\star$ \\
Comparator's prediction & $\hat{y}^{\mathrm{LB}}_t(L)$ \\
Comparator's internal state & $\hat{x}^{\mathrm{LB}}_t(L)$ \\
\midrule
\multicolumn{2}{l}{\emph{Algorithm: FM-POLS}} \\
Learner's prediction & $\hat{y}_t$ \\
Predictive hint & $\tilde{y}_t$ \\
Hint residual & $\Delta_t := y_t - \tilde{y}_t$ \\
Maximum hint residual & $\Delta_{\max} := \max_{t \in [T]} \|\Delta_t\|$ \\
Hint gain (model-based) & $\tilde{L}$ \\
Strong-stability parameters of $\tilde{L}$ & $(\tilde{\kappa}, \tilde{\gamma})$ \\
Hint's internal state & $\hat{x}_t^{\mathrm{LB}}(\tilde{L})$ \\
POLS predictor matrix & $M_t$ \\
POLS feature & $z_t\in\mathbb{R}^d$ \\
FM-POLS feature (history of past outputs) & $z_t = [y_{t-1}^\tr, \ldots, y_{t-H}^\tr]^\tr \in\mathbb{R}^{pH}$ \\
Memory length (truncation depth) & $H$ \\
Regularization parameter & $\lambda$ \\
\midrule
\multicolumn{2}{l}{\emph{Truncation and analysis}} \\
Truncated comparator prediction & $\hat{y}^{\mathrm{TR}}_t(L) = M_L z_t$ \\
Comparator matrix (in $M$-space) & $M_L := [CL,\ CA_L L,\ \ldots,\ CA_L^{H-1}L]$ \\
Truncation error constant & $C_{\mathrm{trun}}$ (\cref{lem:luen-trunc}) \\
Prediction error constant & $C_{\mathrm{pred}}$ (\cref{lem:luen-pred}) \\
\midrule
\multicolumn{2}{l}{\emph{Polynomial filter (model-free hints)}} \\
Filter polynomial & $q(z) = \sum_{i=0}^m c_i z^{m-i}$ \\
Filter coefficient $\ell_1$-norm & $\|q\|_1 = \sum_{i} |c_i|$ \\
\bottomrule
\end{tabular}
\end{table}

\newpage

\section{Omitted Details for \cref{sec:problem_setup}}
\label{app:section2}

\subsection{Proof of \cref{lem:poly-growth} (Polynomial Growth)}
\label{app:poly-growth}

\begin{proof}
Part (i) bounds $\|A^k\|$ by reducing to a single Jordan block and using binomial expansions.
Part (ii) and (iii) are straightforward from Part (i).

\textbf{(i) Power bound.} 
Since $A=SJS^{-1}$, we have $\|A^k\|=\|SJ^kS^{-1}\|\le \kappa_A\,\|J^k\|$.
Because $J$ is block-diagonal, $\|J^k\|$ equals the maximum operator norm across its Jordan blocks, 
\[
    \|J^k\|=\max_{i}\|J_{\lambda_i}^k\|,
\]
so it suffices to bound a single Jordan block of size at most $r$ (by \cref{assum:main}-(ii)).

Consider any such Jordan block $J_{\lambda}=\lambda I + N$ with $|\lambda| \le 1$, where $N$ is the nilpotent shift matrix. 
Since the block has size at most $r$, we have $N^\ell = 0$ for $\ell \ge r$, so the binomial expansion gives
\[
    J_{\lambda}^k = (\lambda I + N)^k = \sum_{\ell=0}^{\min\{k,r-1\}} \binom{k}{\ell} \lambda^{k-\ell} N^\ell.
\]
The nilpotent shift matrix satisfies $\|N^\ell\|\le 1$ for all $\ell\ge0$, so
\begin{equation}
    \label{eq:Jordan-power-bound}
  \|J_{\lambda}^{k}\|
  \le \sum_{\ell=0}^{\min\{k,r-1\}}\binom{k}{\ell}|\lambda|^{k-\ell}
  \le \sum_{\ell=0}^{\min\{k,r-1\}}\binom{k}{\ell}
  \le \sum_{\ell=0}^{\min\{k,r-1\}}k^{\ell}
  \le (1+k)^{r-1},
\end{equation}
where the second inequality uses $|\lambda|\le 1$ and $k\ge \ell$;
the third uses $\binom{k}{\ell}=\frac{k(k-1)\cdots(k-\ell+1)}{\ell!}\le k^\ell$;
the last uses the binomial expansion below
\[
    (1+k)^{r-1}=\sum_{\ell=0}^{r-1}\binom{r-1}{\ell}k^\ell\ge \sum_{\ell=0}^{r-1}k^\ell \ge \sum_{\ell=0}^{\min\{k,r-1\}}k^{\ell}
\]
since $\binom{r-1}{\ell}\ge 1$.
Since the bound in \cref{eq:Jordan-power-bound} holds uniformly for each Jordan block of $J$, we conclude
\[
    \|A^k\|\le \kappa_A \|J^k\| \le \kappa_A \max_{i}\|J_{\lambda_i}^k\| \le \kappa_A (1+k)^{r-1}.
\]

\textbf{(ii) Cumulative power bound.}
For each $0\le s\le k$, the bound from (i) gives $\|A^s\|\le\kappa_A(1+s)^{r-1}\le \kappa_A(1+k)^{r-1}$.
Summing the $k+1$ such terms gives
\begin{equation*}
  \sum_{s=0}^{k}\|A^s\|
  \le\kappa_A\,(k+1)\,(1+k)^{r-1}
  =\kappa_A\,(1+k)^{r}.
\end{equation*}

\textbf{(iii) Trajectory bound.}
Unrolling the state dynamics $x_{t+1}=Ax_t+w_t$ from $x_0=0$ gives 
$
x_t=\sum_{s=0}^{t-1}A^s w_{t-1-s}.
$
Using $\|w_s\|\le C_w$ and~(ii) with $k=t-1$,
\begin{equation*}
  \|x_t\|
  \le C_w\sum_{s=0}^{t-1}\|A^s\|
  \le \underbrace{\kappa_A C_w}_{=: C_x}(1+t)^{r}.
\end{equation*}
Then
\begin{equation*}
  \|y_t\|
  \le \|C\|\|x_t\|+\|v_t\|
  \le\|C\|C_x(1+t)^{r}+C_v
  \le\underbrace{\left(\|C\|C_x+C_v\right)}_{:= C_y}(1+t)^{r},
\end{equation*}
where the last inequality uses $C_v\le C_v(1+t)^r$ for $t\ge 1$.
\end{proof}

\section{Omitted Details for \cref{sec:pols_framework}}
\label{app:section3}

\subsection{Proof of \cref{thm:pols-reg} (Residual-Scaled POLS Regret)}
\label{app:reg-analysis}

Our proof relies on the framework of \cite{orabona2015generalized} and is conducted in two steps: a scalar POLS regret bound (Appendix~\ref{app:scalar-pols}) followed by a direct extension to the matrix case (Appendix~\ref{app:matrix-pols}).
We then show that standard OLS is as a special case of POLS (Appendix~\ref{app:ols-as-pols}).
We begin with two useful lemmas.

\begin{lemma}
\label{lem:rank1-det}
For any $v \in \mathbb{R}^d$,\;
$\det(I - vv^\tr) = 1 - \|v\|^2$.
\end{lemma}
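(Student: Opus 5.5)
The identity is the matrix determinant lemma in its simplest rank-one form. I would prove it with a spectral argument that avoids invoking the general lemma. The matrix $vv^\tr$ is symmetric of rank at most one, so it is enough to read off the eigenvalues of $I - vv^\tr$.

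First dispose of the trivial case $v = 0$. Then $I - vv^\tr = I$, whose determinant is $1 = 1 - \|v\|^2$.

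For $v \neq 0$, complete $u_1 := v/\|v\|$ to an orthonormal basis $u_1, u_2, \ldots, u_d$ of $\mathbb{R}^d$. The eigenvalues are then:
\begin{itemize}
\item $(I - vv^\tr)u_1 = u_1 - v\|v\| = (1-\|v\|^2)u_1$;
\item $(I - vv^\tr)u_i = u_i$ for $i \ge 2$, because $v^\tr u_i = 0$.
\end{itemize}
So $I - vv^\tr$ is diagonal in this orthonormal basis, with eigenvalue $1-\|v\|^2$ once and $1$ with multiplicity $d-1$. The determinant is the product of the eigenvalues, which is $1-\|v\|^2$.

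Alternatively, one can factor the block matrix $\begin{bmatrix} I & v \\ v^\tr & 1\end{bmatrix}$ in two ways via Schur complements. This gives $\det(I - vv^\tr) = \det(1 - v^\tr v)$, which is the same identity.

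There is no real obstacle here. The only point needing care is the degenerate case $v=0$, where the basis construction needs $\|v\|\neq 0$, and that case was handled separately. In the later POLS analysis the lemma will presumably be applied with $v = G_t^{-1/2} z_t$, the Gram-normalized feature, to telescope the log-determinant potential. That use lies outside this statement.
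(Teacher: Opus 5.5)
Your proof is correct and follows the same route as the paper: $v$ is an eigenvector of $I - vv^\tr$ with eigenvalue $1-\|v\|^2$, every vector orthogonal to $v$ is an eigenvector with eigenvalue $1$, and the determinant is the product of the eigenvalues. Your separate handling of $v=0$ is a small piece of care the paper skips, since it calls $v$ an eigenvector without excluding $v=0$; the Schur-complement alternative is also valid but not needed.
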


\vspace{-5mm}
\begin{proof}
The vector $v$ is an eigenvector of $I - vv^\tr$
with eigenvalue $1 - \|v\|^2$, and every
$u \perp v$ is an eigenvector with eigenvalue $1$.
The determinant is the product of all $d$
eigenvalues.
\end{proof}

\begin{lemma}[Elliptical Potential]
\label{lem:ell-potential}
Let $z_t \in \mathbb{R}^{d}$, $G_0 \!:=\! \lambda I$, and
$G_t \!:=\! G_{t-1} + z_t z_t^\tr$ for $t\ge 1$. Then
\begin{equation*}
    \sum_{t=1}^T z_t^\tr G_t^{-1} z_t
    \le \log \frac{\det G_T}{\det G_0}
    \le d \log\!\left(
      1 + \frac{\sum_{t=1}^T \|z_t\|^2}{\lambda d}
    \right).
\end{equation*}
\end{lemma}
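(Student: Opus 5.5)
The plan is to relate each term $z_t^\tr G_t^{-1} z_t$ to a ratio of consecutive determinants via the matrix determinant lemma, sum to get a telescoping log-determinant, and then bound the final log-determinant with AM--GM on the eigenvalues of $G_T$.

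\textbf{Step 1: one-step determinant identity.} Since $G_{t-1} = G_t - z_t z_t^\tr$ and $G_t \succ 0$ (as $G_0 = \lambda I \succ 0$), factor
\[
G_{t-1} = G_t^{1/2}\bigl(I - u_t u_t^\tr\bigr)G_t^{1/2}, \qquad u_t := G_t^{-1/2} z_t .
\]
Lemma~\ref{lem:rank1-det} then gives
\[
\det G_{t-1} = \det G_t\,\bigl(1 - \|u_t\|^2\bigr) = \det G_t\,\bigl(1 - z_t^\tr G_t^{-1} z_t\bigr).
\]
Hence $z_t^\tr G_t^{-1} z_t = 1 - \det G_{t-1}/\det G_t$, and this quantity lies in $[0,1)$ because $G_{t-1} \succ 0$.

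\textbf{Step 2: telescoping.} Use $1 - x \le -\log x$ for $x > 0$ with $x = \det G_{t-1}/\det G_t$. This yields
\[
z_t^\tr G_t^{-1} z_t \le \log\det G_t - \log\det G_{t-1}.
\]
Summing over $t \in [T]$ telescopes to $\log(\det G_T/\det G_0)$, which is the first inequality.

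\textbf{Step 3: AM--GM bound.} Let $\mu_1,\dots,\mu_d > 0$ be the eigenvalues of $G_T$. Then
\[
\det G_T = \prod_i \mu_i \le \Bigl(\tfrac{1}{d}\operatorname{tr} G_T\Bigr)^d, \qquad \operatorname{tr} G_T = \lambda d + \sum_{t=1}^T \|z_t\|^2 .
\]
Dividing by $\det G_0 = \lambda^d$ and taking logarithms gives the second inequality.

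No step here is a real obstacle. The only point needing care is Step 1: the identity requires writing $G_{t-1}$ as a rank-one downdate of $G_t$, rather than $G_t$ as an update of $G_{t-1}$, so that the term $z_t^\tr G_t^{-1} z_t$ appears with $G_t$ and not $G_{t-1}$. It also requires checking positivity of $G_t$ so that the symmetric square root exists and the logarithms are finite.
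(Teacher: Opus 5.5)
Your proposal is correct and follows essentially the same route as the paper: the downdate factorization $G_{t-1} = G_t^{1/2}(I - u_t u_t^\tr)G_t^{1/2}$ with Lemma~\ref{lem:rank1-det}, the inequality $1-x \le -\log x$ with telescoping, and AM--GM on eigenvalues. The only cosmetic difference is that you apply AM--GM to the eigenvalues of $G_T$ and then divide by $\lambda^d$, whereas the paper applies it directly to $I + \lambda^{-1}\sum_t z_t z_t^\tr$; the two are equivalent.
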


\begin{proof}
Since
$G_{t-1} = G_t - z_t z_t^\tr
= G_t^{\frac{1}{2}}(I - G_t^{-\frac{1}{2}} z_t z_t^\tr G_t^{-\frac{1}{2}})
G_t^{\frac{1}{2}}$,
taking determinants and applying
\cref{lem:rank1-det} with
$v = G_t^{-\frac{1}{2}} z_t$ gives
$
    {\det G_{t-1}}/{\det G_t}
    = 1 - \|G_t^{-\frac{1}{2}} z_t\|^2
    = 1 - z_t^\tr G_t^{-1} z_t.
$
Rearranging and using
$1 - x \le -\log x$ for $0<x\le 1$ yields
\[
    z_t^\tr G_t^{-1} z_t
    = 1 - \frac{\det G_{t-1}}{\det G_t}
    \le \log \frac{\det G_t}{\det G_{t-1}}.
\]
Telescoping then proves the first inequality.
Now we show $z_t^\tr G_t^{-1} z_t < 1$.
Since
$G_t - z_t z_t^\tr \succ 0$,
equivalent to
$I - G_t^{-\frac{1}{2}} z_t z_t^\tr G_t^{-\frac{1}{2}} \succ 0$.
The only nonzero eigenvalue of the rank-one matrix
$G_t^{-\frac{1}{2}} z_t z_t^\tr G_t^{-\frac{1}{2}}$ is
$z_t^\tr G_t^{-1} z_t$, so it must be strictly less than $1$.

For the second inequality, note that
$
{\det G_T}/{\det G_0} 
= \det\left(I + \lambda^{-1}\sum_t z_t z_t^\tr\right).
$
By the AM--GM inequality applied to the eigenvalues
$\mu_1, \ldots, \mu_{d}$ of
$I + \lambda^{-1}\sum_t z_t z_t^\tr$,
\[
    \prod_{i=1}^{d} \mu_i
    \le \left(\frac{\sum_i \mu_i}{d}\right)^{d}
    = \left(
      1 + \frac{\operatorname{tr}(\sum_t z_t z_t^\tr)}
               {\lambda d}
    \right)^{d}
    = \left(
      1 + \frac{\sum_t \|z_t\|^2}{\lambda d}
    \right)^{d}.
\]
Taking logarithms on both sides completes the proof.
\end{proof}

\subsubsection{Scalar POLS Regret}
\label{app:scalar-pols}

Consider the scalar case ($p = 1$). The learner predicts $\hat{y}_t = m_t^\tr z_t$ via the POLS update
\[
    m_t
    = \argmin_{m \in \mathbb{R}^{d}}
    \left\{
      \lambda\|m\|^2
      + \sum_{s=1}^{t-1}(m^\tr z_s - y_s)^2
      + (m^\tr z_t - \tilde{y}_t)^2
    \right\},
\]
with closed form
$m_t = G_t^{-1}(b_{t-1} + z_t\tilde{y}_t)$
where $b_{t-1} := \sum_{s=1}^{t-1} z_s y_s$ and
$G_t := \lambda I + \sum_{s=1}^t z_s z_s^\tr$.

We prove the following intermediate bound, which is sharper than the ``$\max_t$'' form of \cref{thm:pols-reg} and is needed for our matrix extension in Appendix~\ref{app:matrix-pols} without incurring an extra factor of $p$.

\begin{lemma}[Scalar POLS Regret]
\label{lem:scalar-pols}
For any comparator $u \in \mathbb{R}^{d}$ and any sequences
$(z_t, y_t, \tilde{y}_t)_{t=1}^T$,
\begin{equation*}
    \sum_{t=1}^T \bigl[
      (m_t^\tr z_t - y_t)^2
      - (u^\tr z_t - y_t)^2
    \bigr]
    \le \lambda\|u\|^2
    + \sum_{t=1}^T \Delta_t^2\,
      z_t^\tr G_t^{-1} z_t,
\end{equation*}
where $\Delta_t := y_t - \tilde{y}_t$ is the hint residual.
\end{lemma}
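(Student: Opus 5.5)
We prove the bound with a follow-the-regularized-leader (FTRL) "be-the-leader" potential argument in the style of the VAW forecaster. The hint enters only through the look-ahead term, and it leaves behind exactly the residual $\Delta_t$.

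\textbf{Potential.} Define the cumulative regularized loss
\[
\Phi_t(m) := \lambda\|m\|^2 + \sum_{s=1}^{t}(m^\tr z_s - y_s)^2,
\]
and let $\Phi_t^\star := \min_m \Phi_t(m)$. This minimum is attained at $\bar m_t := G_t^{-1}b_t$, with $b_t := \sum_{s\le t} z_s y_s$. Since $\Phi_0^\star = 0$ and $\Phi_T^\star \le \Phi_T(u)$, we have
\[
\sum_t (u^\tr z_t-y_t)^2 \ \ge\ \sum_t(\Phi_t^\star-\Phi_{t-1}^\star) - \lambda\|u\|^2 .
\]
It therefore suffices to prove the per-step inequality
\[
(m_t^\tr z_t - y_t)^2 - (\Phi_t^\star - \Phi_{t-1}^\star) \le \Delta_t^2\, z_t^\tr G_t^{-1} z_t ,
\]
and then sum over $t$.

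\textbf{Computing $\Phi_t^\star - \Phi_{t-1}^\star$.} The functions $\Phi_t$ are quadratics, so we can write
\[
\Phi_t^\star = \sum_{s\le t} y_s^2 - b_t^\tr G_t^{-1} b_t .
\]
Hence
\[
\Phi_t^\star - \Phi_{t-1}^\star = y_t^2 - b_t^\tr G_t^{-1}b_t + b_{t-1}^\tr G_{t-1}^{-1} b_{t-1}.
\]
Rather than expanding with Sherman--Morrison, we use an exact identity. Write $\Phi_t = \Phi_{t-1} + \ell_t$ with $\ell_t(m) = (m^\tr z_t - y_t)^2$. Since $\Phi_{t-1}(m) = \Phi_{t-1}^\star + (m-\bar m_{t-1})^\tr G_{t-1}(m-\bar m_{t-1})$, minimizing $\Phi_t$ over $m$ is a one-dimensional problem along the direction $z_t$. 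This gives the closed form
\[
\Phi_t^\star - \Phi_{t-1}^\star = \frac{(\bar m_{t-1}^\tr z_t - y_t)^2}{1+z_t^\tr G_{t-1}^{-1}z_t} = (1-\alpha_t)(\bar m_{t-1}^\tr z_t - y_t)^2,
\]
where $\alpha_t := z_t^\tr G_t^{-1} z_t \in[0,1)$. The last equality uses the Sherman--Morrison identity $1-\alpha_t = 1/(1+z_t^\tr G_{t-1}^{-1}z_t)$.

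\textbf{The POLS prediction.} The iterate is $m_t = G_t^{-1}(b_{t-1}+z_t\tilde y_t)$, and $b_{t-1} = G_{t-1}\bar m_{t-1} = G_t\bar m_{t-1} - z_t z_t^\tr \bar m_{t-1}$. Setting $p_t := \bar m_{t-1}^\tr z_t$, this gives
\[
m_t^\tr z_t = p_t + \alpha_t(\tilde y_t - p_t) = (1-\alpha_t)p_t + \alpha_t\tilde y_t .
\]
Let $e := p_t - y_t$. Then
\[
m_t^\tr z_t - y_t = (1-\alpha_t)e - \alpha_t\Delta_t .
\]
The per-step quantity to bound becomes a scalar quadratic:
\[
\bigl((1-\alpha)e-\alpha\Delta\bigr)^2-(1-\alpha)e^2 = -\alpha(1-\alpha)e^2 - 2\alpha(1-\alpha)e\Delta + \alpha^2\Delta^2 .
\]
Maximizing over $e$ (the maximizer is $e=-\Delta$) gives the value $\alpha(1-\alpha)\Delta^2 + \alpha^2\Delta^2 = \alpha\Delta^2$. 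This is exactly $\Delta_t^2 z_t^\tr G_t^{-1} z_t$, and summing over $t$ proves the lemma.

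\textbf{Main obstacle.} The main point needing care is the closed form for $\Phi_t^\star-\Phi_{t-1}^\star$, specifically the factor $(1-\alpha_t)$. We verify it by completing the square along $z_t$, equivalently by applying Sherman--Morrison to $b_t^\tr G_t^{-1}b_t$. After that, only the scalar maximization over the free quantity $e$ remains, which removes the endogenous prediction error entirely.
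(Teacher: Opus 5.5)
Your proof is correct, and it takes a genuinely different route from the paper's.

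\textbf{The paper's route.} The paper recasts POLS as FTRL on tilted linear losses $\tilde\ell_t(m)=-2\Delta_t\,m^\tr z_t$, with the time-varying quadratic regularizer $R_t(m)=m^\tr G_t m-2c_t^\tr m$, where $c_t=\sum_{s\le t}z_s\tilde y_s$. This absorbs the look-ahead term into the regularizer, so the only gradient left is $-2\Delta_t z_t$. The result then follows from the generic FTRL bound with time-varying regularizers of \cite{orabona2015generalized}, in which the regularizer increments cancel exactly.

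\textbf{Your route.} You run a be-the-leader argument on the unhinted ridge potential $\Phi_t^\star$. You compute its increment exactly as $(p_t-y_t)^2/(1+z_t^\tr G_{t-1}^{-1}z_t)=(1-\alpha_t)(p_t-y_t)^2$. You write the POLS prediction as the convex combination $(1-\alpha_t)p_t+\alpha_t\tilde y_t$ of the ridge prediction and the hint, and you finish with a scalar quadratic. Every step checks. Completing the square in your last step gives the identity
\[
(m_t^\tr z_t-y_t)^2-(\Phi_t^\star-\Phi_{t-1}^\star)=\alpha_t\Delta_t^2-\alpha_t(1-\alpha_t)(p_t-\tilde y_t)^2 .
\]
This handles the degenerate case $\alpha_t=0$ without appealing to a maximizer. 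It also turns your argument into an exact regret identity, with explicit nonnegative slack $\sum_t\alpha_t(1-\alpha_t)(p_t-\tilde y_t)^2+\Phi_T(u)-\Phi_T^\star$.

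\textbf{What each route buys.} Your route is fully self-contained, with no external FTRL lemma or strong-convexity bookkeeping. It shows that the per-step slack vanishes when the hint equals the ridge prediction $p_t$. It also makes the special cases transparent: $\tilde y_t=p_t$ gives $m_t=\bar m_{t-1}$, i.e.\ OLS, and $\tilde y_t=0$ gives the VAW prediction $(1-\alpha_t)p_t$. The paper's route, in exchange, identifies POLS as an instance of FTRL on linear losses. That plugs directly into general online-learning machinery and extends more readily, for example to the discounted setting of \cite{jacobsen2024online} or to other regularizers.
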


\begin{proof}

Our proof follows the FTRL-with-time-varying-regularizer framework of \cite{orabona2015generalized}. The key step tailored to our setting is a \emph{tilted} reparameterization that absorbs the hint-related look-ahead term $(m^\tr z_t-\tilde y_t)^2$ into the regularizer, leaving a linear loss whose gradient depends only on the hint residual~$\Delta_t=y_t-\tilde y_t$.

\textbf{Squared loss decomposition.}
Define the \textit{tilted} linear loss and {time-varying} regularizer for $t\ge 1$
\[
\tilde\ell_t(m) := -2\Delta_t\, m^\tr z_t, \quad R_t(m) := m^\tr G_t\, m - 2c_t^\tr m, \quad c_t := \sum_{s=1}^t z_s\tilde{y}_s,
\]
with the convention $R_0(m) := \lambda\|m\|^2.$
Using $G_t-G_{t-1}=z_t z_t^\tr$ and $c_t-c_{t-1}=z_t\tilde y_t$, the regularizer increment satisfies
$
  R_t(m) - R_{t-1}(m) = (m^\tr z_t)^2 - 2\tilde y_t m^\tr z_t.
$
A direct computation using $y_t=\tilde y_t+\Delta_t$ then yields the central identity
\begin{equation}\label{eq:loss-decomp}
  (m^\tr z_t-y_t)^2
  \;=\;\underbrace{[R_t(m)-R_{t-1}(m)]}_{\mathtt{regularizer~increment}}
  \;+\;\underbrace{\tilde\ell_t(m)}_{\mathtt{linear~loss}}
  \;+\;y_t^2.
\end{equation}
The same algebra gives
\begin{equation}\label{eq:lookahead-decomp}
  (m^\tr z_t-\tilde y_t)^2 \;=\; [R_t(m)-R_{t-1}(m)] + \tilde y_t^2.
\end{equation}

\textbf{POLS is FTRL on linear losses.}
With the above decomposition, we claim that $m_t$ coincides with the FTRL iterate on $(\tilde\ell_s)_{s<t}$ with the time-varying regularizer $R_t$
\begin{equation}\label{eq:pols-is-ftrl}
  m_t = \argmin_{m\in\mathbb{R}^d}\left\{R_t(m) + \sum_{s=1}^{t-1}\tilde\ell_s(m)\right\}.
\end{equation}
This can be shown by substituting \cref{eq:loss-decomp} and \cref{eq:lookahead-decomp} into the POLS objective
\begin{align*}
    & \underbrace{R_0(m)}_{=\lambda\|m\|^2}
    + \sum_{s=1}^{t-1}
      \underbrace{\bigl[R_s(m) - R_{s-1}(m)
      + \tilde\ell_s(m) + y_s^2\bigr]}_{=(m^\tr z_t-y_t)^2}
    + \underbrace{[R_t(m) - R_{t-1}(m)] + \tilde{y}_t^2}_{=(m^\tr z_t - \tilde{y}_t)^2}  \\
    = & R_t(m) + \sum_{s=1}^{t-1}\tilde\ell_s(m) + 
    \underbrace{\sum_{s=1}^{t-1}y_s^2 + \tilde{y}_t^2}_{\mathtt{const}},
\end{align*}
where the equality uses
$R_0(m) + \sum_{s=1}^{t-1}[R_s(m) - R_{s-1}(m)]
= R_{t-1}(m)$ and the constant term collects terms that do not depend on $m$.

\textbf{FTRL regret bound.}
The regularizer $R_t$ has Hessian $\nabla^2 R_t = 2G_t$, so it is $\beta$-strongly convex with respect to $\|\cdot\|_{G_t}$ with $\beta=2$. The standard FTRL-with-time-varying-regularizer bound in \cite{orabona2015generalized} gives, for any comparator $u$,
\begin{equation}\label{eq:ftrl-bound}
  \sum_{t=1}^T\bigl[\tilde\ell_t(m_t)-\tilde\ell_t(u)\bigr]
  \le R_T(u) + \sum_{t=1}^T\Bigl(\frac{1}{2\beta}\|g_t\|_{G_t^{-1}}^2 + R_{t-1}(m_t)-R_t(m_t)\Bigr),
\end{equation}
where $g_t=\nabla\tilde\ell_t(m_t) = -2 z_t \Delta_t$ and the prefactor $\tfrac{1}{2\beta}=\tfrac{1}{4}$.

\textbf{Assembling the regret.}
We decompose the regret $\Reg_T(m)=\sum_{t=1}^T\bigl[(m_t^\tr z_t-y_t)^2 - (u^\tr z_t-y_t)^2\bigr]$ using the loss decomposition \cref{eq:loss-decomp}
\[
    \Reg_T(m)= \underbrace{\sum_{t=1}^T[R_t(m_t)-R_{t-1}(m_t)]}_{\term{a}}
    - \underbrace{\sum_{t=1}^T [R_t(u) - R_{t-1}(u)]}_{=R_T(u)-R_0(u)}
    + \underbrace{\sum_{t=1}^T[\tilde\ell_t(m_t)-\tilde\ell_t(u)]}_{\term{b}}.
\]
Bounding $\term{b}$ by~\eqref{eq:ftrl-bound} and collecting terms, we obtain
\begin{equation*}
  \Reg_T(m)
  \le
  \underbrace{\term{a} + \sum_{t=1}^T[R_{t-1}(m_t)-R_t(m_t)]}_{=0}
  +\underbrace{R_T(u)-[R_T(u)-R_0(u)]}_{=R_0(u)}
  +\tfrac{1}{4}\sum_{t=1}^T\|g_t\|_{G_t^{-1}}^2.
\end{equation*}
Substituting $R_0(u)=\lambda\|u\|^2$ and $\|g_t\|_{G_t^{-1}}^2=g_t^\tr G_t^{-1} g_t=4\Delta_t^2\,z_t^\tr G_t^{-1}z_t$ establishes \cref{lem:scalar-pols}.
\end{proof}

\begin{remark}
\label{rem:jacobsen}
The bound in \cref{lem:scalar-pols}
can also be obtained from
\cite[Theorem~3.1]{jacobsen2024online} by setting the
discount factor $\gamma = 1$ and extracting the bound
before the $\max_t$ step. The FTRL argument above is self-contained and makes the derivation of this intermediate bound explicit.
\end{remark}

\subsubsection{Extension to the Matrix Case}
\label{app:matrix-pols}

We now extend \cref{lem:scalar-pols} to the matrix case where $\hat{y}_t = M_t z_t$ with $M_t \in \mathbb{R}^{p \times d}$ ($p>1$). 

\begin{proof}[Proof of \cref{thm:pols-reg}]
We show that the matrix POLS objective in \cref{eq:pols-objective} decouples across the $p$ output coordinates, allowing us to apply the scalar regret bound of Lemma~\ref{lem:scalar-pols} row by row.

\textbf{Row-wise objective decoupling.}
Write the comparator in its row form
$M = [m_1^\tr, \ldots, m_{p}^\tr]^\tr$, and similarly for the POLS iterate $M_t^{\mathrm{pols}} = [m_{t,1}^\tr, \ldots, m_{t,p}^\tr]^\tr$.
Observe that the POLS objective in \cref{eq:pols-objective} splits as a sum of $p$ independent scalar POLS objectives, one per coordinate:
\[
    \lambda\|M\|_F^2
    + \sum_{s=1}^{t-1}\|Mz_s - y_s\|^2
    + \|Mz_t - \tilde{y}_t\|^2
    = \sum_{j=1}^{p} \Bigl[
      \lambda\|m_j\|^2
      + \sum_{s=1}^{t-1}(m_j^\tr z_s - y_{s,j})^2
      + (m_j^\tr z_t - \tilde{y}_{t,j})^2
    \Bigr].
\]
Therefore, the minimizer over $M$ can be obtained by minimizing each coordinate separately, so each row $m_{t,j}$ of $M_t^{\mathrm{pols}}$ is the scalar POLS iterate for coordinate $j$ (with target $y_{s,j}$ and hint $\tilde y_{t,j}$).
Crucially, all $p$ coordinates share the same Gram matrix $G_t=\lambda I+\sum_{s\le t}z_s z_s^\tr$.

\textbf{Row-wise regret decomposition.}
The matrix regret similarly decomposes coordinatewise: $\|M_t z_t - y_t\|^2 = \sum_{j=1}^p (m_{t,j}^\tr z_t - y_{t,j})^2$, and likewise for the comparator. Hence
\[
    \Reg_T(M)
    = \sum_{t=1}^T \bigl[\|M_t z_t - y_t\|^2 - \|Mz_t - y_t\|^2\bigr]
    = \sum_{j=1}^{p} \underbrace{\sum_{t=1}^T \bigl[(m_{t,j}^\tr z_t - y_{t,j})^2 - (m_j^\tr z_t - y_{t,j})^2\bigr]}_{=:\Reg_T^{(j)}(m_j)},
\]
where $\Reg_T^{(j)}(m_j)$ is the scalar regret for coordinate $j$ against the per-coordinate comparator $m_j$. 
Applying \cref{lem:scalar-pols} to each $\Reg_T^{(j)}(m_j)$ and summing over $j$ yields
\begin{align}
    \Reg_T(M)
    &\le \sum_{j=1}^{p} \Bigl[
      \lambda\|m_j\|^2
      + \sum_{t=1}^T
        (y_{t,j} - \tilde{y}_{t,j})^2\,
        z_t^\tr G_t^{-1} z_t
    \Bigr] \notag \\
    &= \lambda\|M\|_F^2
      + \sum_{t=1}^T
        \underbrace{\Bigl(\sum_{j=1}^{p}
          (y_{t,j} - \tilde{y}_{t,j})^2
        \Bigr)}_{= \|y_t - \tilde{y}_t\|^2}
        z_t^\tr G_t^{-1} z_t,
    \label{eq:matrix-intermediate}
\end{align}
where the last equality uses $\sum_j \|m_j\|^2 = \|M\|_F^2$ and combines the per-coordinate residuals via $\sum_j (y_{t,j} - \tilde{y}_{t,j})^2 = \|y_t - \tilde{y}_t\|^2$.
We note that, in the last equality, the order swap of the double sum is what allows the per-coordinate residuals $(y_{t,j} - \tilde{y}_{t,j})^2$ to combine into the vector residual $\|y_t-\tilde{y}_t\|^2$ \emph{before} the maximum over $t$ is extracted; performing the maximum first coordinate-wise would introduce an extra factor of $p$ (cf.~\cite[Appendix~F]{ghai2020no}).

\textbf{Concluding the bound.}
Extracting the uniform bound for $\|y_t-\tilde{y}_t\|^2$ from \cref{eq:matrix-intermediate} and applying the elliptical-potential bound from \cref{lem:ell-potential}, we obtain
\begin{align}
    \Reg_T(M)
    &\le \lambda\|M\|_F^2
      + \max_{1\le t\le T}\|y_t - \tilde{y}_t\|^2 
        d\log\!\left(\!
          1 + \frac{\sum_{t=1}^T \|z_t\|^2}
                   {\lambda d}
        \!\right),
    \notag
\end{align}
which is \cref{eq:pols_regret} and completes the proof.
\end{proof}

\subsubsection{OLS as a Special Case of POLS}
\label{app:ols-as-pols}

\begin{proposition}[Self-Consistent Hint Recovers OLS]
\label{prop:ols-as-pols}
Setting the self-consistent hint $\tilde{y}_t := M_t^{\mathrm{pols}} z_t$ in the POLS objective recovers the standard OLS iterates. That is,
\begin{equation*}
  M_t^{\mathrm{pols}} \;=\; M_t^{\mathrm{ols}} \;:=\; B_{t-1}\, G_{t-1}^{-1},
\end{equation*}
where $B_{t-1} := \sum_{s=1}^{t-1} y_s z_s^\tr
\in \mathbb{R}^{p \times d}$ and $G_t := \lambda I + \sum_{s=1}^t z_s z_s^\tr \in \mathbb{R}^{d \times d}$.
\end{proposition}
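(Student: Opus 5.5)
The plan is to use the closed-form first-order condition of the POLS objective \cref{eq:pols-objective} and show that, under the self-consistent hint, it reduces to the normal equations of OLS. Setting the gradient in $M$ to zero gives
\[
M\Bigl(\lambda I + \sum_{s=1}^{t} z_s z_s^\tr\Bigr) = \sum_{s=1}^{t-1} y_s z_s^\tr + \tilde y_t z_t^\tr,
\qquad\text{i.e.,}\qquad
M_t^{\mathrm{pols}} G_t = B_{t-1} + \tilde y_t z_t^\tr .
\]
This is the matrix analogue of $m_t = G_t^{-1}(b_{t-1}+z_t\tilde y_t)$ from \cref{app:scalar-pols}. Since $G_t \succeq \lambda I \succ 0$, the objective is strictly convex and this equation characterizes the unique minimizer.

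Next, substitute $\tilde y_t = M_t^{\mathrm{pols}} z_t$, interpreting the self-consistent hint as a fixed-point condition. The right-hand side then contains $M_t^{\mathrm{pols}} z_t z_t^\tr$. Moving that term to the left and using $G_t - z_t z_t^\tr = G_{t-1}$ gives
\[
M_t^{\mathrm{pols}} G_{t-1} = B_{t-1}.
\]
Because $G_{t-1}\succeq\lambda I$ is invertible, this yields $M_t^{\mathrm{pols}} = B_{t-1}G_{t-1}^{-1}$. This is exactly the minimizer of $\lambda\|M\|_F^2+\sum_{s<t}\|Mz_s-y_s\|^2$, namely $M_t^{\mathrm{ols}}$, as one sees from that objective's own first-order condition.

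The only subtle point is well-posedness. The hint depends on the iterate it helps define, so we must check that the fixed point exists and is unique. I would handle this by running the argument in reverse. Set $M^\star := B_{t-1}G_{t-1}^{-1}$ and $\tilde y_t := M^\star z_t$. Then $M^\star$ satisfies $M^\star G_t = B_{t-1} + \tilde y_t z_t^\tr$, so $M^\star$ is the POLS minimizer for that hint, and a consistent fixed point exists. For uniqueness, the forward computation shows that any self-consistent solution must equal $B_{t-1}G_{t-1}^{-1}$.

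An alternative check avoids first-order conditions altogether. When $\tilde y_t = M z_t$ for the minimizer $M$ itself, the look-ahead term $\|Mz_t-\tilde y_t\|^2$ vanishes at that point, so it cannot change the minimizer. I would present the algebraic route above as the main proof, since it is a two-line computation.
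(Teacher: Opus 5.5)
Your proof is correct and is essentially the paper's argument. Like the paper, you write the first-order condition $M_t^{\mathrm{pols}} G_t = B_{t-1} + \tilde y_t z_t^\tr$, substitute the self-consistent hint, and use $G_t - z_t z_t^\tr = G_{t-1}$ to get $M_t^{\mathrm{pols}} G_{t-1} = B_{t-1}$; your reverse check that $B_{t-1}G_{t-1}^{-1}$ is actually a consistent fixed point is a small well-posedness addition the paper leaves implicit.
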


The self-consistent hint $\tilde{y}_t := M_t^{\mathrm{pols}} z_t$ is where the learner uses its own prediction as the hint.
\cref{prop:ols-as-pols} shows that this choice collapses POLS to OLS.

\begin{proof}
We first derive the closed form of the matrix POLS iterate. Setting the gradient of the POLS objective in \cref{eq:pols-objective} with respect to $M$ to zero gives
\begin{equation*}
  \lambda M + \sum_{s=1}^{t-1}(Mz_s-y_s)z_s^\tr + (Mz_t - \tilde y_t) z_t^\tr \;=\; 0,
\end{equation*}
which rearranges to $M_t^{\mathrm{pols}} G_t = B_{t-1} + \tilde y_t z_t^\tr$, i.e.,
\begin{equation}\label{eq:pols-closed-form}
  M_t^{\mathrm{pols}} \;=\; (B_{t-1} + \tilde y_t z_t^\tr)\, G_t^{-1}.
\end{equation}

When $\tilde y_t = M_t^{\mathrm{pols}} z_t$, \cref{eq:pols-closed-form} becomes a fixed-point equation in $M_t^{\mathrm{pols}}$:
\begin{equation*}
  M_t^{\mathrm{pols}} G_t \;=\; B_{t-1} + M_t^{\mathrm{pols}} z_t z_t^\tr.
\end{equation*}
Subtracting $M_t^{\mathrm{pols}} z_t z_t^\tr$ from both sides and using $G_t - z_t z_t^\tr = G_{t-1}$, we obtain
$
  M_t^{\mathrm{pols}} G_{t-1} \;=\; B_{t-1},
$
so $M_t^{\mathrm{pols}} = B_{t-1} G_{t-1}^{-1} = M_t^{\mathrm{ols}}$, as claimed.
\end{proof}

\begin{corollary}[Recovered OLS Regret Bound]\label{cor:ols-regret}
Under the self-consistent hint $\tilde y_t = M_t^{\mathrm{pols}} z_t$, the POLS regret bound \cref{eq:pols_regret} reduces to the classical OLS regret bound \cref{eq:ols_regret}: for any comparator $M\in\mathbb{R}^{p\times d}$,
\begin{align*}
  \mathrm{Reg}_T(M)
  &\le \lambda\|M\|_F^2 + \sum_{t=1}^T \|y_t - M_t^{\mathrm{ols}} z_t\|^2\, z_t^\tr G_t^{-1} z_t,\\
  &\le
    {\lambda}\|M\|_F^2 + \max_{1\le t\le T}\|y_t-M_t^{\mathrm{ols}} z_t\|^2 d \log\left(\!1+\frac{\sum_{t=1}^T\|z_t\|^2}{\lambda d}\!\right)
\end{align*}
\end{corollary}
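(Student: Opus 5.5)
The plan is to obtain the statement as a direct specialization of the general POLS regret bound. \cref{prop:ols-as-pols} shows that the self-consistent hint $\tilde y_t = M_t^{\mathrm{pols}} z_t$ makes the POLS iterate coincide with $M_t^{\mathrm{ols}}$. We can then substitute this hint into the intermediate bound \cref{eq:matrix-intermediate} established in the proof of \cref{thm:pols-reg}.

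The one subtlety is that the hint is now defined implicitly through the iterate itself. We should therefore check that the regret analysis still applies. The scalar bound \cref{lem:scalar-pols} is pathwise: it holds for any sequence $(z_t,y_t,\tilde y_t)_{t=1}^T$. Its proof only requires that $\tilde y_t$ is a fixed number when the FTRL identity \cref{eq:pols-is-ftrl} is invoked at round $t$.

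Here $\tilde y_t := M_t^{\mathrm{ols}} z_t$ is a well-defined quantity that depends only on $(z_s,y_s)_{s<t}$ and $z_t$. So I would first fix this numerical sequence of hints. Running POLS with these fixed hints gives, by the closed form \cref{eq:pols-closed-form}, the iterate $(B_{t-1}+\tilde y_t z_t^\tr)G_t^{-1}$. Using $B_{t-1}=M_t^{\mathrm{ols}}G_{t-1}$ and $G_{t-1}+z_tz_t^\tr=G_t$, this simplifies to $M_t^{\mathrm{ols}}$. Hence the fixed-hint POLS iterates equal the OLS iterates, and the self-consistency is genuinely satisfied. Note that this direction avoids needing uniqueness of the fixed point, since the map is a contraction-free linear identity.

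Next, I apply \cref{eq:matrix-intermediate} with $\Delta_t = y_t - M_t^{\mathrm{ols}} z_t$. This yields the first inequality,
$\Reg_T(M)\le \lambda\|M\|_F^2+\sum_t \|y_t-M_t^{\mathrm{ols}}z_t\|^2 z_t^\tr G_t^{-1}z_t$.

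For the second inequality, I pull out $\max_t\|y_t-M_t^{\mathrm{ols}}z_t\|^2$, which is legitimate because $z_t^\tr G_t^{-1}z_t\ge0$. I then bound the remaining sum $\sum_t z_t^\tr G_t^{-1}z_t$ by \cref{lem:ell-potential}, giving $d\log(1+\sum_t\|z_t\|^2/(\lambda d))$.

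The only step needing care is the well-definedness and consistency argument in the second paragraph. The rest is substitution.
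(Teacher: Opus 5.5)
Your proposal is correct and follows the paper's proof. Both use the proposition to identify the POLS iterates with $M_t^{\mathrm{ols}}$, substitute $\Delta_t = y_t - M_t^{\mathrm{ols}} z_t$ into the pre-maximum bound from the proof of the POLS regret theorem, then pull out the maximum and apply the elliptical potential lemma. Your explicit check that the fixed hint $\tilde y_t = M_t^{\mathrm{ols}} z_t$ reproduces $M_t^{\mathrm{ols}}$ through the closed form is the converse of the paper's fixed-point computation; since the scalar lemma holds for arbitrary hint sequences, it is a harmless safeguard rather than a required step.
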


\begin{proof}
By Proposition~\ref{prop:ols-as-pols}, the self-consistent hint gives $M_t^{\mathrm{pols}} = M_t^{\mathrm{ols}}$, so the hint residual becomes the OLS prediction error: $\Delta_t = y_t - \tilde y_t = y_t - M_t^{\mathrm{ols}} z_t$. 
Substituting into \cref{eq:pols_regret} and using \cref{lem:ell-potential} yields the stated bound \cref{eq:ols_regret}, which matches \cite[Thm~11.7]{cesa2006prediction} and \cite[Thm~1]{ghai2020no}.
\end{proof}

\subsection{Proof of \cref{lem:luen-trunc} (Luenberger Truncation Error)}
\label{app:trunc}

\begin{proof}
Unrolling \cref{eq:luen-dynamics} from $\hat x_0^{\mathrm{LB}}=0$ gives 
\[
\hat{y}_t^{\mathrm{LB}}(L)
= \sum_{k=0}^{t-1} CA_L^k L\,y_{t-k-1}.
\]
The truncated predictor $\hat{y}_t^{\mathrm{TR}}(L)$ retains only the first $H$ terms, so their difference is
\begin{equation}\label{eq:trunc-tail}
    \hat{y}_t^{\mathrm{LB}}(L) - \hat{y}_t^{\mathrm{TR}}(L)
    = \sum_{k=H}^{t-1} CA_L^k L\, y_{t-k-1},
\end{equation}
which vanishes for $t \le H$ so the bound holds trivially.
We focus on the case $t\ge H+1$.

\textbf{Pointwise bound.}
By strong stability of $A_L$ and \cref{lem:poly-growth}, we have
\[
    \|A_L^k\| \le \kappa(1-\gamma)^k, \quad \|L\| \le \kappa, \quad \|y_s\| \le C_y(1+s)^r.
\]
Therefore, with \cref{eq:trunc-tail},
\begin{align}
    \|\hat{y}_t^{\mathrm{LB}}(L) - \hat{y}_t^{\mathrm{TR}}(L)\|
    &\le \|C\|\kappa^2 \sum_{k=H}^{t-1}
         (1-\gamma)^k C_y(1+t-k-1)^r
         \notag \\
    &\le \|C\|\kappa^2 C_y(1+t)^r
         \sum_{k=H}^{\infty} (1-\gamma)^k
         \notag \\
   & = \frac{\|C\|\kappa^2 C_y}{\gamma}
      (1+t)^r(1-\gamma)^H.
      \label{eq:ptwise-detail}
\end{align}
The error grows at most polynomially in $t$ (with degree $r$), while the dependence on $H$ is geometric.

\textbf{Cumulative bounds.}
We choose $H$ so that the geometric factor $(1-\gamma)^H$ absorbs both the polynomial growth $(1+t)^r$ and the summation over $t \in [T]$.
Using $\log(1-\gamma) \le -\gamma$ for $\gamma\in(0,1]$, the choice $H \ge \lceil \gamma^{-1}(r+1)\log(1+T) \rceil$ ensures
\begin{equation}\label{eq:H-choice}
  (1-\gamma)^H \le e^{-\gamma H} \le (1+T)^{-(r+1)}.
\end{equation}


Summing \cref{eq:ptwise-detail} and using \cref{eq:H-choice} gives
\begin{align*}
    \sum_{t=1}^T \|\hat{y}_t^{\mathrm{LB}}(L) - \hat{y}_t^{\mathrm{TR}}(L)\|
    &\le \frac{\|C\| \kappa^2 C_y}{\gamma}\,
         (1-\gamma)^H\sum_{t=1}^T(1+t)^r \\
    &\le \frac{\|C\| \kappa^2 C_y}{\gamma}\,
         (1+T)^{-(r+1)}\cdot T(1+T)^r \\
    &= \frac{\|C\| \kappa^2 C_y}{\gamma}\,
       \frac{T}{1+T}
    \le \frac{\|C\| \kappa^2 C_y}{\gamma}
    =: C_{\mathrm{trun}}.
    \qedhere
\end{align*}

Summing the square of \cref{eq:ptwise-detail} and using \cref{eq:H-choice} yields
\begin{align*}
    \sum_{t=1}^T \|\hat{y}_t^{\mathrm{LB}}(L) - \hat{y}_t^{\mathrm{TR}}(L)\|^2
    &\le \frac{\|C\|^2 \kappa^4 C_y^2}{\gamma^2}\,
         (1-\gamma)^{2H}\sum_{t=1}^T(1+t)^{2r} \\
    &\le \frac{\|C\|^2 \kappa^4 C_y^2}{\gamma^2}\,
         (1+T)^{-2(r+1)}\cdot T(1+T)^{2r} \\
    &= \frac{\|C\|^2 \kappa^4 C_y^2}{\gamma^2}\,
       \frac{T}{(1+T)^2}
    \le \frac{\|C\|^2 \kappa^4 C_y^2}{\gamma^2}
    = C_{\mathrm{trun}}^2.
\end{align*}
\end{proof}

\subsection{Proof of \cref{lem:luen-pred} (Luenberger Prediction Error)}
\label{app:lb-error}

\begin{proof}
Define the state estimation error
$e_t^{\mathrm{LB}}(L) := x_t - \hat{x}_t^{\mathrm{LB}}(L)$.
From the system dynamics \cref{eq:po-lds} and the Luenberger update
\cref{eq:luen-dynamics},
\begin{align*}
    e_{t+1}^{\mathrm{LB}}
    &= Ax_t + w_t
       - \bigl(A_L\hat{x}_t^{\mathrm{LB}} + Ly_t\bigr) \\
    &= A_L(x_t - \hat{x}_t^{\mathrm{LB}})
       + w_t - L(y_t - Cx_t) \\
    &= A_L\,e_t^{\mathrm{LB}} + w_t - Lv_t.
\end{align*}
With $\hat{x}_0^{\mathrm{LB}} = 0$ and
$x_0=0$, we have $e_0^{\mathrm{LB}} = 0$.
Unrolling $e_t$ gives
\[
    e_t^{\mathrm{LB}} = \sum_{s=0}^{t-1} A_L^s(w_{t-1-s} - Lv_{t-1-s}).
\]
By strong stability ($\|A_L^k\| \le \kappa(1-\gamma)^k$, $\|L\| \le \kappa$) and the bounded disturbance assumption,
\begin{align*}
    \|e_t^{\mathrm{LB}}\|
    &\le \kappa(C_w + \kappa C_v)
         \sum_{s=0}^{t-1}(1-\gamma)^s
        = \frac{\kappa(C_w + \kappa C_v)}{\gamma}.
\end{align*}
The Luenberger prediction error is $y_t - \hat{y}_t^{\mathrm{LB}}(L) = Ce_t^{\mathrm{LB}}(L) + v_t$, so
\begin{align*}
    \|y_t - \hat{y}_t^{\mathrm{LB}}(L)\|
    &\le \|C\|\,\|e_t^{\mathrm{LB}}\| + C_v \\
    &\le \frac{\|C\|\kappa(C_w + \kappa C_v)}{\gamma}
       + C_v
    =: C_{\mathrm{pred}}.
\end{align*}
This bound is uniform in $t$ and independent of the disturbance realization.
\end{proof}

\subsection{Proof of \cref{thm:total-regret}
(Residual-Scaled Luenberger Regret of FM-POLS)}
\label{app:total-regret}

\begin{proof}
Fix any $L \in \mathcal{L}_{\kappa,\gamma}$. We decompose the Luenberger regret \cref{eq:luen-reg} by inserting the truncated predictor $\hat{y}_t^{\mathrm{TR}}(L) = M_L z_t$:
\[
\small
    \Reg_T^{\mathrm{LB}}(L)
    = \underbrace{
      \sum_{t=1}^T \!\left(\|\hat{y}_t - y_t\|^2
      - \|\hat{y}_t^{\mathrm{TR}}(L) - y_t\|^2\right)
      }_{= \Reg_T(M_L)}
    +\; \underbrace{
      \sum_{t=1}^T \!\left(\|\hat{y}_t^{\mathrm{TR}}(L)
      - y_t\|^2
      - \|\hat{y}_t^{\mathrm{LB}}(L) - y_t\|^2\right)
      }_{=:\;E_{\mathrm{trun}}}.
\]

\textbf{Bounding regression regret.}
Since $\hat{y}_t^{\mathrm{TR}}(L) = M_L z_t$ and FM-POLS
predicts $\hat{y}_t = M_t^{\mathrm{pols}} z_t$ with
features $z_t \in \mathbb{R}^{pH}$,
\cref{thm:pols-reg} gives
\begin{equation}\label{eq:Rlearn}
    \Reg_T(M_L)
    \le \lambda\|M_L\|_F^2
    + \Delta_{\max}^2\, pH\,
      \log\!\left(1
      + \frac{\sum_{t=1}^T\|z_t\|^2}{\lambda\,pH}
      \right).
\end{equation}

We bound the log argument. By \cref{lem:poly-growth},
$\|y_s\| \le C_y(1+s)^r$, so
$\|z_t\|^2 = \sum_{k=1}^H \|y_{t-k}\|^2
\le H C_y^2 (1+t)^{2r}$.
Summing over $t$ gives
\begin{equation}\label{eq:log-arg}
    \frac{\sum_{t=1}^T \|z_t\|^2}{\lambda\,pH}
    \le \frac{C_y^2 \sum_{t=1}^T (1+t)^{2r}}{\lambda\,p}
    \le \frac{C_y^2 (1+T)^{2r+1}}{\lambda\,p}.
\end{equation}

We bound the comparator norm. By strong stability
($\|A_L^k\| \le \kappa(1-\gamma)^k \le \kappa$,
$\|L\| \le \kappa$),
\begin{equation}\label{eq:ML-norm}
\begin{aligned}
    \|M_L\|_F^2
    &= \sum_{k=0}^{H-1} \|C A_L^k L\|_F^2\\
    &\le p \sum_{k=0}^{H-1} \|C A_L^k L\|^2 \\
    &\le p \|C\|^2 \kappa^4 \sum_{k=0}^{H-1} (1-\gamma)^{2k}\le \frac{\|C\|^2 \kappa^4 p}{\gamma}.
\end{aligned}
\end{equation}

\textbf{Bounding truncation deviation.}
Write $\hat{y}_t^{\mathrm{TR}}(L) - y_t
= (\hat{y}_t^{\mathrm{LB}}(L) - y_t)
- (\hat{y}_t^{\mathrm{LB}}(L) - \hat{y}_t^{\mathrm{TR}}(L))$.
Denote for brevity
\[
    a_t := \hat{y}_t^{\mathrm{TR}}(L) - y_t, \qquad
    p_t := \hat{y}_t^{\mathrm{LB}}(L) - y_t, \qquad
    \tau_t := \hat{y}_t^{\mathrm{LB}}(L)
              - \hat{y}_t^{\mathrm{TR}}(L),
\]
so that $a_t = p_t - \tau_t$. The truncation deviation expands as
\begin{align}
    E_{\mathrm{trun}}
    = \sum_{t=1}^T
       \bigl(\|p_t - \tau_t\|^2 - \|p_t\|^2\bigr)
    = \sum_{t=1}^T
       \bigl(\|\tau_t\|^2
       - 2\langle \tau_t,\, p_t \rangle\bigr).
       \label{eq:Rtrun-expand}
\end{align}

We bound each term separately.
By \cref{lem:luen-trunc} with the stated choice of $H$,
\begin{equation}\label{eq:term1}
    \sum_{t=1}^T \|\tau_t\|^2
    \le C_{\mathrm{trun}}^2.
\end{equation}
By \cref{lem:luen-pred}, $\|p_t\| \le C_{\mathrm{pred}}$
uniformly for all $t \ge 1$. Therefore,
\begin{equation}\label{eq:term2}
    \left|\sum_{t=1}^T
    \langle \tau_t,\, p_t \rangle\right|
    \le \sum_{t=1}^T \|\tau_t\|\,\|p_t\|
    \le C_{\mathrm{pred}} \sum_{t=1}^T \|\tau_t\|
    \le C_{\mathrm{pred}}\, C_{\mathrm{trun}},
\end{equation}
where the last inequality uses \cref{lem:luen-trunc}.
Substituting \cref{eq:term1,eq:term2} into
\cref{eq:Rtrun-expand} yields
\begin{equation}\label{eq:Rtrun-final}
    E_{\mathrm{trun}}
    \le C_{\mathrm{trun}}^2
    + 2C_{\mathrm{pred}}\, C_{\mathrm{trun}}.
\end{equation}

\textbf{Combining.}
Adding \cref{eq:Rlearn} and \cref{eq:Rtrun-final}, and
substituting \cref{eq:log-arg,eq:ML-norm} establishes \cref{thm:total-regret}
\begin{align*}
    \Reg_T^{\mathrm{LB}}(L)
    &\le \frac{\lambda\|C\|^2 \kappa^4 p}{\gamma}
      + \Delta_{\max}^2\, pH\,
        \log\!\left(1 + \frac{C_y^2(1+T)^{2r+1}}
        {\lambda\,p}\right)
      + C_{\mathrm{trun}}^2
      + 2C_{\mathrm{pred}}\, C_{\mathrm{trun}}.
\end{align*} 
\end{proof}

\section{Omitted Details for \cref{sec:hint_design}}
\label{app:section4}

\subsection{Proof of \cref{lem:residual-decomp} (Residual Decomposition)}
\label{app:residual-decomp}

\begin{proof}
From $x_{t+1} = A x_t + w_t$ and $y_t = C x_t + v_t$,
\[
    y_t = C A^t x_0 + C\sum_{k=0}^{t-1} A^k\,w_{t-1-k} + v_t.
\]
Substituting into $\Delta_t = \sum_{i=0}^m c_i y_{t-i}$ (with $x_0=0$) results in
\begin{equation}\label{eq:delta-substitute}
    \Delta_t
    = \sum_{i=0}^m c_i v_{t-i}
    + C\sum_{i=0}^m c_i \sum_{k=0}^{t-i-1} A^k\,w_{t-i-1-k}.
\end{equation}


For the second term, re-indexing the double sum by $s = k + i$
(the ``age'' of the disturbance) gives
\[
    C\sum_{i=0}^m c_i \sum_{k=0}^{t-i-1} A^k w_{t-i-1-k}
    = C\sum_{s=0}^{t-1}\left(\sum_{i=0}^{\min(s,m)} c_i A^{s-i}\right) w_{t-1-s}.
\]
Split the outer sum at $s=m$:
\begin{itemize}
    \item For $0\le s \le m-1$, the inner coefficient is $\sum_{i=0}^{s} c_i A^{s-i}$, giving the \textit{short-memory} term 
    \[
    C\sum_{s=0}^{m-1}\Bigl(\sum_{i=0}^s c_i A^{s-i}\Bigr) w_{t-1-s}.
    \]
    \item For $m\le s \le t-1$, the inner coefficient is $\sum_{i=0}^m c_i A^{s-i} = q(A)A^{s-m}$, giving the \textit{infinite-memory} term 
    \[
        C\sum_{s=m}^{t-1} q(A) A^{s-m} w_{t-1-s}
        = C\sum_{s=0}^{t-m-1} q(A) A^s w_{t-m-1-s}.
    \]
\end{itemize}
\end{proof}

\subsection{Proof of \cref{lem:2lag-operator} (Filtered Jordan Bound: Diagonalizable Case)}
\label{app:filtered-jordan-proof}

\begin{proof}
Diagonalizing $A = SJS^{-1}$ where $J$ is diagonal ($r = 1$), we have 
\[
\|(A^2-I)A^s\| = \|S(J^2-I)J^sS^{-1}\| \le \kappa_A \|(J^2-I)J^s\|.
\]
Because $(J^2-I)J^s$ is diagonal, its operator norm is simply the maximum absolute value of its diagonal entries: $\|(J^2-I)J^s\|\le \max_{i} |(\lambda_i^2-1)\lambda_i^s|$.
Summing this over time, we bound the maximum by the sum over all $n$ eigenvalues
\[
    \sum_{s=0}^{\infty} \max_{1 \le i \le n} \left|(\lambda_i^2-1)\lambda_i^s\right|
    \le \sum_{s=0}^{\infty} \sum_{i=1}^n \left|(\lambda_i^2-1)\lambda_i^s\right|
    = \sum_{i=1}^n \sum_{s=0}^{\infty} \left|(\lambda_i^2-1)\lambda_i^s\right|.
\]
For each eigenvalue $\lambda_i \in [-1, 1]$, we evaluate the inner infinite sum by considering two cases
\begin{itemize}
    \item If $|\lambda_i| < 1$, the geometric series converges, yielding
    \[
        \sum_{s=0}^{\infty} \left|(1-\lambda_i^2)\lambda_i^s\right| = (1-\lambda_i^2) \sum_{s=0}^{\infty} |\lambda_i|^s = \frac{1-\lambda_i^2}{1-|\lambda_i|} = 1 + |\lambda_i| < 2.
    \]
    \item If $|\lambda_i| = 1$, the prefactor $(1-\lambda_i^2)$ is exactly zero, so every summand vanishes and the infinite sum equals $0$.
\end{itemize}
In either case, the sum is bounded by $2$.
Because this bound holds for each eigenvalue, summing over all $n$ eigenvalues and multiplying by $\kappa_A$ yields the final bound.
\end{proof}

\subsection{Proof of \cref{lem:high-order-operator} (Gap-Free Filtered Jordan Bound)}
\label{app:high-order-proof}

\begin{proof}
The proof has two parts.
Marginal Jordan blocks are annihilated exactly by $(J_\lambda^2-I)^r$, while stable blocks contribute a uniform constant depending only on $r$.

\textbf{Reduce to individual blocks.}
With the Jordan decomposition $A=SJS^{-1}$ and $\kappa_A\!:=\!\|S\|\|S^{-1}\|$,
\[
    \|(A^2-I)^rA^s\|= \|S(J^2-I)^rJ^sS^{-1}\| 
    \le
    \kappa_A\|(J^2-I)^rJ^s\|.
\]

Since $J$ is block diagonal (with Jordan blocks $J_{\lambda_{i}}$), $\|(J^2 - I)^r J^s\| = \max_i \|(J_{\lambda_i}^2 - I)^r J_{\lambda_i}^s\|$. 
Bounding the block maximum by the sum and exchanging the order of summation gives
\begin{equation}
\label{eq:filtered-jordan-bound}
    \sum_{s=0}^{\infty}\|(A^2-I)^rA^s\|
    \le
    \kappa_A
    \sum_i
    \sum_{s=0}^{\infty}
    \|(J_{\lambda_i}^2-I)^rJ_{\lambda_i}^s\|.
\end{equation}

Since the number of Jordan blocks is at most $n$, it suffices to prove a uniform bound for each block: $\sum_s \|(J_\lambda^2 - I)^r J_\lambda^s\| \le K_r$ for each Jordan block $J_\lambda=\lambda I+N$ where $\lambda\in[-1,1]$.
Since we assume each block has size at most $r$, the nilpotent part satisfies $N^r=0$.

\textbf{Marginal blocks vanish.}
For marginal blocks with $\lambda \in \{-1, +1\}$, using $\lambda^2 = 1$,
\[
J_\lambda^2 - I = \lambda^2 I + 2\lambda N + N^2 - I = N(2\lambda I + N).
\]
Since $N$ and $2\lambda I + N$ commute and $N^r=0$,  
$
(J_\lambda^2 - I)^r = N^r(2\lambda I + N)^r = 0,
$
Therefore 
\[
(J_\lambda^2 - I)^r J_\lambda^s = 0
\]
for all $s$, so marginal blocks contribute zero to \cref{eq:filtered-jordan-bound}.

\textbf{Stable blocks.}
Now suppose $|\lambda| < 1$. 
Define $f_s(z) := (z^2 - 1)^r z^s$, so that $f_s(J_\lambda) = (J_\lambda^2 - I)^r J_\lambda^s$.
Taylor's formula expanded at $\lambda$ gives the scalar polynomial identity
\[
    f_s(z) = \underbrace{\sum_{\ell=0}^{r-1} \frac{f_s^{(\ell)}(\lambda)}{\ell!}(z - \lambda)^\ell}_{\mathtt{degree~(r-1)~Taylor~polynomial}} + \underbrace{(z - \lambda)^r q_s(z)}_{\mathtt{remainder}}
\]
for some polynomial $q_s$.
Evaluating this identity at $z \!=\! J_\lambda$ (which is
valid since polynomial identities are preserved under matrix evaluation) using $J_\lambda-\lambda I=N$ and $N^r = 0$, the remainder vanishes
\[
f_s(J_\lambda) = \sum_{\ell=0}^{r-1} \frac{f_s^{(\ell)}(\lambda)}{\ell!} N^\ell.
\]
Since $N$ is the nilpotent shift, $\|N^\ell\|\le1$.
Therefore $\|f_s(J_\lambda)\| \le \sum_{\ell=0}^{r-1} \tfrac{|f_s^{(\ell)}(\lambda)|}{\ell!}$.
Our goal is to bound the quantity
\[
    \Phi(\lambda) := \sum_{i=0}^\infty \|(J_\lambda^2 - I)^r J_\lambda^s\| = \sum_{s=0}^\infty \|f_s(J_\lambda)\| \le \sum_{\ell=0}^{r-1} \frac{\Sigma_\ell(\lambda)}{\ell!},
    \qquad
    \Sigma_\ell(\lambda):=\sum_{s=0}^\infty |f_s^{(\ell)}(\lambda)|
\]
It remains to bound $\Sigma_\ell(\lambda)$ uniformly over $\lambda\in(-1,1)$. 
Write $f_s=g\cdot h_s$, where $g(z):=(z^2-1)^r$ is a fixed polynomial
independent of $s$ and $h_s(z):=z^s$ carries the time index.
By Leibniz's rule (product rule for the $\ell$-th derivative),
\[
    f_s^{(\ell)}(\lambda)
    =
    \sum_{a=0}^{\ell}
    \binom{\ell}{a}
    g^{(a)}(\lambda)h_s^{(\ell-a)}(\lambda).
\]
For $q\ge0$,
\[
    h_s^{(q)}(\lambda)
    =
    \frac{s!}{(s-q)!}\lambda^{s-q}
\]
for $s\ge q$, and is zero otherwise.

\textit{Diverging near marginality.}
Summing $|h_s^{(q)}(\lambda)|$ over $s$ by the generalized geometric
series $\sum_{u \ge 0} \binom{u + q}{q} |\lambda|^u = (1 - |\lambda|)^{-(q+1)}$, we get
\begin{equation}
    \label{eq:bound-h}
    \sum_{s=0}^{\infty}|h_s^{(q)}(\lambda)|
    =
    \frac{q!}{(1-|\lambda|)^{q+1}}, \qquad \lambda\in(-1,1).
\end{equation}

This term blows up as $|\lambda|$ goes to $1$, with rate controlled by the exponent $q+1 = \ell - a + 1$.
We next show that $|g^{(a)}(\lambda)|$ is small enough to cancel this blowup for every admissible $(\ell, a)$.

\textit{Vanishing near marginality.}
Let $\delta := 1 - |\lambda|$.
Since $g(z) = (z-1)^r(z+1)^r$, the polynomial $g$ has a zero of order $r$ at both $1$ and $-1$.
Consequently, for each \(0\le a\le r-1\), there exists
\(M_{r,a}<\infty\), depending only on \(r\) and \(a\), such that
\begin{equation}
    \label{eq:bound-g}
    |g^{(a)}(\lambda)|
    \le
    M_{r,a}\delta^{r-a},
    \qquad \lambda\in(-1,1).
\end{equation}
For example, differentiating \((z-1)^r(z+1)^r\) by Leibniz's rule shows that
\[
    g^{(a)}(z) = \sum_{b=0}^a \binom{a}{b} \frac{r!}{(r-b)!}(z-1)^{r-b} \frac{r!}{(r-a+b)!}(z+1)^{r-a+b}.
\]
and thus one may take
\begin{equation}
    \label{eq:M-const}
    M_{r,a}
    =
    2^r
    \sum_{b=0}^{a}
    \binom{a}{b}
    \frac{r!}{(r-b)!}
    \frac{r!}{(r-a+b)!}.
\end{equation}
Indeed, a bound for $|g^{(a)}(\lambda)|$ contains factors $|\lambda-1|^{r-b}$, $|\lambda+1|^{r-a+b}$.
For $\lambda\in(-1,1)$, at least one of $|\lambda - 1|$, $|\lambda + 1|$ equals $\delta$ and the other lies in $[1, 2)$ (specifically, $|\lambda - 1| = \delta$ when $\lambda \ge 0$, and $|\lambda + 1| = \delta$ when $\lambda \le 0$), while the other is at most \(2\).
Therefore, one of the factors $|\lambda - 1|^{r-b}$, $|\lambda + 1|^{r-a+b}$ is a power of $\delta$ with exponent at least $r-a$ (since $b \le a$ and $r-a+b \ge r-a$), and the other is bounded by $2^r$ (since both exponents are at most $r$). Using $\delta \in (0,1]$ to absorb any extra power of $\delta$ beyond $\delta^{r-a}$ and summing over $b$ then gives the displayed bound in \cref{eq:M-const}.

\textbf{Cancellation and final bound.}
Combining \cref{eq:bound-h,eq:bound-g}, for every \(0\le\ell\le r-1\),
\[
\begin{aligned}
    \Sigma_\ell(\lambda)
    &\le
    \sum_{a=0}^{\ell}
    \binom{\ell}{a}
    |g^{(a)}(\lambda)|\cdot
    \sum_{s=0}^{\infty}
    |h_s^{(\ell-a)}(\lambda)|  \\
    &\le
    \sum_{a=0}^{\ell}
    \binom{\ell}{a}
    M_{r,a}\delta^{r-a}
    \frac{(\ell-a)!}{\delta^{\ell-a+1}}  \\
    &=
    \delta^{r-\ell-1}
    \sum_{a=0}^{\ell}
    \binom{\ell}{a}
    M_{r,a}(\ell-a)! .
\end{aligned}
\]
Here the dependence on \(a\) cancels in the exponent of \(\delta\). This is where the prefactor $(z^2 - 1)^r$ does its work.
Since \(\ell\le r-1\), we have \(\delta^{r-\ell-1}\le1\) and thus
\[
    \Sigma_\ell(\lambda)
    \le
    K_{r,\ell},
    \qquad
    K_{r,\ell}
    :=
    \sum_{a=0}^{\ell}
    \binom{\ell}{a}
    M_{r,a}(\ell-a)!,
\]
where \(K_{r,\ell}\) depends only on \(r\) and \(\ell\), independent of \(\lambda\).
Therefore,
\[
    \Phi(\lambda)
    \le
    \sum_{\ell=0}^{r-1}
    \frac{K_{r,\ell}}{\ell!}
    =:K_r
\]
uniformly over all stable blocks. Since marginal blocks contribute zero, the
same bound holds for every Jordan block. Aggregating over at most \(n\) blocks
in \cref{eq:filtered-jordan-bound} yields
\[
    \sum_{s=0}^{\infty}\|(A^2-I)^rA^s\|
    \le
    \kappa_A n K_r.
\]
This completes the proof.
\end{proof}

\subsection{Proof of \cref{thm:jordan-hint} (Bounded Residual under High-Order Differencing)}
\label{app:high-order-thm-proof}
\begin{proof}
With $q_r(z) = (z^2 - 1)^r$ of degree $2r$, \cref{lem:residual-decomp} gives the residual decomposition
\[
\Delta_t = \underbrace{\sum_{k=0}^r (-1)^k \binom{r}{k} v_{t-2k}}_{\mathtt{measurement~noise~(a)}} + \underbrace{C \sum_{s=0}^{2r-1} \Bigl(\sum_{i=0}^s c_i A^{s-i}\Bigr) w_{t-1-s}}_{\mathtt{short~memory~(b)}} + \underbrace{C \sum_{s=0}^{t-2r-1} (A^2 - I)^r A^s w_{t-2r-1-s}}_{\mathtt{infinite~memory~(c)}},
\]
where $c_i$ are the coefficients of $q_r$ and $\|q_r\|_1 := \sum_i |c_i| = \sum_{k=0}^r \binom{r}{k} = 2^r$.
We bound each term.

\textit{Measurement noise.} The triangle inequality gives $\|\mathtt{(a)}\| \le \sum_{k=0}^r \binom{r}{k} C_v = 2^r C_v$.

\textit{Short memory.} Each of the $2r$ terms has coefficient $\sum_{i \le s} c_i A^{s-i}$ with $\sum_{i \le s} |c_i| \le \|q_r\|_1 = 2^r$. By \cref{lem:poly-growth}, $\|A^{s-i}\| \le \kappa_A (1 + s - i)^{r-1} \le \kappa_A (2r)^{r-1}$ for $s - i \le 2r - 1$. Hence
\begin{equation*}
    \|\mathtt{(b)}\| \le H_r \kappa_A \|C\| C_w, \quad H_r := 2r \cdot 2^r \cdot (2r)^{r-1} = (4r)^r.
\end{equation*}

\textit{Infinite memory.} By the triangle inequality and \cref{lem:high-order-operator},
\[
\|\mathtt{(c)}\| \le \|C\| C_w \sum_{s=0}^\infty \|(A^2 - I)^r A^s\| \le K_r\, n\, \kappa_A\, \|C\|\, C_w.
\]
Combining the three contributions completes the proof.
\end{proof}

\subsection{Explicit Bounds on the Constants $H_r$ and $K_r$}
\label{app:explicit-constants}

In the proofs of \cref{thm:jordan-hint}  (Appendix~\ref{app:high-order-thm-proof}) and \cref{lem:high-order-operator} (Appendix~\ref{app:high-order-proof}), we introduce a crude bound on $H_r$ and an implicit bound on $K_r$. Here we derive tighter or more explicit bounds for them.

\textbf{Bound on $H_r$.}
We sharpen the bound on the short-memory term by using the binomial form of $\|A^j\|$ instead of the cruder $(1+j)^{r-1}$.
In particular, from \cref{eq:Jordan-power-bound} in the proof of \cref{lem:poly-growth}, we have the following tighter bound
\[
    \|A^j\| \le \kappa_A \sum_{\ell=0}^{r-1}\binom{j}{\ell}.
\]
Applying this to the short-memory term, i.e., term $\mathtt{(b)}$ in Appendix~\ref{app:high-order-thm-proof}, we obtain
\[
\begin{aligned}
    \|\mathtt{(b)}\|
    &\le \|C\| C_w \sum_{s=0}^{2r-1}\Bigl\|\sum_{i=0}^s c_i A^{s-i}\Bigr\| \\
    &\le \kappa_A \|C\| C_w \sum_{s=0}^{2r-1}\sum_{i=0}^s |c_i| \sum_{\ell=0}^{r-1}\binom{s-i}{\ell} \\
    &\le \kappa_A \|C\| C_w \,\|q_r\|_1 \sum_{j=0}^{2r-1}\sum_{\ell=0}^{r-1}\binom{j}{\ell},
\end{aligned}
\]
where the last step substitutes $j = s - i$ and extends the range of $j$ to $[0, 2r-1]$. Using $\|q_r\|_1 = 2^r$ and the identity $\sum_{j=0}^{2r-1}\binom{j}{\ell} = \binom{2r}{\ell+1}$,
\[
    \|\mathtt{(b)}\|
    \le \kappa_A \|C\| C_w \cdot 2^r \sum_{\ell=0}^{r-1}\binom{2r}{\ell+1}
    = \kappa_A \|C\| C_w \cdot 2^r \sum_{m=1}^{r}\binom{2r}{m}.
\]
Since $\sum_{m=0}^{2r}\binom{2r}{m} = 4^r$, we have $\sum_{m=1}^r \binom{2r}{m} \le 4^r$, and therefore
\[
    H_r \le 2^r \cdot 4^r = 8^r.
\]
This refines the crude bound $H_r \le (4r)^r$ in Appendix~\ref{app:high-order-thm-proof}; both have base $8$ at $r=2$, but $8^r$ has a constant base while $(4r)^r$ grows superexponentially.

\textbf{Bound on $K_r$.}
For \(0 \le b \le a \le r-1\), we have
\[
    \frac{r!}{(r-b)!}\frac{r!}{(r-a+b)!}
    \le r^b \cdot r^{a-b}
    = r^a.
\]
Substituting into $M_{r,a}$ in \cref{eq:M-const} gives
\[
    M_{r,a}
    \le 2^r r^a \sum_{b=0}^a \binom{a}{b}
    = 2^r r^a \cdot 2^a
    = 2^r (2r)^a.
\]
Consequently,
\[
\begin{aligned}
    K_{r,\ell}
    &:= \sum_{a=0}^\ell \binom{\ell}{a} M_{r,a}(\ell-a)! \le 2^r \sum_{a=0}^\ell \binom{\ell}{a}(2r)^a (\ell-a)! = 2^r \ell! \sum_{a=0}^\ell \frac{(2r)^a}{a!}
    \le 2^r e^{2r} \ell!.
\end{aligned}
\]
Therefore
\[
    K_r := \sum_{\ell=0}^{r-1}\frac{K_{r,\ell}}{\ell!}
    \le \sum_{\ell=0}^{r-1} 2^r e^{2r}
    = r(2e^2)^r.
\]

\textbf{Combined bound.}
The filtered residual bound \cref{eq:jordan-bound} can therefore be written explicitly as
\[
    \Delta_{\max}
    \le 2^r C_v + \kappa_A \|C\| C_w \bigl(8^r + r(2e^2)^r\, n\bigr).
\]
The dependence is polynomial in $n$ and $\kappa_A$, exponential in $r$, and independent of the spectral gap.

\section{Additional Experiments}
\label{app:additional-exp}

This appendix reports three further experiments that complement \cref{sec:experiments}: a stochastic sanity check against the Kalman comparator (Exp-A1), higher-order Jordan blocks with high-order differencing hints (Exp-A2), and the limitation of model-free hints under complex marginal eigenvalues (Exp-A3).

\textbf{Noise model and Kalman comparator.} 
Exp-A1 and Exp-A2 use i.i.d.\ Gaussian disturbances with
\begin{equation}
    \label{eq:noise-model-gaussian}
    w_t \sim \mathcal{N}(0, C_w^2 I),\quad v_t \sim \mathcal{N}(0, C_v^2 I),
\end{equation}
where the Kalman filter is the canonical optimal predictor \textit{in expectation}. 
Accordingly, we benchmark FM-POLS against the Kalman comparator in this case, following the convention of stochastic LDS prediction works~\cite{ghai2020no, tsiamis2022online}.
Note that this is a single fixed comparator across all time steps and noise realizations, in contrast to the running best-in-hindsight Luenberger comparator $L^\star(t) \in \argmin_{L} \sum_{s=1}^t \|y_s - \hat y_s^{\mathrm{LB}}(L)\|^2$ used in \cref{sec:experiments}.
Results are averaged over $50$ Monte Carlo trials.
On the other hand, Exp-A3 uses the nonstochastic noise model \cref{eq:noise-model} and benchmarks against $L^\star(t)$.

\begin{figure}[t]
    \centering
    \includegraphics[width=1\linewidth]{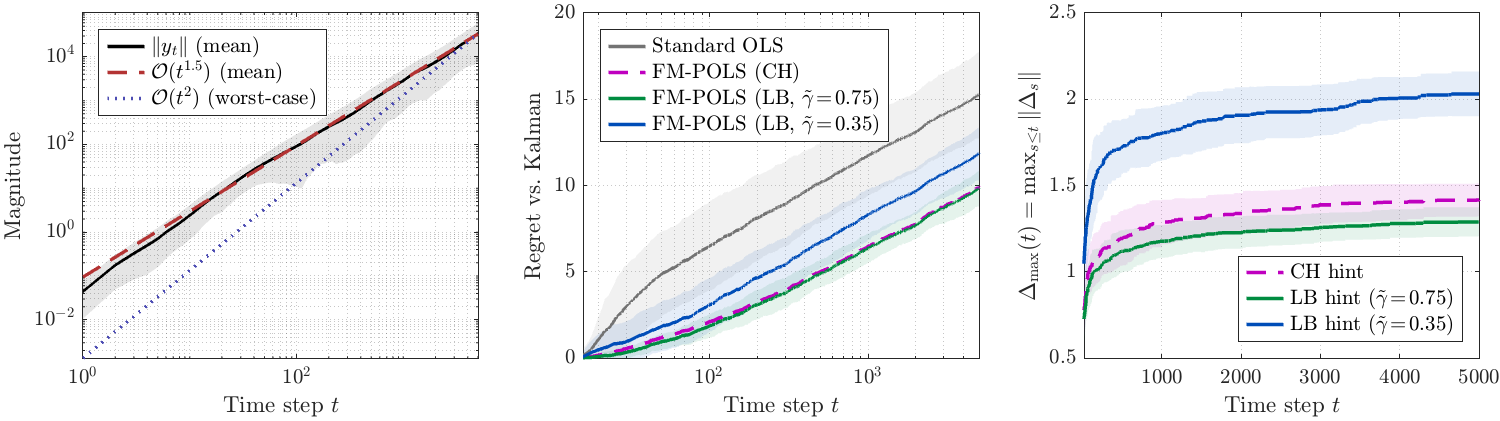}
    \caption{Exp-A1: Double integrator under Gaussian noise.
    Left: Polynomial signal growth for $y_t$ (log-log plot), with $\|y_t\|$ tracking $\mathcal{O}(t^{1.5})$ in expectation (slower than the worst-case $\mathcal{O}(t^2)$ shown for reference).
    Middle: Regret against the Kalman filter, growing logarithmically (linear growth in log-linear plot).
    Right: Bounded running max hint residual $\Delta_{\max}(t)$. Shaded regions show $\pm 1$ std across $50$ Monte Carlo trials.}
    \label{fig:exp-A1}
\end{figure}

\textbf{Exp-A1: Logarithmic Regret against Kalman Filter.}
We consider the double integrator
$A = \left[\begin{smallmatrix}1 & 1\\0 & 1\end{smallmatrix}\right]$ ($r=2$) ,
$C = [1,\; 0]$, under Gaussian noise with $C_w = 0.2$, $C_v = 0.05$ in \cref{eq:noise-model-gaussian}.
We compare standard OLS, FM-POLS with the Cayley--Hamilton (CH) hint $q_{\mathrm{ch}}(z) = (z-1)^2$, and FM-POLS with two Luenberger hints at $\tilde\gamma \in \{0.25, 0.65\}$, using $H = 15$, $\lambda = 1$, $T=5000$.
\cref{fig:exp-A1}~(middle) shows that all FM-POLS variants achieve logarithmic regret against the Kalman filter (linear on the log time axis). 
As in Exp1 (\cref{sec:experiments}), the aggressive Luenberger hint ($\tilde\gamma = 0.25$)
achieves lower regret than the conservative one ($\tilde\gamma = 0.65$), and the CH hint is competitive with the Luenberger hints; \cref{fig:exp-A1}~(right) shows that the aggressive gain achieves a smaller running max hint residual $\Delta_{\max}(t)$.

\begin{figure}[t]
    \centering
    \includegraphics[width=1\linewidth]{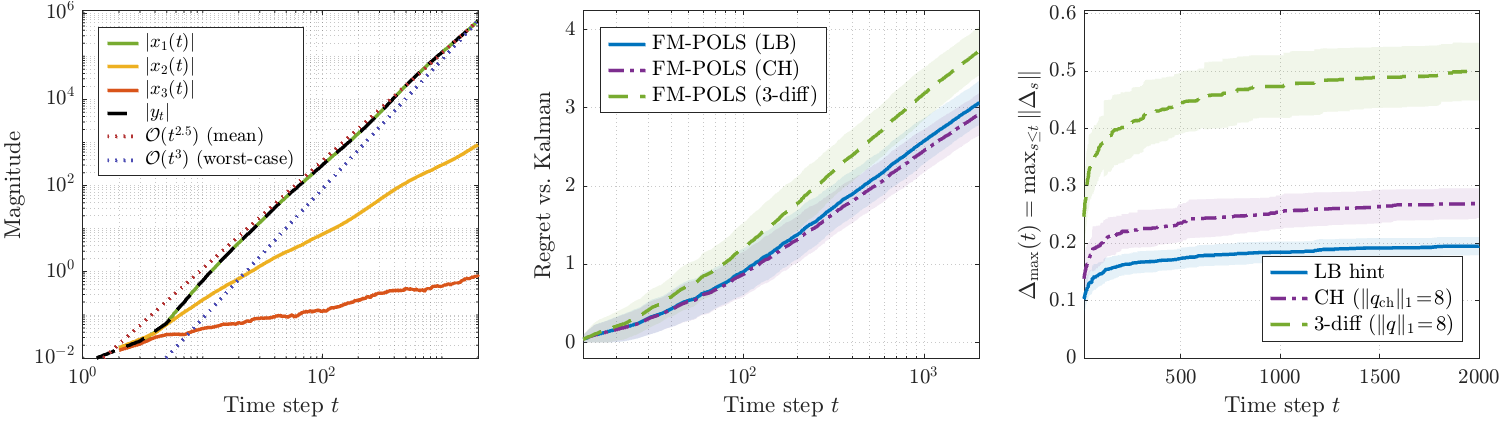}
    \caption{Exp-A2: Jordan block ($r=3$) under Gaussian noise. Left: Polynomial trajectory growth for $x_t$ and $y_t$ (log-log plot), with $\|y_t\|$ tracking $\mathcal{O}(t^{2.5})$ in expectation (slower than the worst-case $\mathcal{O}(t^3)$ shown for reference).
    Middle: Regret against the Kalman filter, growing logarithmically (linear growth in log-linear plot).
    Right: Bounded running max hint residual $\Delta_{\max}(t)$.
    Shaded regions show $\pm 1$ std across $50$ Monte Carlo trials.}
    \label{fig:exp-A2}
\end{figure}

\begin{figure}[t]
    \centering
    \includegraphics[width=1\linewidth]{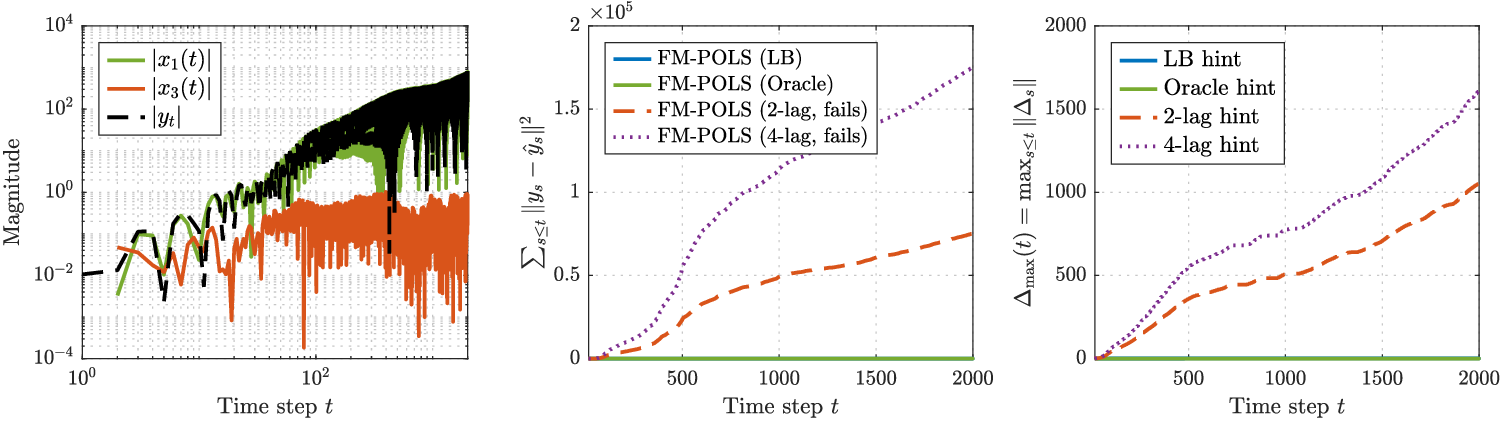}
    \caption{Exp-A3: Complex marginal eigenvalues. System with eigenvalues $e^{\pm i\theta}$ ($\theta = 0.7$) and Jordan size $2$ ($r = 2$), under the noise model in \cref{eq:noise-model}.
    Left: Polynomial signal growth, with $\|x_1(t
    )\|$ and $\|y_t\|$ at the worst-case rate $\mathcal{O}(t^2)$.
    Middle: Cumulative prediction loss. 
    Right: Running max hint residual $\Delta_{\max}(t)$.
    The model-free $2$-lag and $4$-lag hints fail with linearly growing $\Delta_{\max}$ and cumulative prediction loss, while the model-based Luenberger hint and oracle polynomial hint (which knows $\theta$ and $r$) achieve significantly smaller $\Delta_{\max}$ ones.
    }
    \label{fig:exp4}
\end{figure}

\textbf{Exp-A2: High-Order Jordan Blocks and Model-Free Differencing.}
\label{app:exp3}
We consider a $3\times 3$ Jordan block: $A \!=\! \left[\begin{smallmatrix}1&1&0\\0&1&1\\0&0&1\end{smallmatrix}\right]$ ($\lambda \!=\! 1$, $r \!=\! 3$), $C \!=\! [1,\;0,\;0]$, under Gaussian noise with $C_w = 0.02$, $C_v = 0.01$ in \cref{eq:noise-model-gaussian}.
The state and output grow as $\mathcal{O}(t^{2.5})$ in expectation, slower than the worst-case $\mathcal{O}(t^3)$ (\cref{fig:exp-A2}, left).
We compare three predictive hints with $H = 12$, $\lambda = 1$:
(i)~a model-based Luenberger hint; 
(ii)~the CH hint $q_{\mathrm{ch}}(z) = (z-1)^3$, of order $3$ with $\|q_{\mathrm{ch}}\|_1 = 8$, requiring knowledge of eigenvalues;
and (iii)~the model-free third-order differencing ($3$-diff) hint $q(z) = (z^2-1)^3$, of order $6$ with $\|q\|_1 = 8$, requiring only knowledge of $r$.
\cref{fig:exp-A2} (middle) shows that all three hints achieve logarithmic regret. 
The CH and $3$-diff hints have identical noise amplification factor of $8$, while they differ in memory requirement ($3$ vs.\ $6$ lags).
The running max residuals in \cref{fig:exp-A2} (right) stabilize for all three hints, with the Luenberger hint achieving the smallest $\Delta_{\max}$.
Notice that despite this, as shown in \cref{fig:exp-A2} (middle), the regret corresponding to the Luenberger hint is not uniformly smaller than the CH hint's, showing that $\Delta_{\max}$ alone does not determine regret.

\textbf{Exp-A3: Limits of Model-Free Hints for Complex Eigenvalues.}
\label{app:exp4}
We investigate the case of complex marginal eigenvalues, which remains an open problem for model-free hint design.
We use $C=[1, 0.3, 0, 0]$ and the $4 \times 4$ system $A = \left[\begin{smallmatrix}R_\theta & I_2\\ 0 & R_\theta\end{smallmatrix}\right]$ with $R_\theta$ the rotation matrix by $\theta = 0.7$, giving complex eigenvalues $e^{\pm i\theta}$ each with Jordan size $2$ ($r = 2$).
The state grows at the worst-case polynomial rate $\|x_t\| = \mathcal{O}(t^2)$, visible as a slope-$2$ line on the log-log plot in \cref{fig:exp4} (left).
We compare four predictive hints:
(i)~a Luenberger hint (model-based);
(ii)~the \emph{oracle} polynomial hint
$q(z) = (z^2 - 2\cos\theta \cdot z + 1)^2$, the minimal annihilating polynomial which requires spectral knowledge;
(iii)~the $2$-lag hint $q(z) = z^2 - 1$; and
(iv)~the $4$-lag hint $q(z) = z^4 - 1$. 
Both model-free polynomial hints (iii) and (iv) satisfy $q(e^{i\theta}) \ne 0$ for generic $\theta$ and are therefore expected to fail.
\cref{fig:exp4} (middle, right) confirms this: the $2$-lag and $4$-lag hints fail and produce linear growing residuals and cumulative prediction loss, while the model-based Luenberger and oracle hints achieve significantly smaller $\Delta_{\max}$ and loss.
This demonstrates that the universal differencing filters designed for real-spectrum systems fundamentally cannot handle complex marginal eigenvalues.

We summarize all our experiments in \cref{sec:experiments} and Appendix~\ref{app:additional-exp} in the table below.

\begin{table}[!htbp]
\centering
\small
\caption{Summary of experiments. $r$ = largest Jordan block size. Hint types: LB = Luenberger (model-based), CH = Cayley--Hamilton, $k$-diff = $k$-th order differencing.}
\label{tab:experiments}
\begin{tabular}{@{}lllllll@{}}
\toprule
\# & System & $r$ & Noise & Predictors (Hints) & Comparator & Metric \\
\midrule
Exp1   & Double integrator  & 2 & \cref{eq:noise-model} & OLS, CH, LB & $L^\star(t)$ & Regret, $\Delta_{\max}$ \\
Exp2   & Symmetric $\pm 1$            & 1 & \cref{eq:noise-model} & Kalman, $\mathcal{H}_\infty$, LB, 2-lag    & ---           & Cum.\ \& per-step loss \\
Exp3   & Double integrator & 2 & \cref{eq:noise-model} & LB (vary $H$, $\lambda$)                    & $L^\star(t)$ & Regret \\
Exp-A1 & Double integrator  & 2 & Gaussian  & OLS, CH, LB                                 & Kalman        & Regret, $\Delta_{\max}$ \\
Exp-A2 & Jordan $3 \times 3$          & 3 & Gaussian  & LB, CH, $3$-diff                              & Kalman        & Regret, $\Delta_{\max}$ \\
Exp-A3 & Rotation $e^{\pm i\theta}$   & 2 & \cref{eq:noise-model} & LB, oracle, 2-lag, 4-lag                    & ---   & Loss, $\Delta_{\max}$ \\
\bottomrule
\end{tabular}
\end{table}



\end{document}